\documentclass[article]{elsarticle} 

\usepackage{hyperref}

\journal{Journal of \LaTeX\ Templates}

\usepackage{amsmath,amssymb,epsfig}
\usepackage{graphicx}
\usepackage{makeidx} 
\usepackage{mathptmx}       
\usepackage{helvet}         
\usepackage{courier}        
\usepackage{type1cm} 
\usepackage{multicol}
\usepackage[bottom]{footmisc}
\usepackage[utf8]{inputenc}
\usepackage[english]{babel}
\usepackage{amsthm}
\usepackage{epstopdf,overpic}
\usepackage[utf8]{inputenc}
\usepackage[table]{xcolor}
\usepackage{microtype}






\newtheorem{theorem}{Theorem}

\newtheorem{lemma}{Lemma}
\newtheorem{Definition}{Definition}
\newtheorem{remark}{Remark}

\bibliographystyle{elsarticle-num}

\begin{document}

\begin{frontmatter}

\title{Linear Pentapods with a Simple Singularity Variety}

\author{Arvin Rasoulzadeh \fnref{Arvin}}
\author{Georg Nawratil \fnref{Georg}}
\address{Center for Geometry and Computational Design, Vienna University of Technology, Austria}
\fntext[Arvin]{E-mail address: rasoulzadeh@geometrie.tuwien.ac.at.}
\fntext[Georg]{E-mail address: nawratil@geometrie.tuwien.ac.at.}



\begin{abstract}
There exists a bijection between the configuration space of a linear pentapod and all points $(u,v,w,p_x,p_y,p_z)\in\mathbb{R}^{6}$ 
located on the singular quadric $\Gamma: u^2+v^2+w^2=1$, where $(u,v,w)$ determines the orientation of the linear platform and $(p_x,p_y,p_z)$ 
its position. 
Then the set of all singular robot configurations is obtained by intersecting $\Gamma$ with a cubic hypersurface $\Sigma$ in $\mathbb{R}^{6}$, 
which is only quadratic in the orientation variables and position variables, respectively.  
This article investigates the restrictions to be imposed on the design of this mechanism in order to obtain a reduction in degree. 
In detail we study the cases where $\Sigma$ is (1) linear in position variables, (2) linear in orientation variables and (3) quadratic in total. 
The resulting designs of linear pentapods have the advantage of considerably simplified computation of singularity-free spheres in the 
configuration space. 
Finally we propose three kinematically redundant designs of linear pentapods with a
simple singularity surface. 
\end{abstract}

\begin{keyword}
Linear pentapods\sep Singularity surface \sep Design \sep Singularity-free spheres
\end{keyword} 
  
\end{frontmatter}


\section{Introduction} 
\label{introduction}
A \emph{linear pentapod} (cf.\ Fig.\ \ref{fig:1} ) is defined as a five degree-of-freedom (DOF) \emph{line-body component} of a Gough-Stewart platform consisting of a linear motion platform $\ell$ with five identical spherical-prismatic-spherical ($\mathrm{S}\underline{\mathrm{P}}\mathrm{S}$) legs, where the prismatic joints are active and the rest are passive \cite{kong2001generation}. The pose of $\ell$ is uniquely determined by a position vector $\mathbf{p}\in \mathbb{R}^{3}$ and an orientation given by a unit-vector $\mathbf{i}\in \mathbb{R}^{3}$. 
The coordinate vector $\mathbf{m}_{j}$ of the platform anchor point $m_j$ of the $j$-th leg is defined by the equation 
$\mathbf{m}_{j}=\mathbf{p}+r_{j}\mathbf{i}$ and the base anchor points $M_j$ of the $j$-th leg has coordinates $\mathbf{M}_{j}=(x_{j}, y_{j}, z_{j})^{T}$ for $j=1,\ldots ,5$.

\begin{figure}[h!] 
\begin{center}   
  \begin{overpic}[height=65mm]{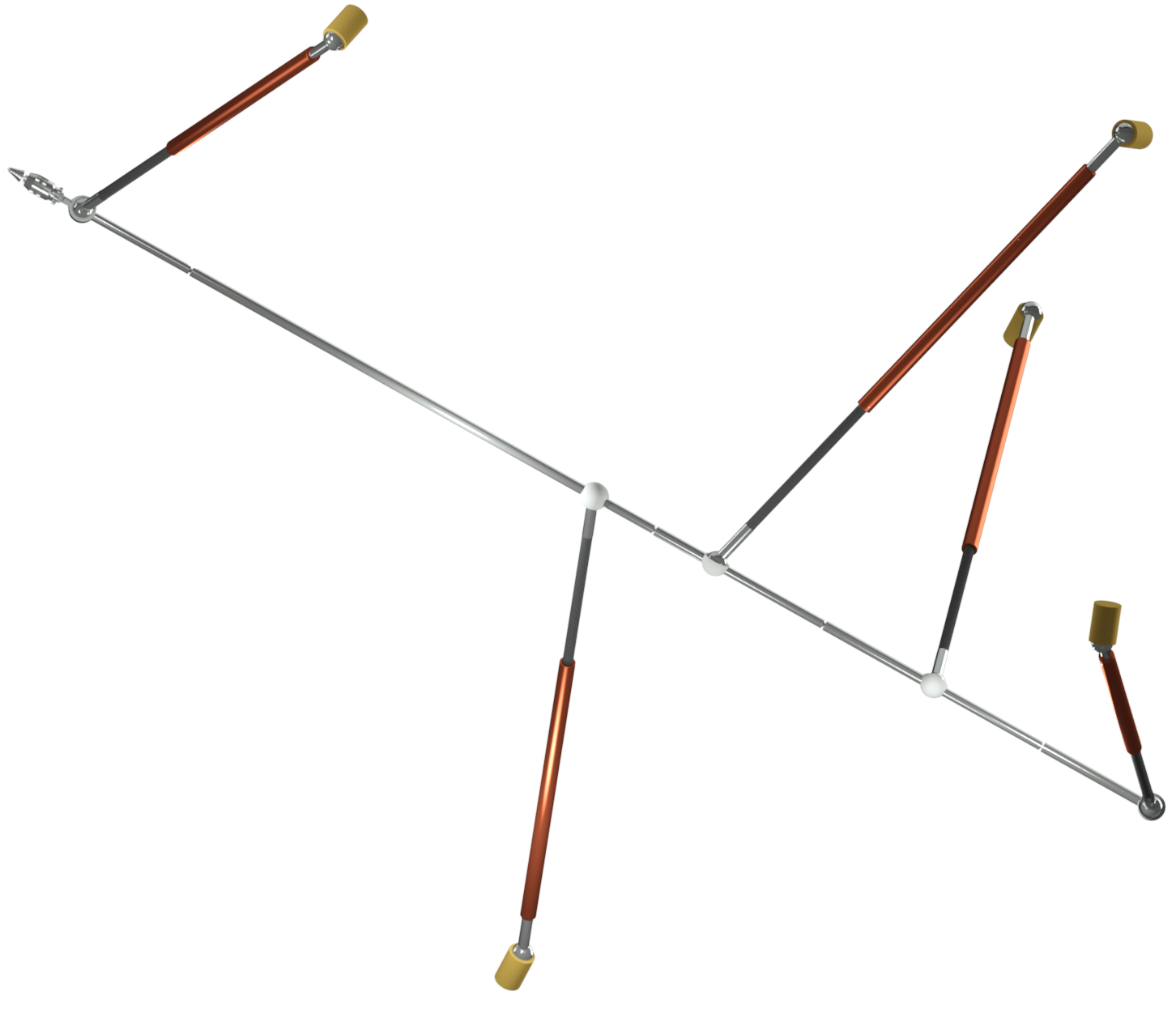}
	\put(34.5,54){$\ell$}
  \end{overpic} 
	\caption{Sketch of a linear pentapod.}
	\label{fig:1}
\end{center}
\end{figure}

It turns out that this kind of manipulator is an interesting alternative to serial robots handling axis-symmetric tools. Some fundamental industrial tasks such as 5-axis milling, laser engraving and water jet cutting are counted as its applications in industry \cite{borras2010singularity, weck2002parallel}.

Singularity analysis plays an important role in motion planning of parallel manipulators. Special configurations referred to as \emph{kinematic singularities} have always been central in mechanism theory and robotics. Beside being an intellectually appealing topic, the study of kinematic singularities provides an insight of major practical and theoretical importance for the design, control, and application of robot manipulators.

In such singularities, the \emph{kinetostatic} properties of a mechanism undergo sudden and dramatic changes. This motivates the enormous practical value of a careful study and thorough understanding of the phenomenon for the design and use of manipulators.

\subsection{Review}
\label{review}

The singularity analysis of linear pentapods has undergone an acceptable level of investigations over the past few years. In the 
following we give an overview of the obtained results:

From the line-geometric point of view (cf. \cite{merlet1989singular}) a linear pentapod is in a singular configuration if and only if 
the five carrier lines of the legs belong to a linear line congruence \cite{pottmann2009computational}; i.e. the Pl\"ucker coordinates of 
these lines are linearly dependent. From this latter characterization the following algebraic one can be obtained (cf.\ 
\cite{rasoulzadeh2018rational}):

There exists a bijection between the configuration space of a linear pentapod and all points 
$(u,v,w,p_x,p_y,p_z)\in\mathbb{R}^{6}$ located on the singular quadric $\Gamma: u^2+v^2+w^2=1$, where $(u,v,w)$ 
determines the orientation of the linear platform $\ell$ and $(p_x,p_y,p_z)$ its position. 
Then the set of all singular robot configurations is obtained as the intersection of $\Gamma$ with a cubic hypersurface $\Sigma$ of $\mathbb{R}^{6}$, 
which can be written as $\Sigma:\, det(\mathbf{S})=0$ with 
\begin{eqnarray}
\mathbf{S}=\left( \begin {array}{ccccccc} 1&u&v&w&p_{x}&p_{y}&p_{z}
\\ 0&p_{x}&p_{y}&p_{z}&0&0&0
\\ 0&0&0&0&u&v&w\\ r_{2}&x_{2}
&y_{2}&z_{2}&r_{2}x_{2}&r_{2}y_{2}&r_{2}z_{2}
\\ r_{3}&x_{3}&y_{3}&z_{3}&r_{3}x_{3}&r_
{3}y_{3}&r_{3}z_{3}\\ r_{4}&x_{4}&y_{4
}&z_{4}&r_{4}x_{4}&r_{4}y_{4}&r_{4}z_{4}
	\\ r_{5}&x_{5}&y_{5}&z_{5}&r_{5}x_{5}&r_
{5}y_{5}&r_{5}z_{5}\end {array} \right) 
\label{BorrasMatrix}
\end{eqnarray}
 (according to \cite{borras2010singularity})
under the assumption that $x_1=y_1=z_1=r_1=0$. Note that this assumption can always be made 
without loss of generality as the fixed/moving frame can always be chosen in a way that the first base/platform anchor point 
is its origin. 
Moreover, a rational parametrization of the singularity loci $\Gamma \cap \Sigma$ was given by the authors in \cite{rasoulzadeh2018rational}. 

A singular configuration can also be characterized as a multiple solution of the direct kinematics problem. In this context it should be mentioned that 
the forward kinematics of a linear pentapod was solved for the first time in \cite{zhang1992forward} under the assumption of a planar base, and 
in \cite{nawratil2015self} for the general case. If the direct kinematics problem has a continuous solution, then the linear pentapod has a 
so-called self-motion. All designs of linear pentapods possessing such motions are listed in \cite{nawratil2015self}. A more detailed 
study of the corresponding self-motions is performed in \cite{nawratil2018line}. Moreover the last two cited papers also contain extensive 
literature reviews on this topic.

A further well-studied field within the singularity analysis of linear pentapods are designs, which are  singular in any configuration. 
These so-called architecture singular designs are completely classified in \cite[Section 1.3]{nawratil2015self}, where also all relevant references 
in this context are cited. 

Finally it should be noted, that
Borr$\grave{\textrm{a}}$s and Thomas have studied how to move the leg attachments in the base and the platform of $5$-$\mathrm{S}\underline{\mathrm{P}}\mathrm{S}$ linear pentapod without altering the robot's singularity locus (for a planar base see \cite{borras2011singularity} and for a non-planar one see \cite{borras2010singularity}).

\subsection{Motivation and outline}
\label{motivation}

Using a parallel manipulator with a simple singularity variety (with respect to the position variables) 
was first proposed by Karger \cite{karger2006stewart} for the case of Stewart-Gough platforms\footnote{For Stewart-Gough platforms the singularity loci is in general cubic in the position variables.}. 
This work was furthered in  \cite{nawratil2010stewart} and \cite{nawratil2010}, where the necessary conditions for the design of Stewart-Gough platforms with linear or quadratic singularity surface with respect to positioning variables are determined.

It can easily be seen that the equation of the cubic hypersurface $\Sigma$ is only quadratic in position as well as in orientation variables. 
Therefore the intention here is to find necessary conditions for the linear pentapods such that $det(\mathbf{S})=0$ is: 
\begin{itemize}
\item[$\bullet$] linear in position variables (cf.\ Section \ref{sec:2}), 
\item[$\bullet$] linear in orientation variables (cf.\ Section \ref{sec:3}),
\item[$\bullet$] quadratic in total (cf.\ Section \ref{sec:4}).
\end{itemize}     

Clearly, due to the degree reduction it becomes easier to obtain closed form information 
about singular positions. But the main motivation for our research is the computational simplification 
of singularity-free zones, for which the state of art is as follows:

In \cite{rasoulzadeh2018rational} it is proven that for a generic linear pentapod, 
the computation of the maximal singularity-free zone in the position/orientation workspace (with respect to the 
Euclidean/spherical metric) leads over to the solution of a polynomial of degree 6 and 8, respectively. 
The corresponding closest singular configurations in the position/orientation workspace are illustrated in Fig.\ \ref{fig:general}-left.

\begin{figure}[t!] 
\begin{center}   
  \begin{overpic}[width=59mm]{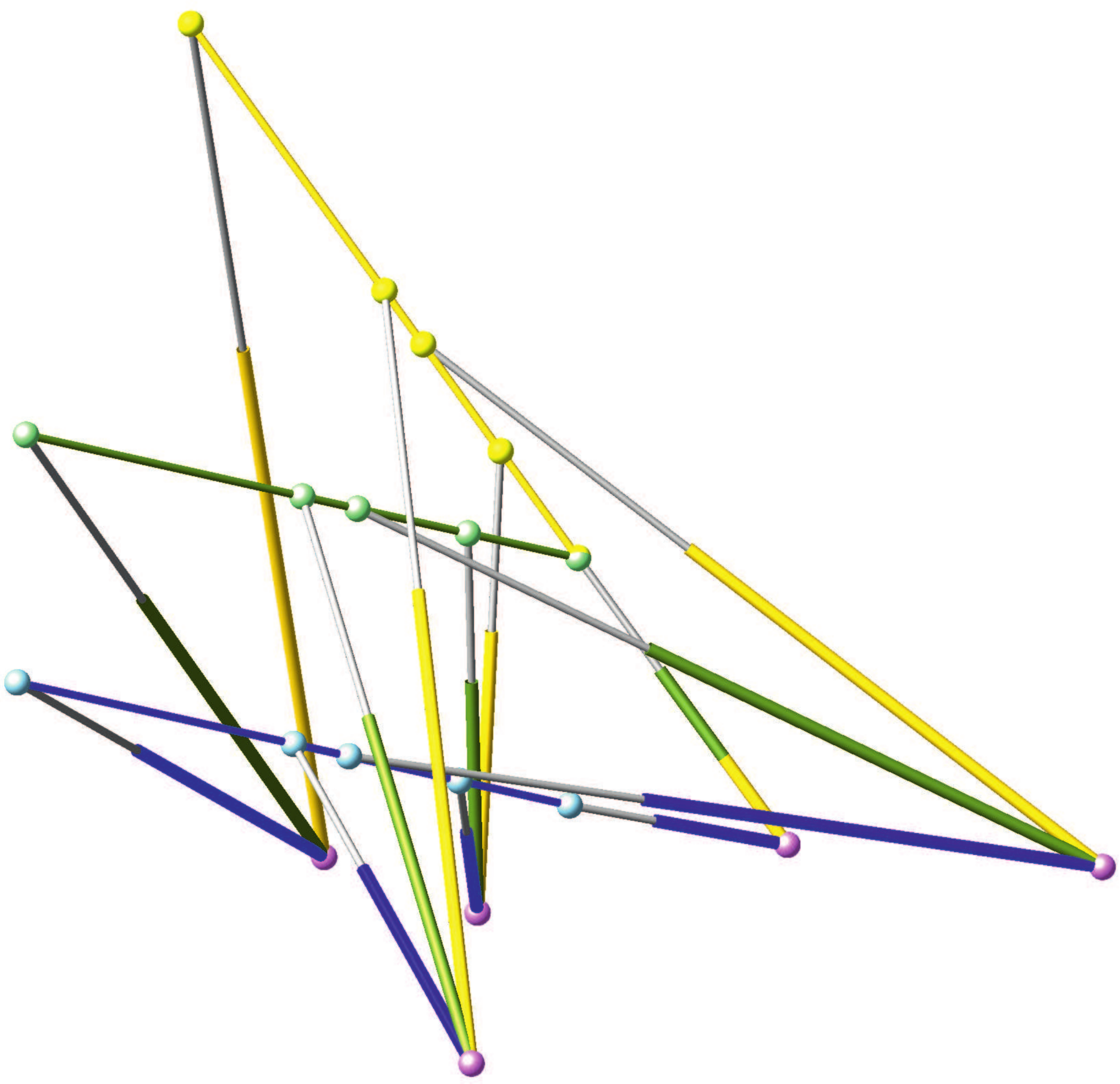}
	\begin{small}
	\put(28,83){$\mathfrak{P}$}
	\put(10,58){$\mathfrak{G}$}
	\put(7,36.5){$\mathfrak{O}$}
	\end{small}     
  \end{overpic} 
	\hfill
	 \begin{overpic}[width=60mm]{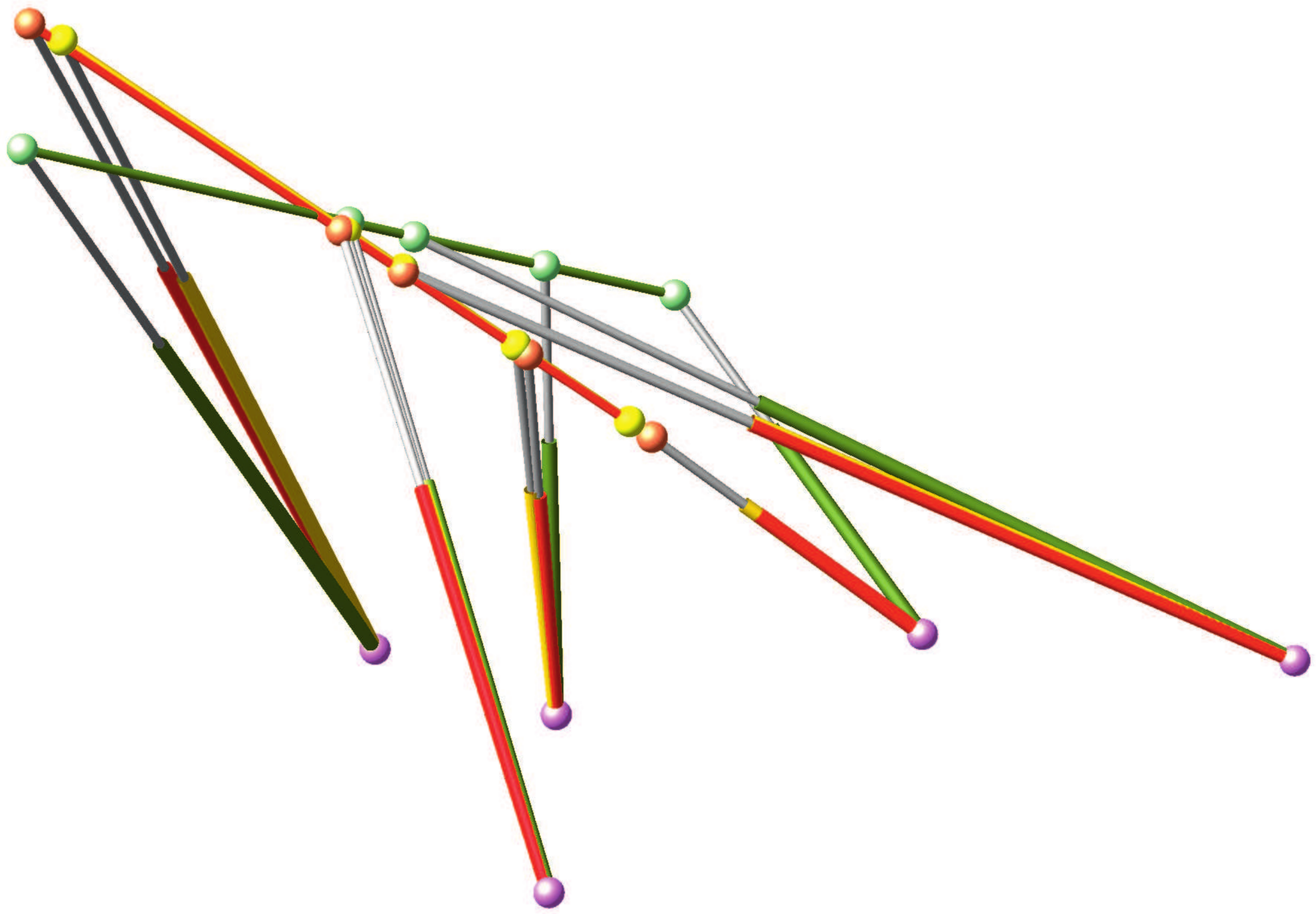}
	\begin{small}
	\put(6,66.5){$\mathfrak{N}$}
	\put(-5,66){$\mathfrak{M}$}
	\put(-4,55){$\mathfrak{G}$}
	\end{small}     
  \end{overpic} 
	\caption{
	Given is the pose $\mathfrak{G}$ (green) of the linear pentapod. 
	Left: The closest singular configurations in the position/orientation workspace 
	are given by the pose $\mathfrak{P}$ (yellow) and $\mathfrak{O}$ (blue), respectively. 
	Right: $\mathfrak{M}$ (red) is the closest singular pose under Euclidean motions of $\ell$ and
	$\mathfrak{N}$ (yellow) is the closest singularity under equiform motions  of $\ell$.}
	\label{fig:general}
\end{center}
\end{figure}

In contrast the determination of the closest singular pose (cf.\ Fig.\ \ref{fig:general}-right) within the complete configurations space (with respect to an object oriented metric) 
leads across the solution of a polynomial of degree 80. Due to this high degree a computation in real time is not possible. 
Our first idea to scope with this problem was to relax the motion group from the Euclidean one to the group of equiform motions 
(similarity transformations), 
which is equivalent to omitting the normalizing condition $\Gamma$. Doing so, the degree drops to 28, which was demonstrated in the addendum of \cite{arvin} 
and is displayed in Fig.\ \ref{fig:general}-right. 
As the obtained distance of the relaxed problem is less or equal to the distance of the original problem, it can be
used as the radius of a guaranteed singularity-free hypersphere. 

The designs computed in the Sections \ref{sec:2}--\ref{sec:4} imply a further degree reduction of 
the polynomials associated with the problem of determining singularity-free zones. 
This is demonstrated at the base of examples in Section \ref{sec:5}. 
Finally, the paper is concluded (cf.\ Section \ref{conclusion}) by proposing three kinematically redundant linear pentapods with a 
simplified singularity variety.

Before plunging into the computations behind the desired designs, clarifying the used notations seems necessary.

\subsection{Notation and preparatory work}
\label{notation}

The following notations are used in the rest of the paper: 
\begin{enumerate}
\item[$\bullet$]The compact notations
$\mathbf{X}=(x_{2}, x_{3}, x_{4}, x_{5})^{T}$, $\mathbf{Y}=(y_{2}, y_{3}, y_{4}, y_{5})^{T}$, 
$\mathbf{Z}=(z_{2}, z_{3}, z_{4}, z_{5})^{T}$  are introduced for the coordinates related to base anchor points. 
\item[$\bullet$]The compact notation 
$\mathbf{r}=(r_{2}, r_{3}, r_{4}, r_{5})^{T}$ is used for the coordinates related to platform anchor points. 
\item[$\bullet$] The \emph{component-wise} product of two vectors is given as follows:
\begin{eqnarray}
\mathbf{r}\mathbf{X}=
\left(\begin{array}{c}
       r_{2}x_{2} \\ 
       r_{3}x_{3} \\
       r_{4}x_{4} \\
       r_{5}x_{5} \\
\end{array}\right), \quad
\mathbf{r}\mathbf{Y}=
\left(\begin{array}{c}
       r_{2}y_{2} \\ 
       r_{3}y_{3} \\
       r_{4}y_{4} \\
       r_{5}y_{5} \\
\end{array}\right),\quad
\mathbf{r}\mathbf{Z}=
\left(\begin{array}{c}
       r_{2}z_{2} \\ 
       r_{3}z_{3} \\
       r_{4}z_{4} \\
       r_{5}z_{5} \\
\end{array}\right).
\end{eqnarray}
\item[$\bullet$]
For the sake of simplicity in notation as well as interpretation, we use the \emph{bracket}; i.e.:
\begin{eqnarray}
[\mathrm{A_{1}}, \mathrm{A_{2}}, \mathrm{A_{3}}, \mathrm{A_{4}}]=\det(\mathbf{A_{1}},\mathbf{A_{2}},\mathbf{A_{3}},\mathbf{A_{4}})
\quad\text{with}\quad \mathbf{A_{i}}\in\{ \mathbf{r},\mathbf{X}, \mathbf{Y}, \mathbf{Z}, \mathbf{rX}, \mathbf{rY},\mathbf{rZ}\}.
\label{BRACKET}
\end{eqnarray}
\end{enumerate}

Furthermore, a proper definition for \emph{undesired designs} or in another formidable word the \emph{architectural singularity} seems necessary.
 
\begin{Definition}
An "architectural singularity" refers to a robot design that is singular in all of its configurations. A robot possessing an architectural singularity is called an "architecturally singular manipulator".
\label{definition}
\end{Definition}

Equivalently a linear pentpod is an \emph{architecturally singular manipulator} if for every position and orientation, the matrix of Eq.\ (\ref{BorrasMatrix}) becomes rank deficient. 
By defining the \emph{architecture matrix} of linear pentapods, namely:
\begin{equation}
\mathbf{A}=\left( \begin {array}{ccccccc}
 r_{2} &x_{2} & y_{2} & z_{2} & r_{2}x_{2} & r_{2}y_{2} & r_{2}z_{2}
\\ r_{3}&x_{3}&y_{3}& z_{3} & r_{3}x_{3} & r_{3}y_{3} & r_{3}z_{3}
\\ r_{4}&x_{4}&y_{4
}&z_{4}&r_{4}x_{4}&r_{4}y_{4}&r_{4}z_{4}
\\ r_{5}&x_{5}&y_{5}&z_{5}&r_{5}x_{5}&r_
{5}y_{5}&r_{5}z_{5}\end {array} \right) 
\label{ArchMatrix}
\end{equation}
we can identify such singularities by considering the rank deficiency of this matrix.
\begin{lemma}
If the "architecture matrix" is rank deficient then the linear pentapod is an "architecturally singular manipulator".
\label{lemma:1}
\end{lemma}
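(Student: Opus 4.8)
The plan is to exploit the block structure of the $7\times 7$ matrix $\mathbf{S}$ from Eq.~(\ref{BorrasMatrix}). The decisive observation is that the last four rows of $\mathbf{S}$ are \emph{exactly} the four rows of the architecture matrix $\mathbf{A}$ of Eq.~(\ref{ArchMatrix}): each row $(r_j,x_j,y_j,z_j,r_jx_j,r_jy_j,r_jz_j)$ for $j=2,3,4,5$ appears verbatim in both matrices, and these entries depend only on the design (the fixed leg data $\mathbf{r},\mathbf{X},\mathbf{Y},\mathbf{Z}$), not on the pose. Only the top three rows of $\mathbf{S}$ carry the pose parameters $(u,v,w,p_x,p_y,p_z)$. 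I would state this identification first, so that the rest is a purely linear-algebraic deduction.

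Next I would run the rank estimate. Rank deficiency of the $4\times 7$ matrix $\mathbf{A}$ means $\operatorname{rank}(\mathbf{A})\le 3$, i.e.\ its four rows are linearly dependent. Splitting the row space of $\mathbf{S}$ into the top three rows and the bottom four, one gets the subadditivity bound
\begin{equation}
\operatorname{rank}(\mathbf{S})\;\le\;3+\operatorname{rank}(\mathbf{A})\;\le\;3+3\;=\;6\;<\;7 ,
\end{equation}
so $\mathbf{S}$ is rank deficient and hence $\det(\mathbf{S})=0$. Equivalently, the linear dependence among rows $4$--$7$ persists as a dependence among rows of the full square matrix $\mathbf{S}$, and a determinant with linearly dependent rows vanishes. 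The point to emphasise is that this dependence involves \emph{only} the four design-dependent rows; the three pose-dependent rows can raise the rank by at most $3$ and can therefore never restore full rank, \emph{whatever} the values of $(u,v,w,p_x,p_y,p_z)$.

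Finally I would conclude by invoking the characterisation recalled in Section~\ref{review} together with Definition~\ref{definition}: since $\det(\mathbf{S})=0$ holds for every point $(u,v,w,p_x,p_y,p_z)$, and in particular for every point of the quadric $\Gamma$, the matrix of Eq.~(\ref{BorrasMatrix}) is rank deficient in every configuration, which is precisely the meaning of an architecturally singular manipulator. There is no genuine obstacle in this direction of the implication; the only thing that must be checked carefully is the row identification between $\mathbf{S}$ and $\mathbf{A}$ and the elementary fact that three extra rows cannot compensate a rank drop of the remaining four. I would also remark that only the stated implication is claimed here, not its converse, so no analysis of whether architectural singularity \emph{forces} rank deficiency of $\mathbf{A}$ is needed.
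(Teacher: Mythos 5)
Your proof is correct and follows essentially the same route as the paper: the paper's one-line proof ("trivially, if the architecture matrix is rank deficient then $\det(\mathbf{S})$ vanishes") rests on exactly the observation you spell out, namely that the rows of $\mathbf{A}$ appear verbatim as the pose-independent bottom rows of $\mathbf{S}$, forcing $\det(\mathbf{S})=0$ for every pose. Your version simply makes the rank bound and the appeal to Definition~\ref{definition} explicit, which the paper leaves to the reader.
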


\begin{proof}
Trivially if Eq.\ (\ref{ArchMatrix}) is rank deficient then the determinant of Eq.\ (\ref{BorrasMatrix}), which is the \emph{singularity polynomial}, vanishes.
\end{proof}

\begin{figure}[t!] 
\begin{center}   
  \begin{overpic}[height=35mm]{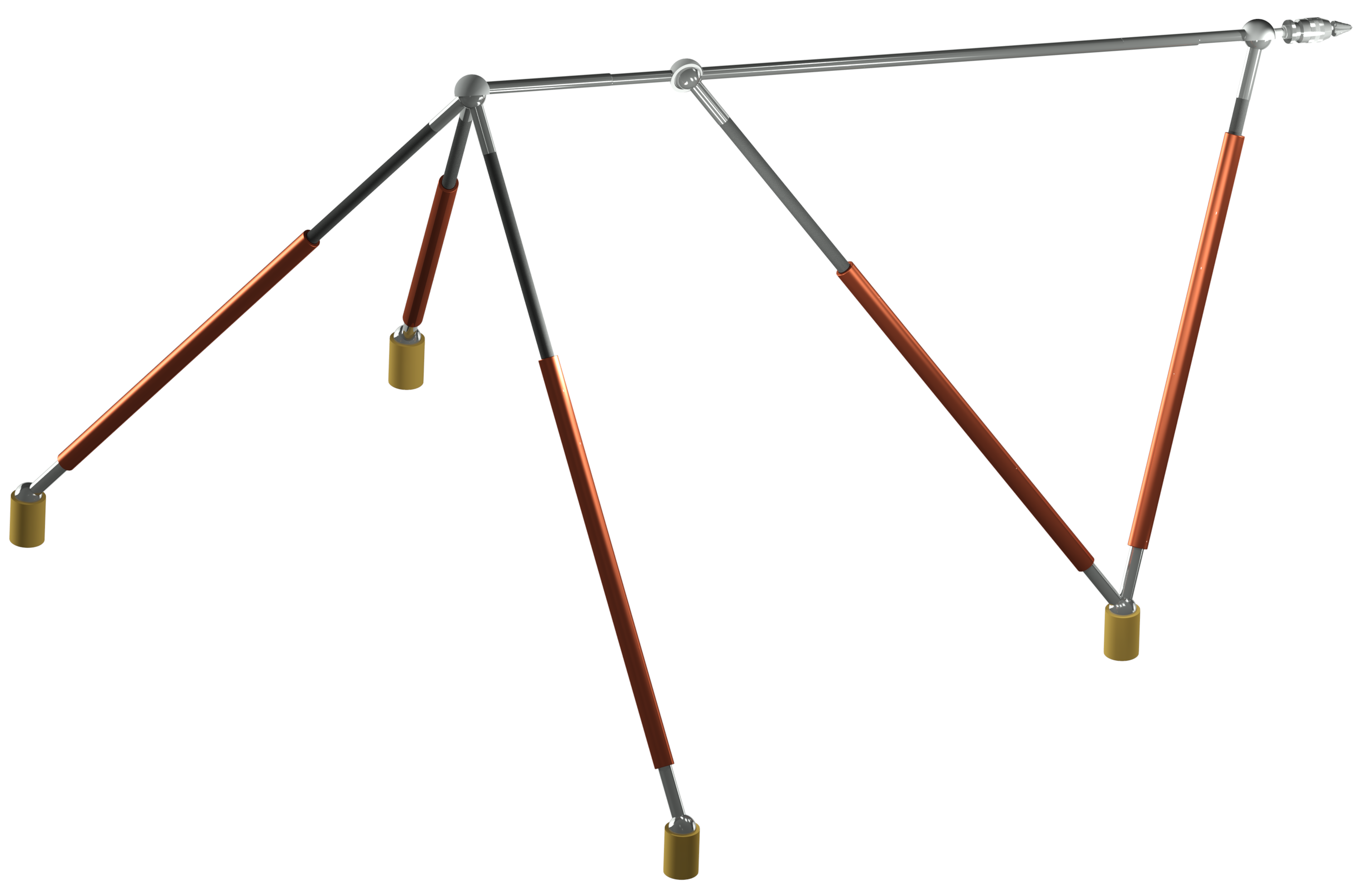}
	\begin{small}
	\put(-1,64){$m_{1}=m_{2}=m_{3}$}
   \put(44,53){$m_{4}$}
   \put(90,68){$m_{5}$}
   \put(0,20){$M_{1}$}
   \put(25.5,32){$M_{2}$}
   \put(40,1){$M_{3}$}
   \put(72,12){$M_{4}=M_{5}$}
	\end{small}     
  \end{overpic}
	\hfill
	\begin{overpic}[height=35mm]{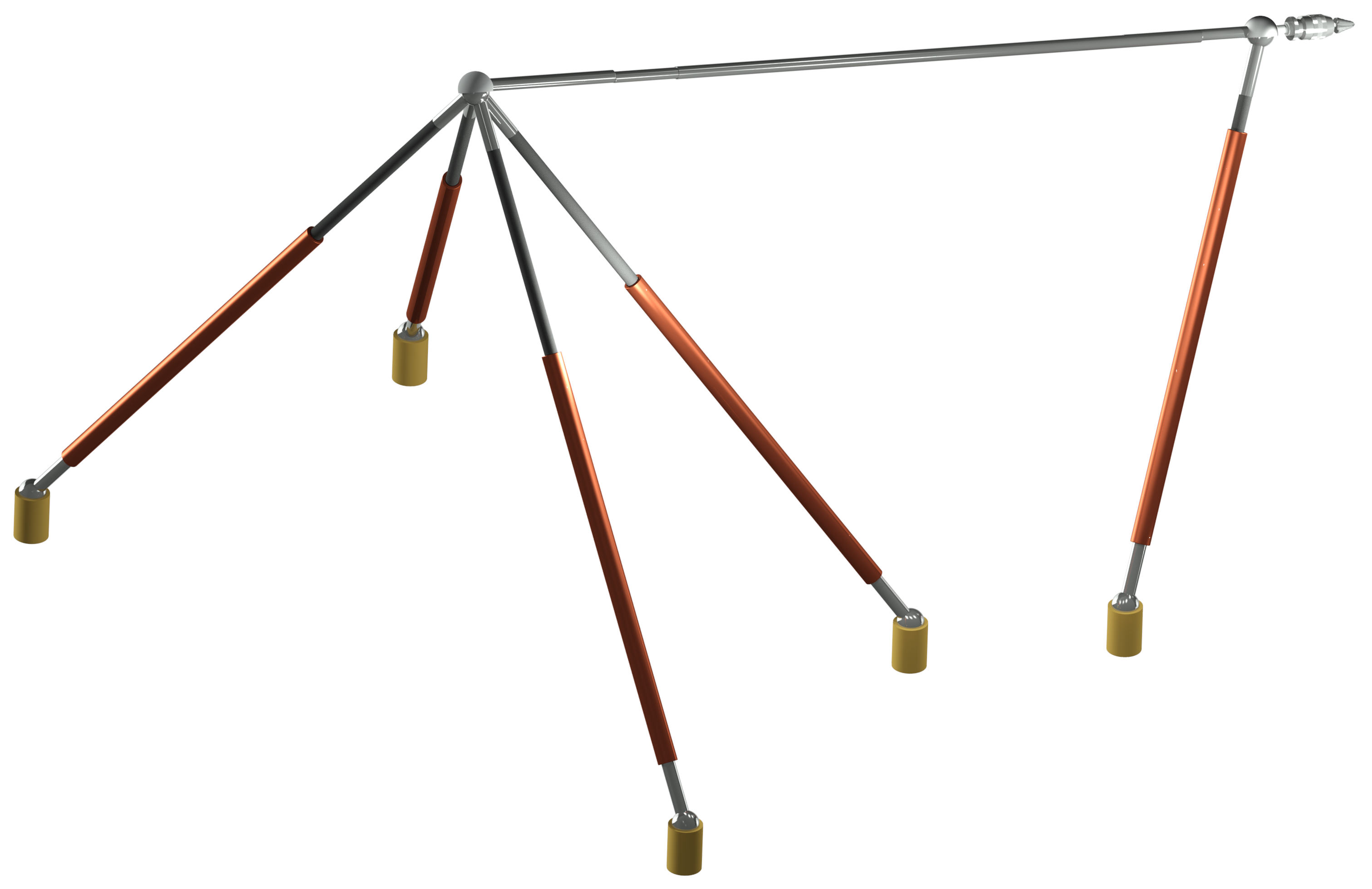}
	\begin{small}
	\put(13,64){$m_{1}=m_{2}=m_{3}=m_{4}$}
   \put(0,20){$M_{1}$}
   \put(26.5,31.5){$M_{2}$}
   \put(40,1){$M_{3}$}
   \put(63,10){$M_{4}$}
   \put(79,12){$M_{5}$}
   \put(90,67){$m_{5}$}
	\end{small}     
  \end{overpic}
	\caption{Illustrations of the counter examples for the necessity of the condition given in Lemma \ref{lemma:1}.}
	\label{fig:3}
\end{center}
\end{figure}

\begin{remark}
It is noteworthy that this is a sufficient but not necessary condition as it is well-known (cf.\ items (c) and (d) of Corollary 1 in \cite{nawratil2009newarch})
that there exist the following two\footnote{Up to renumbering  of the platform and base anchor points.} exceptional cases:
\begin{enumerate}[$\star$]
\item
$m_1=m_2=m_3$ and $M_4=M_5$,
\item
$m_1=m_2=m_3=m_4$,
\end{enumerate}
which are illustrated in Fig.\ \ref{fig:3}. \hfill $\diamond$
\end{remark}

Since in computational kinematics most of the computations are of symbolic type, and naturally expensive in the sense of time consumption, it will be highly favorable if we are able to eliminate some extra symbols.
The following lemma shows that it is possible to alleviate the burden of extra symbols in computations to come:

\begin{lemma}
If the linear pentapod is not an architecturally singular then there exists a triple of base points $M_{i}$, $M_{j}$ and $M_{k}$ which form a triangle and $m_{i}\neq m_{j}$ holds.
\label{lemma:2}
\end{lemma}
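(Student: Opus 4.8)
The plan is to prove the contrapositive: assuming that \emph{no} triple of base points forms a triangle two of whose associated platform points are distinct, I will show that the linear pentapod is architecturally singular. The starting observation is that, since $\mathbf{m}_j=\mathbf{p}+r_j\mathbf{i}$ with $\mathbf{i}$ a unit vector, we have $m_i=m_j$ if and only if $r_i=r_j$; hence $m_i\neq m_j$ is equivalent to $r_i\neq r_j$ and is a property of the \emph{design} alone, independent of the pose. Calling a non-collinear triple of base points \emph{monochromatic} when its three indices share a common value of $r$, the negated hypothesis reads: every triangle formed by the base points is monochromatic, because a non-monochromatic triangle would contain two vertices carrying distinct $r$-values.

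Next I would establish that this forces either (i) all five base points to be collinear, or (ii) all the $r_j$ to coincide. Case (i) is exactly the situation in which no triangle exists at all. Otherwise, fix a triangle $M_a,M_b,M_c$ with common value $\rho$; for any further point $M_d$ at least one of the triples $\{M_d,M_a,M_b\}$, $\{M_d,M_a,M_c\}$, $\{M_d,M_b,M_c\}$ is non-collinear—for if $M_d$ lay on two of the three edges it would coincide with their common vertex, and then the third triple would reproduce the original triangle. This non-collinear triple is monochromatic by hypothesis, so $r_d=\rho$. Thus every $r_j$ equals $\rho$, which is case (ii); and since the normalization $r_1=0$ is in force, $\rho=0$, i.e. all platform points collapse onto $\mathbf{p}$. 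I expect this combinatorial propagation to be the main obstacle: one must verify that the common value really spreads to \emph{every} base point regardless of how the remaining points sit relative to the triangle (on its edges, at a vertex, or strictly off it), and it is precisely here that $r_1=0$ must be invoked to pin down $\rho=0$.

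Finally I would show that in either case the architecture matrix $\mathbf{A}$ of Eq.\ (\ref{ArchMatrix}) is rank deficient, whereupon Lemma \ref{lemma:1} completes the proof. In case (ii) the vanishing of all $r_j$ kills the columns $\mathbf{r},\mathbf{rX},\mathbf{rY},\mathbf{rZ}$, leaving a column span inside $\mathrm{span}\{\mathbf{X},\mathbf{Y},\mathbf{Z}\}$, of dimension at most $3$. In case (i), using that $M_1$ is the origin the common base line passes through the origin, so $\mathbf{X},\mathbf{Y},\mathbf{Z}$ are scalar multiples of a single vector $\mathbf{t}=(t_2,t_3,t_4,t_5)^{T}$ while $\mathbf{rX},\mathbf{rY},\mathbf{rZ}$ are scalar multiples of $\mathbf{rt}$; hence all seven columns lie in $\mathrm{span}\{\mathbf{r},\mathbf{t},\mathbf{rt}\}$, again of dimension at most $3$. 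In both situations $\mathbf{A}$ has rank $\le 3<4$, so by Lemma \ref{lemma:1} the manipulator is architecturally singular, which is the contrapositive of the claim.
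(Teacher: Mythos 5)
Your proof is correct, and it takes a genuinely different route from the paper's. The paper argues directly: non-architectural singularity is said to rule out an all-collinear base and more than three coincident platform points (both cited as ``trivial'' architecturally singular designs), which yields a triangle and a pair $m_i\neq m_j$; it then splits on whether $M_i=M_j$, settling the coincident case by a contradiction using the two triangles $M_iM_kM_l$ and $M_jM_kM_l$. You instead prove the contrapositive and make everything self-contained: your propagation argument (any further base point forms a triangle with two vertices of a fixed triangle, since two edge-lines meet only in their common vertex) forces all $r_j$ to share one value, the normalization $r_1=0$ then pins that value to zero and kills the columns $\mathbf{r},\mathbf{rX},\mathbf{rY},\mathbf{rZ}$, while in the collinear case all seven columns lie in $\mathrm{span}\{\mathbf{r},\mathbf{t},\mathbf{rt}\}$; in both cases you conclude rank deficiency and invoke Lemma \ref{lemma:1}. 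Your route buys two things: it replaces the paper's appeal to known degenerate designs by explicit rank computations, and it supplies a detail the paper glosses over, namely the existence of the triangles $M_iM_kM_l$, $M_jM_kM_l$ needed for its contradiction. A further point in your favor: your propagation yields coincidence of \emph{all five} platform points, not merely four, and this is exactly what your rank argument requires, since by the remark following Lemma \ref{lemma:1} four coincident platform points alone do not force the architecture matrix to be rank deficient. What the paper's shorter route buys is brevity and conclusions phrased directly in terms of the geometric degeneracies rather than matrix rank.
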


\begin{proof}
It is enough to show that there is a triangle where at least two of its corresponding platform points are not coinciding. First we claim that a triangle in the base always exists, as otherwise all five base points are collinear which yield a trivial 
architecture singular design. 
Now, since not all the platform points can collapse into a single point (if more than 3 platform anchor points coincide we get again a trivial 
architecture singular design) there should be at least two different points on the platform namely, $m_{i}$ and $m_{j}$. Now name the corresponding base points $M_{i}$ and $M_{j}$. If these two are not coincided then based on the first part of the proof it is possible to find another base point $M_{k}$ not co-linear with $M_{i}$, $M_{j}$ and hence the statement is fulfilled.

Now, suppose such a triangle with $m_{i}\neq m_{j}$ doesn't exist (see Fig.\ \ref{fig:4}). Then for $\triangle\  {M_{i}}{M_{k}}{M_{l}}$ and $\triangle\ M_{j}M_{k}M_{l}$ we have $m_{i}=m_{k}=m_{l}$ and $m_{j}=m_{k}=m_{l}$ respectively, which would yield $m_{i}=m_{j}$, a contradiction.  
\end{proof}

\begin{figure}[b!] 
\begin{center}   
  \begin{overpic}[height=34mm]{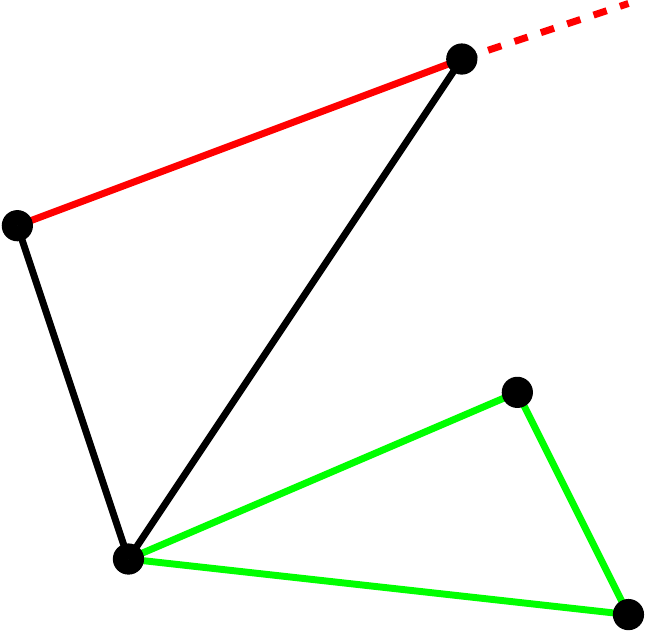}
	\begin{small}
	\put(0,70){${m_{i}}$}
	\put(73,75){${m_{j}}$}
	\put(-20,8){${M_{i}=M_{j}}$}
	\put(98,7){${M_{l}}$}
   \put(75,42){${M_{k}}$}
	\end{small}     
  \end{overpic} 
	\caption{The red line represents the linear pentapod's motion platform, while the green and black stand for the triangle in the base and the  legs, respectively.}
	\label{fig:4}
\end{center}
\end{figure}

Based on this lemma one can assume ${M}_{1}=(0,0,0)$, ${M}_{2}=(x_{2},0,0)$ and ${M}_{3}=(x_{3},y_{3},0)$ where $x_{2} y_{3} \neq 0$. Moreover due to $m_{1}\neq m_{2}$ we can assume a scaling upon which, $r_{2}=1$ holds.
Now the \emph{architecture matrix} of Eq.\ (\ref{ArchMatrix}) simplifies into the following:
\begin{equation}
\mathbf{A}=\left( \begin {array}{ccccccc}
 1 &x_{2} & 0 & 0 & x_{2} & 0 & 0
\\ r_{3}&x_{3}&y_{3}& 0 & r_{3}x_{3} & r_{3}y_{3} & 0
\\ r_{4}&x_{4}&y_{4
}&z_{4}&r_{4}x_{4}&r_{4}y_{4}&r_{4}z_{4}
\\ r_{5}&x_{5}&y_{5}&z_{5}&r_{5}x_{5}&r_
{5}y_{5}&r_{5}z_{5}\end {array} \right). 
\label{ArchMatrix2}
\end{equation}
With the aid of Lemma \ref{lemma:2} and using projective geometry it is possible to obtain a simple but helpful geometric interpretation for \emph{architecturally singular} linear pentapods later in the coming sections. In fact one can think of 
$\mathbf{r}$, $\mathbf{X}$ and $\mathbf{rX}$ as points in the affine space $\mathbb{R}^{3}$ and the remaining columns of Eq.\ (\ref{ArchMatrix2}) as points on the plane at infinity $\Omega_{\infty}$, which closes $\mathbb{R}^{3}$ projectively; i.e. the columns 
of Eq.\ (\ref{ArchMatrix2}) can be seen as homogenous point coordinates of the 3-dimensional projective space $\mathbb{PR}^{3}$.

\begin{lemma}
The "architecture matrix" is rank deficient iff the points $\mathrm{r}$, $\mathrm{X}$, $\mathrm{Y}$, $\mathrm{Z}$, $\mathrm{rX}$, $\mathrm{rY}$ and $\mathrm{rZ}$ are co-planar in $\mathbb{PR}^{3}$.
\label{lemma:3}
\end{lemma}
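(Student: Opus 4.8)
The plan is to read the claim as a direct dictionary entry between the rank of the $4\times 7$ matrix $\mathbf{A}$ and a projective incidence condition, exactly as prepared in the paragraph preceding the statement, where the seven columns of $\mathbf{A}$ were declared to be homogeneous point coordinates in $\mathbb{PR}^{3}$. Denote by $\mathbf{c}_{1}=\mathbf{r},\ \mathbf{c}_{2}=\mathbf{X},\ \ldots,\ \mathbf{c}_{7}=\mathbf{rZ}$ the seven columns, each an element of $\mathbb{R}^{4}$, and recall that a point of $\mathbb{PR}^{3}$ is the ray spanned by a non-zero vector of $\mathbb{R}^{4}$, while a plane of $\mathbb{PR}^{3}$ (a projective subspace of dimension $2$) is the projectivisation of a $3$-dimensional linear subspace $V\subset\mathbb{R}^{4}$.

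First I would translate the algebraic side into a statement about $\operatorname{span}(\mathbf{c}_{1},\ldots,\mathbf{c}_{7})\subseteq\mathbb{R}^{4}$. Since $\mathbf{A}$ has only four rows, its maximal possible rank is $4$, so ``$\mathbf{A}$ is rank deficient'' means precisely $\operatorname{rank}\mathbf{A}\le 3$; because the column rank equals the dimension of the column span, this is equivalent to $\dim\operatorname{span}(\mathbf{c}_{1},\ldots,\mathbf{c}_{7})\le 3$.

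Next I would treat the geometric side. The seven points are co-planar iff a single plane of $\mathbb{PR}^{3}$ contains all of them, i.e.\ iff there is a $3$-dimensional subspace $V\subset\mathbb{R}^{4}$ with $\mathbf{c}_{i}\in V$ for every $i$. Such a $V$ exists exactly when the whole column span already fits inside a $3$-dimensional subspace, which holds iff $\dim\operatorname{span}(\mathbf{c}_{1},\ldots,\mathbf{c}_{7})\le 3$. Chaining the two equivalences closes the argument: rank deficiency $\iff \dim\operatorname{span}\le 3 \iff$ co-planarity.

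The remaining points are bookkeeping rather than real obstacles. Under the normalisation fixed after Lemma \ref{lemma:2} (in particular $r_{2}=1$ and $x_{2}y_{3}\neq 0$) no column of $\mathbf{A}$ is the zero vector, so each column genuinely determines a point of $\mathbb{PR}^{3}$ and the word ``co-planar'' is meaningful. I would also record the equivalent minor formulation, namely that $\operatorname{rank}\mathbf{A}\le 3$ holds iff all $\binom{7}{4}=35$ maximal $(4\times 4)$ minors vanish; these minors are exactly the brackets $[\mathrm{A}_{1},\mathrm{A}_{2},\mathrm{A}_{3},\mathrm{A}_{4}]$ of Eq.~(\ref{BRACKET}), so the co-planarity condition becomes the concrete requirement that all such brackets vanish, linking the lemma to the notation that the later sections exploit.
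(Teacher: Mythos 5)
Your core equivalence chain is correct, but it follows a genuinely different decomposition than the paper's own proof. The paper pivots on the bracket formalism: rank deficiency of the $4\times 7$ matrix is equated with the vanishing of \emph{all} $4\times 4$ minors, each vanishing minor is read (via the cited bracket literature) as co-planarity of the corresponding four points, and then co-planarity of every $4$-element subset is asserted to force all seven points onto a common plane. You instead argue directly on the dimension of the column span: rank deficiency $\iff \dim\mathrm{span}(\mathbf{c}_1,\ldots,\mathbf{c}_7)\le 3 \iff$ all columns lie in a common $3$-dimensional subspace $V\subset\mathbb{R}^4$, whose projectivisation is the common plane. Your route is more self-contained and quietly supplies a step the paper leaves implicit: that ``every $4$-subset co-planar'' implies ``all seven co-planar'' (contrapositive: if the seven points spanned $\mathbb{PR}^3$, four columns would be linearly independent and the corresponding bracket would not vanish). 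Conversely, the paper's route keeps the bracket dictionary of Eq.~(\ref{BRACKET}) front and center, which is what Sections \ref{sec:2}--\ref{sec:4} actually use; your closing remark about the $35$ maximal minors recovers exactly that viewpoint, so the two proofs meet there.

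One bookkeeping claim of yours is false, however: under the normalisation of Eq.~(\ref{ArchMatrix2}) it is \emph{not} true that no column of $\mathbf{A}$ is the zero vector. For a planar base ($z_4=z_5=0$) --- precisely the case studied at length in Sections \ref{sec:2:1} and \ref{sec:3:1} --- the columns $\mathbf{Z}=(0,0,z_4,z_5)^T$ and $\mathbf{rZ}=(0,0,r_4z_4,r_5z_5)^T$ are zero and hence determine no point of $\mathbb{PR}^3$; the same can happen to $\mathbf{rY}$ when $r_3=r_4=r_5=0$. This does not break your main argument, because the span formulation $\dim\mathrm{span}\le 3$ is insensitive to zero columns; but the geometric wording ``co-planar'' must then be read with the convention that zero columns are discarded (equivalently, that the zero vector lies in every linear subspace), rather than being justified by the non-vanishing claim, which does not hold.
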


\begin{proof}
A bracket defined in Eq.\ (\ref{BRACKET}) vanishes if and only if the four points in the bracket are co-planar in $\mathbb{PR}^{3}$ \citep{ben2008singulab}. Now, the \emph{architecture matrix} (a $4\times 7$ matrix) is rank deficient whenever all $4\times 4$ sub-matrices are of determinant zero. In another word the \emph{architecture matrix} is rank deficient iff any four members of the set $\{ \mathrm{r},\mathrm{X},\mathrm{Y},\mathrm{Z},\mathrm{rX},\mathrm{rY},\mathrm{rZ}\}$ are co-planar which happens if and only if these seven points are located on a common plane in $\mathbb{PR}^{3}$.
\end{proof}

Finally the computation of each case is based on the elimination of determinants of \emph{unwanted} sub-matrices of Eq.\ (\ref{BorrasMatrix}). These sub-matrices are named $\mathbf{S}_{\ i_{1},\ldots ,i_{n}}^{\ j_{1},\ldots ,j_{n}}$ where 
$i_{1},\ldots ,i_{n}$ indicates the numbers of the rows and $j_{1},\ldots ,j_{n}$ the numbers of the columns, which have to be 
deleted from the matrix given in Eq.\ (\ref{BorrasMatrix});  e.g. $\mathbf{S}_{\ 1,2,3}^{\ 4,5,6}$ stands for the sub-matrix obtained by removing the 1st, 2nd and 3rd row and 4th, 5th and 6th column.


\section{Linear in $p_{x}$, $p_{y}$ and $p_{z}$}
\label{sec:2}

In this section we determine all non-architectural singular designs, where the \emph{singularity polynomial} $det(\mathbf{S})=0$ 
is only linear in position variables. In the following we distinguish between linear pentapods with/without
coplanar base anchor points (planar/non-planar case).

\subsection{Planar case} 
\label{sec:2:1}

\begin{figure}[t!] 
\begin{center}   
  \begin{overpic}[height=48mm]{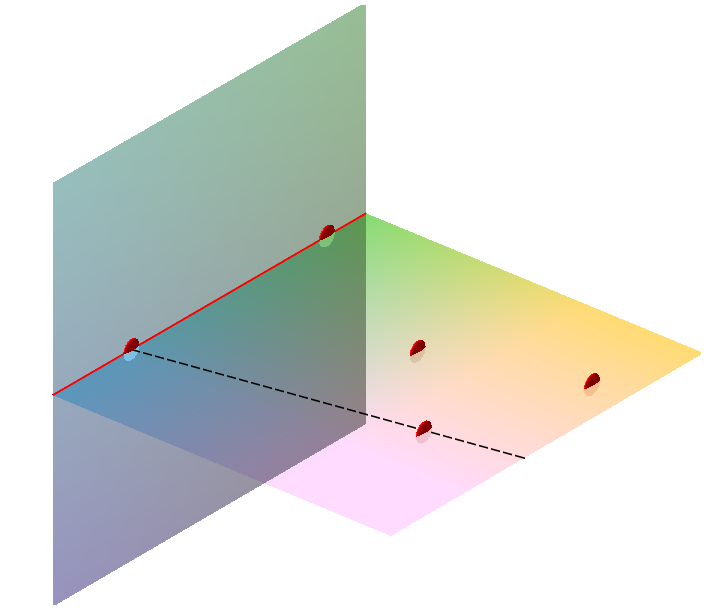}
	\put(17,42){$\mathrm{Y}$}
   \put(44,59){$\mathrm{rY}$}
   \put(58,18){$\mathrm{X}$}
   \put(59,41){$\mathrm{r}$}
   \put(82,36){$\mathrm{rX}$}
   \put(10,8){$\Omega_{\infty}$}
   \put(44,6){$\mathcal{P}_{1}=\mathcal{P}_{2}$}
  \end{overpic}
	\qquad
	\begin{overpic}[height=48mm]{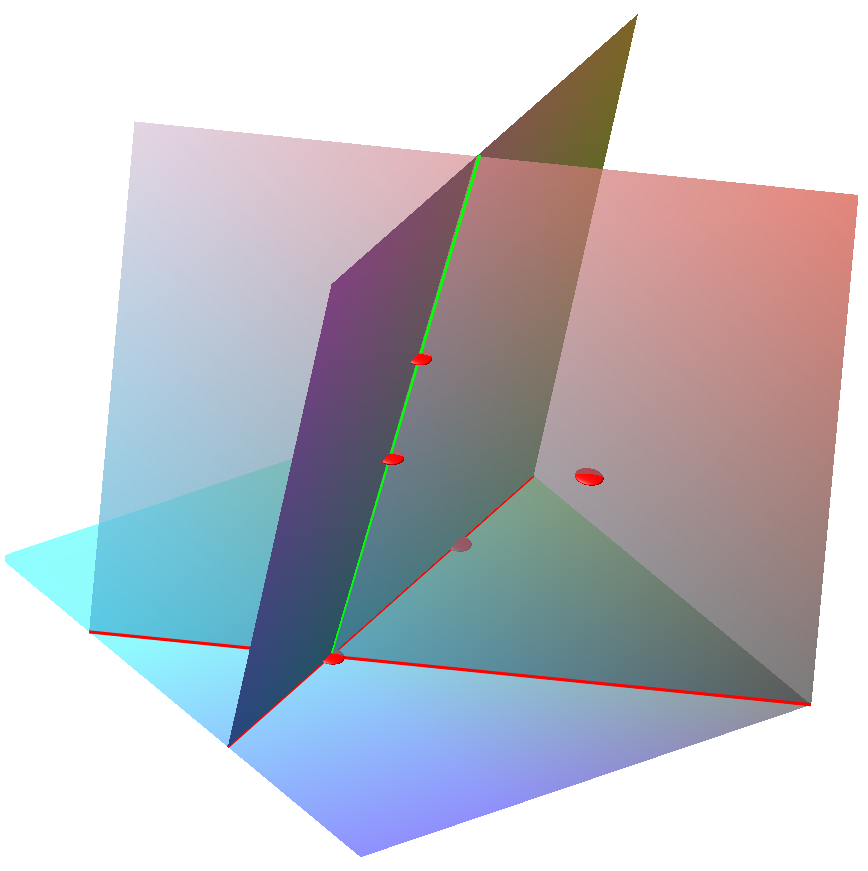}
	\put(32,24){$\mathrm{Y}$}
   \put(41,59){$\mathrm{X}$}
   \put(55,32){$\mathrm{rY}$}
   \put(39,46){$\mathrm{r}$}
   \put(72,43){$\mathrm{rX}$}
   \put(41,7){$\Omega_{\infty}$}
   \put(72,100){$\mathcal{P}_{1}$}
   \put(100,79){$\mathcal{P}_{2}$}
  \end{overpic}
	\caption{
	Geometric interpretation of the conditions yielding a \emph{singularity polynomial}, 
	which is linear in position variables for a linear pentapod with a planar base:  
	architecturally singular case (left) and the non-architecturally singular case (right).}
	\label{fig:5}
\end{center}
\end{figure}     

Assume that the manipulator is planar ($z_{4}=z_{5}=0$). 
Since the desired goal here is to have the linear singularity polynomial in position variables, all the terms containing position variables of degree two should be canceled. These terms form a polynomial, 
which we call the \emph{undesired polynomial} through the remainder of the article. In a more general sense, the \emph{undesired polynomial} is a polynomial which by subtracting it from $\det (\mathbf{S})$ 
yields a polynomial with the desired property (this property can be linearity in position/orientation variables or quadratic in total).

Here the \emph{undesired polynomial} is as follows:
\begin{equation}
\det\left(\mathbf{S}_{\ 1,2}^{\ 7,4}\right)p_{z}^{2} + \det\left(\mathbf{S}_{\ 1,2}^{\ 5,4}\right) p_{x}p_{z} - \det\left(\mathbf{S}_{\ 1,2}^{\ 6,4}\right)p_{y}p_{z}=0. 
\label{undesired:1}
\end{equation}
If Eq.\ (\ref{undesired:1}) is fulfilled independently of the position variables then all the coefficients have to be zero. Based on the resulting conditions one can prove the following theorem:  

\begin{theorem}\label{thm:1}
A non-architecturally singular linear pentapod with a planar base can only have a "singularity polynomial", 
which is linear in position variables, 
iff there is a singular affine mapping $\kappa$ from the base plane to the platform line $\ell$ with $M_i\mapsto m_i$ for $i=1,\ldots ,5$.
\label{theorem:1}
\end{theorem}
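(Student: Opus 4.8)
The plan is to convert the three vanishing conditions coming from the \emph{undesired polynomial} of Eq.~(\ref{undesired:1}) into bracket equations and then to interpret those brackets geometrically through Lemma~\ref{lemma:3}. First I would use the planarity assumption $z_4=z_5=0$, which together with $z_2=z_3=0$ from the normalisation gives $\mathbf{Z}=\mathbf{rZ}=\mathbf{0}$, and expand the three minors $\det\left(\mathbf{S}_{\ 1,2}^{\ 7,4}\right)$, $\det\left(\mathbf{S}_{\ 1,2}^{\ 5,4}\right)$, $\det\left(\mathbf{S}_{\ 1,2}^{\ 6,4}\right)$ along the single remaining row carrying the orientation entries $u,v,w$ (the third row of $\mathbf{S}$). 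Each minor then turns into a linear combination of the two brackets $[\mathbf{r},\mathbf{X},\mathbf{Y},\mathbf{rX}]$ and $[\mathbf{r},\mathbf{X},\mathbf{Y},\mathbf{rY}]$ with coefficients drawn from $u,v,w$. Since the singularity polynomial must be linear in position for \emph{every} admissible orientation, Eq.~(\ref{undesired:1}) has to vanish identically in $u,v,w$ as well, and collecting coefficients collapses the three conditions to the single pair
\[
[\mathbf{r},\mathbf{X},\mathbf{Y},\mathbf{rX}]=0 \qquad\text{and}\qquad [\mathbf{r},\mathbf{X},\mathbf{Y},\mathbf{rY}]=0 .
\]

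Next I would read these two equations off geometrically. By Lemma~\ref{lemma:3} a vanishing bracket means that the four columns involved are coplanar points of $\mathbb{PR}^{3}$, and both equations share the common triple $\mathbf{r},\mathbf{X},\mathbf{Y}$. I would therefore split on the rank of $(\mathbf{r},\mathbf{X},\mathbf{Y})$. If this rank equals $3$, the three points span a unique plane $\mathcal{P}$, and the two bracket conditions force $\mathbf{rX}\in\mathcal{P}$ and $\mathbf{rY}\in\mathcal{P}$; then $\mathbf{r},\mathbf{X},\mathbf{Y},\mathbf{rX},\mathbf{rY}$ all lie in $\mathcal{P}$ and, by Lemma~\ref{lemma:3} again, the manipulator is architecturally singular. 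This is exactly the excluded case $\mathcal{P}_1=\mathcal{P}_2$ depicted in Fig.~\ref{fig:5}-left. Hence, under the non-architectural hypothesis, the only surviving possibility is $\operatorname{rank}(\mathbf{r},\mathbf{X},\mathbf{Y})=2$, the configuration $\mathcal{P}_1\neq\mathcal{P}_2$ of Fig.~\ref{fig:5}-right.

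Because the non-architecturally singular design carries a genuine base triangle --- in the chosen frame $x_2y_3\neq 0$ by Lemma~\ref{lemma:2} --- the columns $\mathbf{X}$ and $\mathbf{Y}$ are already linearly independent, so $\operatorname{rank}(\mathbf{r},\mathbf{X},\mathbf{Y})=2$ is equivalent to $\mathbf{r}=\beta\mathbf{X}+\gamma\mathbf{Y}$ for suitable scalars $\beta,\gamma$. Written componentwise this reads $r_i=\beta x_i+\gamma y_i$ for $i=2,\dots,5$, and since $r_1=0=\beta x_1+\gamma y_1$ it holds for $i=1$ as well; thus the assignment $\kappa:(x,y)\mapsto \mathbf{p}+(\beta x+\gamma y)\mathbf{i}$ is an affine map of the base plane onto $\ell$ realising $\kappa(M_i)=m_i$, and as a map from a plane to a line it is necessarily singular. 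This settles the direction ``$\Rightarrow$''. For ``$\Leftarrow$'' I would run the argument backwards: a singular affine $\kappa$ with $M_i\mapsto m_i$ forces $r_i=\beta x_i+\gamma y_i+c$, evaluating at $M_1$ yields $c=0$, hence $\mathbf{r}=\beta\mathbf{X}+\gamma\mathbf{Y}$; the triple $\mathbf{r},\mathbf{X},\mathbf{Y}$ is then linearly dependent, both brackets vanish, and by the first paragraph the singularity polynomial becomes linear in $p_x,p_y,p_z$.

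I expect the main obstacle to lie in the ``only if'' direction rather than in the determinant bookkeeping: recovering an affine map from the two purely algebraic bracket conditions is possible only after the rank case-split, and it is precisely the non-architectural hypothesis --- invoked through Lemma~\ref{lemma:3} --- that eliminates the coplanar branch $\operatorname{rank}=3$. A secondary delicate point is the vanishing of the constant $c$: one must check $c=0$ so that $\mathbf{r}$ lies in the span of $\mathbf{X},\mathbf{Y}$ alone and not of $\mathbf{X},\mathbf{Y}$ together with the all-ones vector, which is what separates a genuine affine map $M_i\mapsto m_i$ from a spurious one and which rests on the leg-$1$ normalisation $M_1=m_1$, $r_1=0$.
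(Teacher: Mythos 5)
Your proposal follows essentially the same route as the paper's own proof: Laplace expansion of the three minors into the two brackets $[\mathrm{r},\mathrm{X},\mathrm{Y},\mathrm{rX}]$ and $[\mathrm{r},\mathrm{X},\mathrm{Y},\mathrm{rY}]$, collapse of the orientation-dependent conditions to the vanishing of both brackets, and the case split on the rank of $(\mathbf{r},\mathbf{X},\mathbf{Y})$ with Lemma~\ref{lemma:3} ruling out the coplanar (rank-3) branch as architecturally singular and the rank-2 branch yielding $\mathbf{r}\in\mathrm{span}\{\mathbf{X},\mathbf{Y}\}$, i.e.\ the singular affine coupling $\kappa$. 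Your write-up is correct, and is in fact slightly more complete than the paper's in making the converse direction and the vanishing of the constant term (via the normalisation $r_1=0$, $M_1$ at the origin) explicit.
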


\begin{proof}
Using \emph{Laplace expansion by minors}, $\det\left(\mathbf{S}_{\ 1,2}^{\ 7,4}\right)$ is:
\begin{equation}  
[\mathrm{r}, \mathrm{X}, \mathrm{Y}, \mathrm{rX}]v
-[\mathrm{r}, \mathrm{X}, \mathrm{Y}, \mathrm{rY}]u=0.
\label{up}
\end{equation} 
For all possible orientations, Eq.\ (\ref{up}) holds whenever both bracket coefficients vanish. Again by considering the \emph{Laplace expansion by minors} for $\det\left(\mathbf{S}_{\ 1,2}^{\ 5,4}\right)$ and $\det\left(\mathbf{S}_{\ 1,2}^{\ 6,4}\right)$ respectively, one obtains:
\begin{equation}
[\mathrm{r}, \mathrm{X}, \mathrm{Y}, \mathrm{rY}] w=[\mathrm{r}, \mathrm{X}, \mathrm{Y}, \mathrm{rX}] u=0. 
\label{prebracket2}
\end{equation}
As it is also desired to have these equations vanished for all possible orientations, the bracket coefficients should be equal to zero simultaneously. Hence, independently of all possible orientations, the following statement holds:
\begin{equation}
\det\left(\mathbf{S}_{\ 1,2}^{\ 7,4}\right)\ \text{vanishes} \Longleftrightarrow \det\left(\mathbf{S}_{\ 1,2}^{\ 5,4}\right)\ \text{and}\ \det\left(\mathbf{S}_{\ 1,2}^{\ 6,4}\right)\ \text{vanish}. 
\label{equation:1}
\end{equation}
Finally, based on Eq.\ (\ref{equation:1}) the necessary and sufficient condition for having a  \emph{singularity polynomial} linear in position variables will be:
\begin{equation}
[\mathrm{r}, \mathrm{X}, \mathrm{Y}, \mathrm{rY}]=[\mathrm{r}, \mathrm{X}, \mathrm{Y}, \mathrm{rX}]=0.
\label{bracket2}
\end{equation}
Using the literature of bracket algebra available at \cite{ben2008singulab, white1994grassmann} these brackets vanish whenever the four points characterizing them are co-planar. 
We denote the planes associated with the two brackets of Eq.\ (\ref{bracket2})-left and Eq.\ (\ref{bracket2})-right by $\mathcal{P}_{1}$ and $\mathcal{P}_{2}$, respectively. Then the following two cases 
have to be distinguished:  

\begin{itemize}
\item[1.] If the points $\mathrm{r}$, $\mathrm{X}$ and $\mathrm{Y}$ are not co-linear (or in another word if the vectors $\mathbf{r}$, $\mathbf{X}$ and $\mathbf{Y}$ are linearly independent) then the linear pentapod would be an \emph{architecturally singular manipulator} since geometrically, by Lemma \ref{lemma:3} this is equal to having the planes $\mathcal{P}_{1}$ and $\mathcal{P}_{2}$ coincided, as depicted in Fig.\ \ref{fig:5}-left.

\item[2.] If the points $\mathrm{r}$, $\mathrm{X}$ and $\mathrm{Y}$ are co-linear (or in another word if the vectors $\mathbf{r}$, $\mathbf{X}$ and $\mathbf{Y}$ are linearly dependent) then $\mathbf{r}\in\mathrm{span}\{\mathbf{X}, \mathbf{Y}\}$; i.e.\
\begin{equation}
\mathbf{r}=\alpha .\mathbf{X} + \beta .\mathbf{Y}, 
\label{res1}
\end{equation}
where $\alpha$ and $\beta$ are real numbers. This results the affine coupling $\kappa$ mentioned in Theorem \ref{theorem:1}, namely:
\begin{equation}
\kappa:\,\,(x_{i},y_{i}) \longmapsto r_{i}=\alpha x_{i} + \beta y_{i} \quad\text{with}\quad \alpha =\tfrac{1}{x_{2}}\quad
\text{and}\quad i=2,\ldots , 5.
\label{affine}
\end{equation}
Geometrically, it is also worth mentioning that the planes $\mathcal{P}_{1}$ and $\mathcal{P}_{2}$ do not necessarily coincide in this case (as depicted in Fig.\ \ref{fig:5}-right).
\end{itemize}
\end{proof}

\begin{remark}
Now this question arises that which designs possessing a singular affine coupling $\kappa:$ $M_1\mapsto m_1$ are architecturally singular. 
According to the list given in \cite[Corollary 1]{nawratil2015self} there are three possible types of "architecturally singular manipulators", \footnote{Up to renumbering of the platform and base anchor points.} 
which will be as follows: 
\begin{enumerate}[$\star$]
\item
$m_1,\ldots ,m_5$ are pairwise distinct and there exists a conic section passing through $M_1,\ldots ,M_5$ and the ideal point of the 
parallel fibers of $\kappa$ (cf.\ Section 4.3 of \cite{borras2011architectural}). This is for example trivially fulfilled if $M_1,\ldots ,M_4$ are collinear.
\item
$m_1,\ldots ,m_4$ are pairwise distinct, $m_4=m_5$ holds and the base points $M_1,M_2,M_3$ are collinear. 
\item
More than two platform points coincide. \hfill $\diamond$
\end{enumerate}
\end{remark}

\subsection{Non-planar case}
\label{sec:2:2}

If the base points of the linear pentapod are not restricted to be positioned on a plane, the coordinates $z_{4}$ and $z_{5}$ can not vanish simultaneously. Making this assumption, the \emph{undesired polynomial} reads as follows:
\begin{equation*}
{\det} \left( \mathbf{S}_{\ 1,2}^{\ 5,2} \right) {p_{{x}}}^{2}+{\det}
 \left( \mathbf{S}_{\ 1,2}^{\ 6,3} \right) {p_{{y}}}^{2}+{\det} \left( \mathbf{S}_{\ 1,2}^{\ 7,4} \right) {p_{{z}}}^{2} - \left[ {\det} \left( \mathbf{S}_{\ 1,2}^{\ 5,3} \right) +{\det}
 \left( \mathbf{S}_{\ 1,2}^{\ 6,2} \right)  \right] p_{{x}}p_{{y}}
\end{equation*}
\begin{equation}
+ \left[ {\det} \left( \mathbf{S}_{\ 1,2}^{\ 5,4} \right) + {\det} \left( \mathbf{S}_{\ 1,2}^{\ 7,2} 
 \right)  \right] p_{{x}}p_{{z}} - \left[ {\det} \left( \mathbf{S}_{\ 1,2}^{\ 6,4} \right) +{\det} \left( \mathbf{S}_{\ 1,2}^{\ 7,3} \right)  \right] p_{{y}}p_{{z}}=0.
\label{undesired:2}
\end{equation} 
Again all sub-matrices appearing as the coefficients in Eq.\ (\ref{undesired:2}) should become rank deficient. 

\begin{theorem}
Non-architecturally singular linear pentapods with a non-planar base possessing a "singularity polynomial", 
which is linear in position variables, do not exist. 
\label{theorem:2}
\end{theorem}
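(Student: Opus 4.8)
The plan is to argue by contradiction, mimicking the proof of Theorem \ref{theorem:1} but now exploiting the much larger system of conditions forced by the six coefficients of the \emph{undesired polynomial} in Eq.\ (\ref{undesired:2}). Assume a non-architecturally singular linear pentapod with a non-planar base whose singularity polynomial is linear in position variables exists. First I would impose that each of the six coefficients — the three $5\times5$ determinants multiplying $p_x^2,p_y^2,p_z^2$ and the three bracketed sums multiplying $p_xp_y,p_xp_z,p_yp_z$ — vanishes identically in the orientation variables. Expanding every sub-determinant $\det(\mathbf{S}_{\,1,2}^{\,i,j})$ by Laplace along its third row $(0,0,0,0,u,v,w)$ writes it as a linear form in $u,v,w$ whose coefficients are $4\times4$ brackets of the columns $\mathbf{r},\mathbf{X},\mathbf{Y},\mathbf{Z},\mathbf{rX},\mathbf{rY},\mathbf{rZ}$, and requiring these forms to vanish for all admissible orientations forces the individual bracket coefficients to be zero.

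The three pure squares then yield the six coplanarity conditions $[\mathrm{r},\mathrm{X},\mathrm{Y},\mathrm{rX}]=[\mathrm{r},\mathrm{X},\mathrm{Y},\mathrm{rY}]=0$, $[\mathrm{r},\mathrm{X},\mathrm{Z},\mathrm{rX}]=[\mathrm{r},\mathrm{X},\mathrm{Z},\mathrm{rZ}]=0$ and $[\mathrm{r},\mathrm{Y},\mathrm{Z},\mathrm{rY}]=[\mathrm{r},\mathrm{Y},\mathrm{Z},\mathrm{rZ}]=0$, while the three mixed terms yield coupled relations among the brackets $[\mathrm{r},\mathrm{X},\mathrm{Y},\mathrm{rZ}]$, $[\mathrm{r},\mathrm{X},\mathrm{Z},\mathrm{rY}]$ and $[\mathrm{r},\mathrm{Y},\mathrm{Z},\mathrm{rX}]$. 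Reading these through Lemma \ref{lemma:3}, the plane $\pi_{XY}$ spanned by $\mathrm{r},\mathrm{X},\mathrm{Y}$ must contain $\mathrm{rX}$ and $\mathrm{rY}$, the plane $\pi_{XZ}$ through $\mathrm{r},\mathrm{X},\mathrm{Z}$ must contain $\mathrm{rX}$ and $\mathrm{rZ}$, and $\pi_{YZ}$ through $\mathrm{r},\mathrm{Y},\mathrm{Z}$ must contain $\mathrm{rY}$ and $\mathrm{rZ}$. As long as $\mathbf{r},\mathbf{X},\mathbf{Y},\mathbf{Z}$ span $\mathbb{PR}^{3}$ (so $[\mathrm{r},\mathrm{X},\mathrm{Y},\mathrm{Z}]\neq0$ and the three planes are pairwise distinct), intersecting the planes pairwise places $\mathrm{rX},\mathrm{rY},\mathrm{rZ}$ on the lines $\overline{\mathbf{r}\mathbf{X}}$, $\overline{\mathbf{r}\mathbf{Y}}$, $\overline{\mathbf{r}\mathbf{Z}}$; equivalently $\mathbf{rX}=e\mathbf{r}+f\mathbf{X}$, $\mathbf{rY}=c\mathbf{r}+d\mathbf{Y}$, $\mathbf{rZ}=a\mathbf{r}+b\mathbf{Z}$ componentwise, and substituting these into the coupled relations reduces them to linear relations among the slopes $b,d,f$ (each of the form $(b\mp d)[\mathrm{r},\mathrm{X},\mathrm{Y},\mathrm{Z}]=0$, and similarly for $b,f$ and $d,f$).

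Next I would feed in the normalization of Lemma \ref{lemma:2}, i.e.\ $M_1=\mathbf{0}$, $M_2=(x_2,0,0)$, $M_3=(x_3,y_3,0)$ with $x_2y_3\neq0$ and $r_2=1$, together with the defining feature of the non-planar case that $z_4,z_5$ do not vanish simultaneously. Because $z_2=z_3=0$, the relation $r_jz_j=ar_j+bz_j$ at $j=2$ forces $a=0$ and then $r_j=b$ for every index with $z_j\neq0$; likewise $y_2=0$ and $y_3\neq0$ force $c=0$ and $r_3=d$, and the $\mathbf{X}$-relation at $j=2$ gives $e=x_2(1-f)$. Combining these with the slope relations from the mixed terms collapses the parameters, forcing equalities among $r_2,\dots,r_5$ — hence coincidences of platform anchor points — or the degeneration $\mathbf{rX}=\mathbf{X}$, $\mathbf{rY}=\mathbf{Y}$, $\mathbf{rZ}=\mathbf{Z}$. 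In each branch either the seven points become coplanar or more than two platform points coincide, so by Lemmas \ref{lemma:3} and \ref{lemma:1} (respectively by the exceptional cases recorded in the Remark following Lemma \ref{lemma:1}) the design is architecturally singular, contradicting the hypothesis.

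It then remains to treat the degenerate configuration $[\mathrm{r},\mathrm{X},\mathrm{Y},\mathrm{Z}]=0$, where $\mathbf{r},\mathbf{X},\mathbf{Y},\mathbf{Z}$ are already coplanar and the pairwise-intersection argument of the second paragraph breaks down because two of the planes $\pi_{XY},\pi_{XZ},\pi_{YZ}$ may coincide. Here I would argue directly from the componentwise form of the six pure-square conditions under the normalization, showing that $\mathrm{rX},\mathrm{rY},\mathrm{rZ}$ are forced into the common plane of $\mathbf{r},\mathbf{X},\mathbf{Y},\mathbf{Z}$, so that all seven points are coplanar and the architecture matrix $\mathbf{A}$ is rank deficient. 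I expect the main obstacle to be exactly this case bookkeeping: one must run through which of the coordinates $y_4,y_5,z_4,z_5$ vanish and keep careful track of the signs in the three mixed relations, since these signs decide whether the collapse produces coincident platform points or a genuine rank drop of $\mathbf{A}$. Once every branch is seen to terminate in an architectural singularity, the non-existence asserted in Theorem \ref{theorem:2} follows.
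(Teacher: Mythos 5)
Your strategy is viable and, if the bookkeeping is actually carried out, it proves the theorem; but it is a genuinely different and much heavier route than the paper's. The paper uses only two of the six coefficients of Eq.\ (\ref{undesired:2}) --- $\det(\mathbf{S}_{\ 1,2}^{\ 5,2})$ and $\det(\mathbf{S}_{\ 1,2}^{\ 6,3})$, the coefficients of $p_x^2$ and $p_y^2$ --- whose Laplace expansions give exactly the four brackets of Eqs.\ (\ref{bracket:3}) and (\ref{bracket:4}). Since the normalization of Lemma \ref{lemma:2} together with non-planarity makes $\mathbf{r},\mathbf{Y},\mathbf{Z}$ linearly independent, the paper then runs a four-step chain of span arguments placing $\mathbf{rZ}$, $\mathbf{rY}$, $\mathbf{X}$ and $\mathbf{rX}$ successively in $\mathrm{span}\{\mathbf{r},\mathbf{Y},\mathbf{Z}\}$, so all seven columns lie in a three-dimensional space, the architecture matrix is rank deficient, and Lemma \ref{lemma:1} finishes --- no case split on $[\mathrm{r},\mathrm{X},\mathrm{Y},\mathrm{Z}]$, no mixed coefficients, no exceptional cases. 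Your version instead invokes all six coefficients and a case distinction, and in the generic branch terminates on designs with $\mathbf{rY}=d\mathbf{Y}$, $\mathbf{rZ}=b\mathbf{Z}$, $\mathbf{rX}=e\mathbf{r}+f\mathbf{X}$, where the mixed relations and the normalization force $b=d=f\in\{0,1\}$; these designs are architecturally singular not by rank deficiency but only through the exceptional coincidence cases. This extra labour does buy something real: it covers precisely the configurations the paper's chain passes over. The paper's step from Eq.\ (\ref{item1}) and Eq.\ (\ref{bracket:4})-left to $\mathbf{X}\in\mathrm{span}\{\mathbf{r},\mathbf{Y},\mathbf{Z}\}$ tacitly requires the $\mathbf{Y}$-coefficient of $\mathbf{rZ}$ to be nonzero; in the branch where it vanishes (your $\mathbf{rZ}=b\mathbf{Z}$) the architecture matrix can have full rank, and it is only the exceptional cases that rescue the statement.

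Two points in your sketch nevertheless need repair. First, ``more than two platform points coincide'' is not by itself one of the recorded architectural singularities: the exceptional cases require either four coincident platform points or three coincident platform points together with coincident base points of the two remaining legs. You must check that every branch forces one of these two configurations; it does (e.g.\ for $b=d=f=1$ one gets $m_2=m_3=m_4$ and then, from the relation $x_j(r_j-1)=y_j(r_j-1)=z_j(r_j-1)=0$, either $r_5=1$ or $M_5=M_1$), but this has to be stated, since $m_i=m_j=m_k$ alone does not suffice. Second, and more substantively, in the degenerate branch $[\mathrm{r},\mathrm{X},\mathrm{Y},\mathrm{Z}]=0$ your plan to argue ``directly from the six pure-square conditions'' cannot work as announced: if $\mathbf{X}$ is proportional to $\mathbf{r}$ (which is compatible with the normalization), every pure-square bracket containing both $\mathrm{r}$ and $\mathrm{X}$ vanishes identically and yields no information on $\mathbf{rX}$. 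There you must fall back on a mixed-term relation, e.g.\ $[\mathrm{r},\mathrm{X},\mathrm{Z},\mathrm{rY}]+[\mathrm{r},\mathrm{Y},\mathrm{Z},\mathrm{rX}]=0$, whose first bracket vanishes under $\mathbf{X}=\rho\mathbf{r}$ and which therefore forces $[\mathrm{r},\mathrm{Y},\mathrm{Z},\mathrm{rX}]=0$, pushing $\mathbf{rX}$ into the common plane and completing the rank-deficiency argument.
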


\begin{proof}
Eq.\ (\ref{undesired:2}), independently of the position variables, gives $\det\left( \mathbf{S}_{\ 1,2}^{\ 5,2}\right)=\det\left(\mathbf{S}_{\ 1,2}^{\ 6,3} \right)=0$. Now, using \emph{Laplace expansion by minors} for $\det\left( \mathbf{S}_{\ 1,2}^{\ 5,2}\right)$ and $\det\left(\mathbf{S}_{\ 1,2}^{\ 6,3} \right)$ one finds:
\begin{equation}
[\mathrm{r}, \mathrm{Y}, \mathrm{Z}, \mathrm{rZ}]=[\mathrm{r}, \mathrm{Y}, \mathrm{Z}, \mathrm{rY}]=0, 
\label{bracket:3}
\end{equation}
\begin{equation}
[\mathrm{r}, \mathrm{X}, \mathrm{Z}, \mathrm{rZ}]=[\mathrm{r}, \mathrm{X}, \mathrm{Z}, \mathrm{rX}]=0.
\label{bracket:4}
\end{equation}
Now, it is possible to deduce the following:
\begin{itemize}
\item[1.] From Eq.\ (\ref{bracket:3})-left:
\begin{equation}
\mathbf{rZ}\in \mathrm{span}\{ \mathbf{r} , \mathbf{Y} , \mathbf{Z} \}, 
\label{item1}
\end{equation}
as $\mathbf{r}$, $\mathbf{Y}$ and $\mathbf{Z}$ are obviously linearly independent.
\item[2.] From Eq.\ (\ref{bracket:3})-right one derives:
\begin{equation}
\mathbf{rY}\in \mathrm{span}\{ \mathbf{r} , \mathbf{Y} , \mathbf{Z} \}.
\label{item2}
\end{equation}
\item[3.] 
If we replace in Eq.\ (\ref{bracket:4})-left the expression $\mathbf{rZ}$ by the linear combination resulting from  Eq.\ (\ref{item1}) we get: 
\begin{equation}
\mathbf{X}\in \mathrm{span}\{ \mathbf{r} , \mathbf{Y} , \mathbf{Z} \}.
\label{item3}
\end{equation}
\item[4.] From Eq.\ (\ref{bracket:4})-right and Eq.\ (\ref{item3}) one finds:
\begin{equation}
\mathbf{rX}\in \mathrm{span}\{ \mathbf{r} , \mathbf{Y} , \mathbf{Z} \}.
\label{item4}
\end{equation}
\end{itemize}
Now, using Eqs.\ (\ref{item1}--\ref{item4}), 4 out of 7 columns of the \emph{architecture matrix} are linearly dependent and thus rank deficient: 
\begin{equation}
\mathrm{Rank}(\mathbf{r}, \mathbf{X}, \mathbf{Y}, \mathbf{Z}, \mathbf{r}\mathbf{X}, \mathbf{rY}, \mathbf{rZ})<4.
\label{architecture1}
\end{equation}
\end{proof}

\section{Linear in $u$, $v$ and $w$}
\label{sec:3}

In this section we determine all non-architecturally singular designs where the \emph{singularity polynomial} $det(\mathbf{S})=0$ 
is only linear in orientation variables. 
As in Section \ref{sec:2} we distinguish between linear pentapods with planar and non-planar bases.

\subsection{Planar case}
\label{sec:3:1}

Under the planar condition ($z_{4}=z_{5}=0$) the \emph{undesired polynomial} is:
\begin{equation*}
\left[\det\left( \mathbf{S}_{\ 1,3}^{\ 3,7}\right) + \det\left( \mathbf{S}_{\ 1,3}^{\ 4,6}\right) \right] vw + \left[ \det\left( \mathbf{S}_{\ 1,3}^{\ 2,7}\right) - \det\left( \mathbf{S}_{\ 1,3}^{\ 4,5}\right) \right] uw + 
\end{equation*}
\begin{equation}
\det\left( \mathbf{S}_{\ 1,3}^{\ 2,6}\right)uv -\det\left( \mathbf{S}_{\ 1,3}^{\ 4,7}\right) {w}^{2}=0.
\label{undesired:3}
\end{equation}
\begin{theorem}\label{thm:orientation}
A non-architecturally singular linear pentapod with a planar base can only have a "singularity polynomial", 
which is linear in orientation variables, in the following cases:
\begin{enumerate}
\item
$M_2$, $M_3$, $M_4$, $M_5$ are collinear, 
\item
$m_1=m_i$ and  $M_j$, $M_k$, $M_l$ are collinear with pairwise distinct $i,j,k,l\in\left\{2,3,4,5\right\}$,
\item
 $m_1=m_i=m_j$ with pairwise distinct $i,j\in\left\{2,3,4,5\right\}$.
\end{enumerate}
\label{theorem:3}
\end{theorem}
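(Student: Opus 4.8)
The plan is to recast the requirement that $\det(\mathbf{S})$ be linear in $u,v,w$ as the demand that the \emph{undesired polynomial} of Eq.~(\ref{undesired:3}) — the part of $\det(\mathbf{S})$ quadratic in the orientation variables — vanish \emph{identically}. Each of its coefficients is a sum of the $5\times5$ minors $\det(\mathbf{S}_{1,3}^{\,j,k})$, and the only position dependence inside such a minor lives in the single row $(0,p_x,p_y,p_z,0,0,0)$; so I would first Laplace-expand every coefficient along that row, writing it as a position-linear combination of $4\times4$ brackets, and then set each bracket coefficient to zero, the $p_x,p_y,p_z$ being free. The decisive simplification I would exploit is that the frame of Lemma~\ref{lemma:2} puts $M_1,M_2,M_3$ into the plane $z=0$, which for a planar base \emph{is} the base plane, so $z_4=z_5=0$ as well and hence $\mathbf{Z}=\mathbf{rZ}=\mathbf{0}$ as columns over the legs $2,\dots,5$. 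Every bracket containing $\mathrm{Z}$ or $\mathrm{rZ}$ then drops out automatically: the coefficient of $uv$ vanishes on its own, those of $uw$ and $vw$ collapse onto the $w^2$-coefficient, and the entire system reduces to
\[
[\mathrm{r},\mathrm{X},\mathrm{rX},\mathrm{rY}]=0,\qquad [\mathrm{r},\mathrm{Y},\mathrm{rX},\mathrm{rY}]=0 .
\]

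The heart of the argument is then a dichotomy on the triple $\mathbf{r},\mathbf{rX},\mathbf{rY}$ that both brackets share. If these three vectors are linearly \emph{independent}, the two conditions say precisely that $\mathbf{X},\mathbf{Y}\in\mathrm{span}\{\mathbf{r},\mathbf{rX},\mathbf{rY}\}$; together with $\mathbf{Z}=\mathbf{rZ}=\mathbf{0}$ this forces the \emph{architecture matrix} to have rank at most three, so by Lemma~\ref{lemma:1} the pentapod is architecturally singular and is excluded. Hence for a non-architecturally singular design the brackets can only be killed by $\mathbf{r},\mathbf{rX},\mathbf{rY}$ being linearly \emph{dependent}, in which case both conditions hold automatically and impose nothing further on $\mathbf{X},\mathbf{Y}$.

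It then remains to read off geometrically when $\mathbf{r},\mathbf{rX},\mathbf{rY}$ are dependent. I would use that the $i$-th row of the $4\times3$ matrix $[\mathbf{r},\mathbf{rX},\mathbf{rY}]$ is $r_i\,(1,x_i,y_i)$, i.e.\ $[\mathbf{r},\mathbf{rX},\mathbf{rY}]=\mathrm{diag}(\mathbf{r})\,B$ with $B$ the matrix of rows $(1,x_i,y_i)$; its rank equals that of $B$ restricted to the indices $i$ with $r_i\neq0$, so rank $\le 2$ means exactly that the base points $M_i$ with $r_i\neq0$ are collinear. Recalling the normalization $r_2=1$ (so the vanishing $r_i$ lie among $r_3,r_4,r_5$, each meaning $m_1=m_i$), this yields a clean trichotomy by the number of vanishing $r_i$: none forces $M_2,M_3,M_4,M_5$ collinear (Case~1); one gives $m_1=m_i$ together with the remaining three base points collinear (Case~2); two give $m_1=m_i=m_j$ with no further base restriction (Case~3). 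The leftover possibility, three vanishing $r_i$, collapses four platform points onto $m_1$ and is one of the architecturally singular exceptional designs, hence discarded. The converse is immediate: in each of the three cases $\mathbf{r},\mathbf{rX},\mathbf{rY}$ is dependent, both brackets vanish, and the singularity polynomial becomes linear in $u,v,w$.

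I expect the main obstacle to be the bookkeeping of the third step: converting ``rank$[\mathbf{r},\mathbf{rX},\mathbf{rY}]\le2$'' into the clean geometric alternatives, and — more delicately — deciding at each extreme of the case split whether the design has silently become architecturally singular. In particular the boundary between Case~3 and the discarded four-fold coincidence must be handled carefully, and the classification of architecturally singular linear pentapods (as in the remark following Theorem~\ref{theorem:1}) should be invoked to confirm that all three surviving families genuinely contain non-architecturally singular members rather than being vacuous.
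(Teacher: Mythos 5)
Your proposal is correct and follows essentially the same route as the paper's proof: reduce the undesired polynomial to the two bracket conditions $[\mathrm{r},\mathrm{X},\mathrm{rX},\mathrm{rY}]=[\mathrm{r},\mathrm{Y},\mathrm{rX},\mathrm{rY}]=0$, split on whether $\mathbf{r},\mathbf{rX},\mathbf{rY}$ are linearly dependent (independence forces all seven columns into one plane, hence architectural singularity by Lemma \ref{lemma:3}), and then classify the dependent case by how many $r_i$ vanish, discarding the four-fold platform coincidence as architecturally singular. Your $\mathrm{diag}(\mathbf{r})B$ factorization is a slightly cleaner rendering of the paper's step $\mathbf{r}=\alpha\mathbf{rX}+\beta\mathbf{rY}$, and your explicit check that the $uv$, $uw$, $vw$ coefficients collapse onto the $w^2$ conditions is a detail the paper leaves implicit, but neither changes the substance of the argument.
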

\begin{proof}
Eq.\ (\ref{undesired:3}), independently of the orientations variables, gives $\det\left( \mathbf{S}_{\ 1,3}^{\ 4,7}\right)=0$. By resorting to the literature of \emph{brackets}, $\det\left( \mathbf{S}_{\ 1,3}^{\ 4,7}\right)=0$ if and only if the following holds:
\begin{equation}
[\mathrm{r}, \mathrm{Y}, \mathrm{rX}, \mathrm{rY}]=[\mathrm{r}, \mathrm{X}, \mathrm{rX}, \mathrm{rY}]=0.
\label{bracket:5}
\end{equation}
Now, name the plane characterized by the points $\mathrm{r}$, $\mathrm{rX}$ and $\mathrm{rY}$ as $\mathcal{P}_{1}$. If the points $\mathrm{r}$, $\mathrm{rX}$ and $\mathrm{rY}$ are not co-linear then the plane $\mathcal{P}_{1}$ is defined \emph{uniquely} and hence by Eq.\ (\ref{bracket:5}) $\mathrm{X}$ and $\mathrm{Y}$ are also on $\mathcal{P}_{1}$ which by Lemma \ref{lemma:3} results in a rank deficiency of the \emph{architecture matrix} (cf. Fig.\ \ref{fig:6}-left).

\begin{figure}[t!]
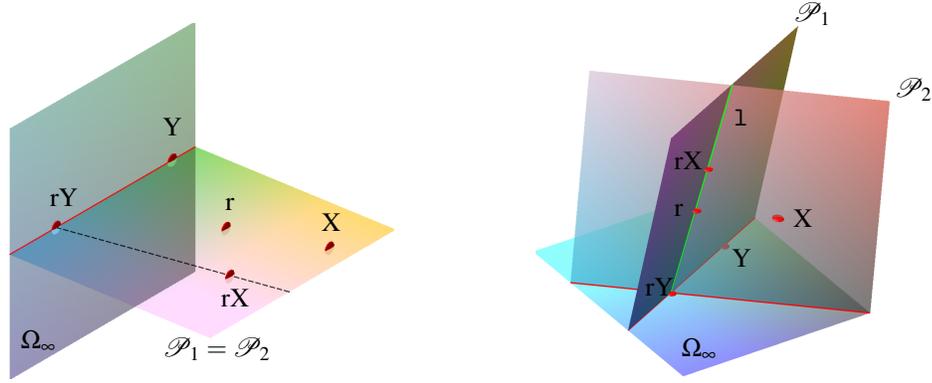
 
\begin{center}   
  \begin{overpic}[height=48mm]{X}
	\put(17,42){$\mathrm{rY}$}
   \put(44,59){$\mathrm{Y}$}
   \put(58,18){$\mathrm{rX}$}
   \put(59,41){$\mathrm{r}$}
   \put(82,36){$\mathrm{X}$}
   \put(10,8){$\Omega_{\infty}$}
   \put(44,6){$\mathcal{P}_{1}=\mathcal{P}_{2}$}
  \end{overpic}
	\hfill
	\begin{overpic}[height=48mm]{Y}
	\put(31,24){$\mathrm{rY}$}
    \put(39,59){$\mathrm{rX}$}
    \put(55,32){$\mathrm{Y}$}
    \put(39,46){$\mathrm{r}$}
    \put(72,43){$\mathrm{X}$}
    \put(41,7){$\Omega_{\infty}$}
    \put(72,100){$\mathcal{P}_{1}$}
    \put(100,79){$\mathcal{P}_{2}$}
    \put(55.5,71){$\mathtt{l}$}
  \end{overpic}
	\caption{
	Geometric interpretation of the conditions yielding a \emph{singularity polynomial}, 
	which is linear in orientation variables for a linear pentapod with a planar base:
	architecturally singular case (left) and the non-architecturally singular case (right).}
	\label{fig:6}
\end{center}
\end{figure}     

On the other hand if the points $\mathrm{r}$, $\mathrm{rX}$ and $\mathrm{rY}$ are co-linear then there is the possibility of having the points $\mathrm{X}$ and $\mathrm{Y}$ on two different planes, namely $\mathcal{P}_{1}$ and $\mathcal{P}_{2}$ as depicted in Fig.\ \ref{fig:6}-right, which does not necessarily lead to an \emph{architectural singularity}. 
Under this assumption, we get $\mathbf{r}\in\mathrm{span}\{\mathbf{rX}, \mathbf{rY}\}$; i.e.\
\begin{equation}
\mathbf{r}=\alpha .\mathbf{rX} + \beta .\mathbf{rY}, 
\label{equation:2}
\end{equation}
where $\alpha$ and $\beta$ are real numbers with $(\alpha, \beta)\neq (0,0)$. Now having Eq.\ (\ref{equation:2}) in mind, 
the following possibilities arise (cf. Fig.\ \ref{fig:7}):
\begin{itemize}
\item[1.] $\forall\ i\in \{2,...,5\}, r_{i}\ \neq\ 0$. This yields: 
\begin{equation}
\mathbf{1}=\alpha.\mathbf{X}+\beta.\mathbf{Y}.
\label{line:point}
\end{equation}
Geometrically in this case, the point $\mathbf{1}=(1,1,1,1)$ should always be on the line $\mathtt{l}$ defined by the two points $\mathrm{X}$ and $\mathrm{Y}$. Moreover Eq.\ (\ref{line:point}) gives:
\begin{equation*}
 \Bigg\{ \begin{array}{ccc}
  x_{2}=\frac{1}{\alpha} & \ &  \\
  \ & \ & \ \\
  \alpha x_{i} +\beta y_{i} = 1 & \ & \textrm{for i}> 2, 
  \end{array}
\end{equation*} 
which means the base points $M_2$, $M_3$, $M_4$, $M_5$, are collinear.
\item[2.] $\exists !\ i \in\{3,4,5\}\ \text{such that}\ r_{i}=0$. Geometrically this means that one of the points $(1,0,1,1)^{T}$,$(1,1,0,1)^{T}$ or $(1,1,1,0)^{T}$ should be on $\mathtt{l}$. Naturally this yields $m_1=m_i$ and  $M_j$ and $M_k$ are collinear with pairwise distinct $i,j,k\in\left\{3,4,5\right\}$.
\item[3.] $\exists\ i\ \text{and}\ j \in\{3,4,5\},\ \text{where}\ i\neq j\ \text{such that}\ r_{i}=r_{j}=0 $. Geometrically this means that only one of the points $(1,0,0,1)^{T}$,$(1,0,0,1)^{T}$ or $(1,0,1,0)^{T}$ can be on $\mathtt{l}$ which yields $m_1=m_i=m_j$ with pairwise distinct $i,j\in\left\{3,4,5\right\}$.
\end{itemize}
\end{proof}

\begin{figure}[t!] 
\begin{center}   
  \begin{overpic}[height=25.7mm]{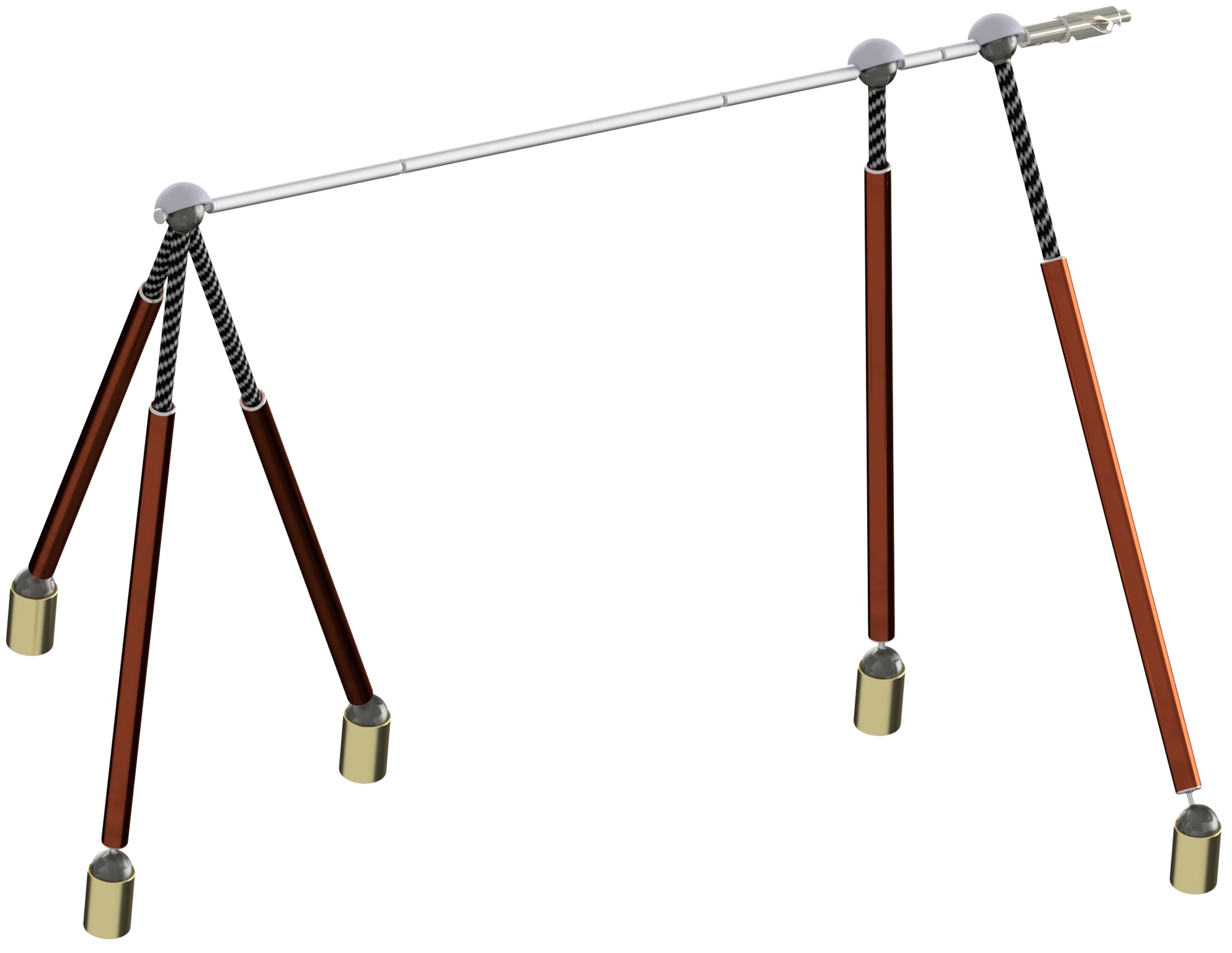}
  \end{overpic}
	\hfill
	\begin{overpic}[height=25.7mm]{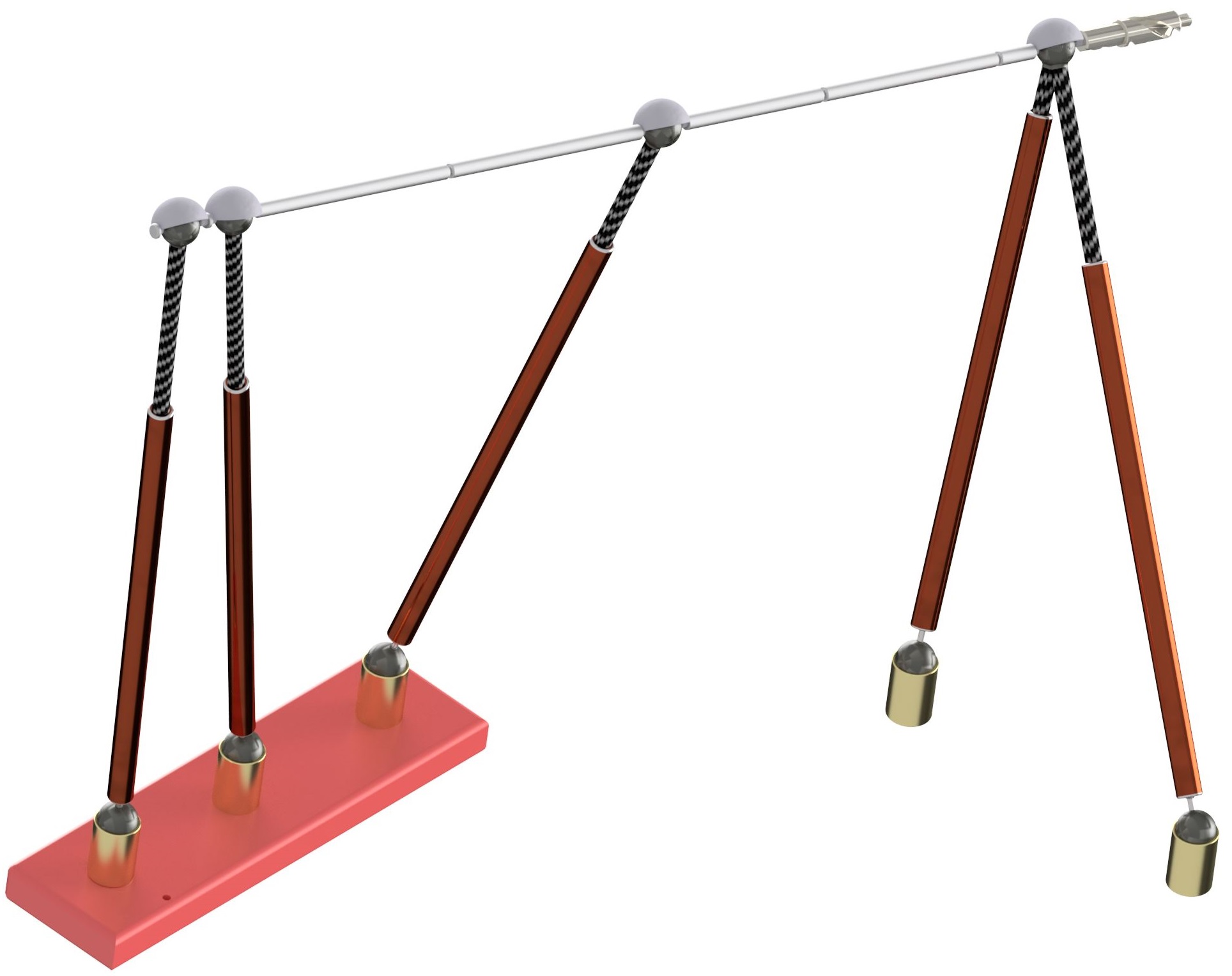}
  \end{overpic}
  \hfill
	\begin{overpic}[height=25.7mm]{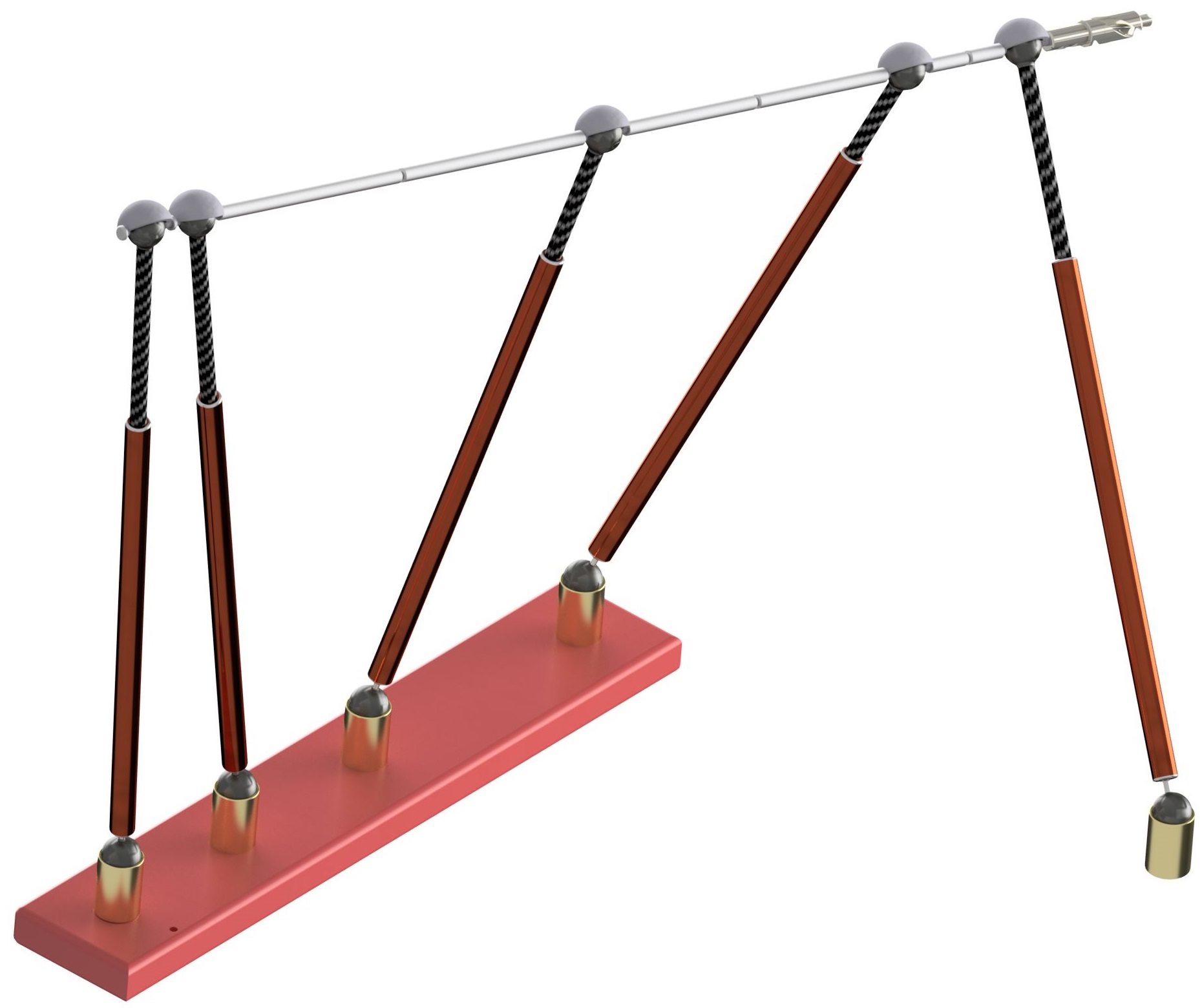}
  \end{overpic}
	\caption{
	Three possible designs mentioned in Theorem \ref{theorem:3}. Note that in the middle figure three base points are co-linear while in the right figure four base points are located on a line. The co-linear base points are indicated by red beams.}
	\label{fig:7}
\end{center}
\end{figure}  

\begin{remark}
Note that it is impossible to have the point $(1,0,0,0)^{T}$ on the line $\mathtt{l}$, since this would mean that 
the four platform anchor points should be coincided, which yields an "architecturally singular manipulator". For the additional conditions on the  designs 1--3 of Theorem \ref{thm:orientation} rendering the manipulator architecture singular we refer to the list given in \cite[Corollary 1]{nawratil2015self}. \hfill $\diamond$
\end{remark}

\subsection{Non-planar case}
\label{sec:3:2}

Based on the desired non-planarity condition ($z_{4}\neq 0$ or $z_{5}\neq 0$) and linearity in orientation variables the \emph{undesired polynomial} is as follows:
\begin{equation*}
{\det} \left( \mathbf{S}_{\ 1,3}^{\ 2,5} \right) {u}^{2}+{\det}
 \left( \mathbf{S}_{\ 1,3}^{\ 3,6} \right) {v}^{2}+{\det} \left( \mathbf{S}_{\ 1,3}^{\ 4,7} \right) {w}^{2}- \left[ {\det} \left( \mathbf{S}_{\ 1,3}^{\ 3,7}\right) 
+ {\det} \left( \mathbf{S}_{\ 1,3}^{\ 4,6} \right)  \right] vw \ 
\end{equation*}
\begin{equation}
- \left[ {\det} \left( \mathbf{S}_{\ 1,3}^{\ 2,6} \right) +{\det} \left( \mathbf{S}_{\ 1,3}^{\ 3,5} \right)  \right] uv+ \left[ {\det} \left( \mathbf{S}_{\ 1,3}^{\ 2,7} \right) +{\det} \left( \mathbf{S}_{\ 1,3}^{\ 4,5} \right)  \right] uw = 0.
\label{undesired:4}
\end{equation}

\begin{theorem}
Non-architecturally singular linear pentapods with a non-planar base possessing a \emph{singularity polynomial}, 
which is linear in orientation variables, do not exist. 
\end{theorem}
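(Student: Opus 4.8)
The plan is to run the argument of Theorem~\ref{theorem:2} with the two variable groups interchanged: I read Eq.~(\ref{undesired:4}) as an identity in $u,v,w$ that must hold for \emph{every} admissible position, so each of its six coefficients has to vanish identically in $p_x,p_y,p_z$. First I would treat the three ``diagonal'' coefficients $\det(\mathbf{S}_{\,1,3}^{\,2,5})$, $\det(\mathbf{S}_{\,1,3}^{\,3,6})$, $\det(\mathbf{S}_{\,1,3}^{\,4,7})$. Expanding each by minors along the position row (row $2$ of $\mathbf{S}$) and separating the powers of $p_x,p_y,p_z$ produces the six bracket conditions
\begin{equation*}
[\mathrm{r},\mathrm{Y},\mathrm{rY},\mathrm{rZ}]=[\mathrm{r},\mathrm{Z},\mathrm{rY},\mathrm{rZ}]=0,\qquad [\mathrm{r},\mathrm{X},\mathrm{rX},\mathrm{rZ}]=[\mathrm{r},\mathrm{Z},\mathrm{rX},\mathrm{rZ}]=0,
\end{equation*}
\begin{equation*}
[\mathrm{r},\mathrm{X},\mathrm{rX},\mathrm{rY}]=[\mathrm{r},\mathrm{Y},\mathrm{rX},\mathrm{rY}]=0,
\end{equation*}
the last pair being exactly Eq.~(\ref{bracket:5}) from the planar case.

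In contrast to Theorem~\ref{theorem:2}, these six brackets do \emph{not} yet force rank deficiency: one can exhibit non-planar designs of full architecture rank that satisfy all six (they merely fail to make $\det(\mathbf{S})$ linear). To close the gap I would extract the remaining information from the three \emph{mixed} coefficients of Eq.~(\ref{undesired:4}); after cancelling the brackets already known to vanish, the coefficients of $uv$, $uw$, $vw$ collapse to the two independent relations
\begin{equation*}
[\mathrm{r},\mathrm{X},\mathrm{rY},\mathrm{rZ}]=[\mathrm{r},\mathrm{Z},\mathrm{rX},\mathrm{rY}]=-[\mathrm{r},\mathrm{Y},\mathrm{rX},\mathrm{rZ}].
\end{equation*}

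Assume now the pentapod is not architecturally singular, i.e.\ the seven points $\mathrm{r},\mathrm{X},\mathrm{Y},\mathrm{Z},\mathrm{rX},\mathrm{rY},\mathrm{rZ}$ are not coplanar in $\mathbb{PR}^3$ (Lemma~\ref{lemma:3}), and split on $W=\mathrm{span}\{\mathbf{r},\mathbf{rX},\mathbf{rY},\mathbf{rZ}\}$. If $\dim W=4$, the pairwise intersections of the three product planes $\langle\mathbf{r},\mathbf{rX},\mathbf{rY}\rangle$, $\langle\mathbf{r},\mathbf{rY},\mathbf{rZ}\rangle$, $\langle\mathbf{r},\mathbf{rX},\mathbf{rZ}\rangle$ combined with the six diagonal brackets force $\mathbf{X}\in\mathrm{span}\{\mathbf{r},\mathbf{rX}\}$, $\mathbf{Y}\in\mathrm{span}\{\mathbf{r},\mathbf{rY}\}$, $\mathbf{Z}\in\mathrm{span}\{\mathbf{r},\mathbf{rZ}\}$. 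Writing $\mathbf{X}=\lambda_X\mathbf{r}+\mu_X\mathbf{rX}$, $\mathbf{Y}=\lambda_Y\mathbf{r}+\mu_Y\mathbf{rY}$, $\mathbf{Z}=\lambda_Z\mathbf{r}+\mu_Z\mathbf{rZ}$ and substituting into the mixed relations gives $\mu_X=\mu_Y=\mu_Z=:\mu$, i.e.\ component-wise $(1-\mu r_j)\mathbf{M}_j=r_j\boldsymbol{\lambda}$ with $\boldsymbol{\lambda}=(\lambda_X,\lambda_Y,\lambda_Z)$. Evaluating this at the base triangle $M_1M_2M_3$ (where $r_2=1$ and $y_3\neq0$) forces $\boldsymbol{\lambda}=\mathbf{0}$ and $\mu=1$; the non-planarity $z_4\neq0$ then yields $r_4=1$, hence $m_2=m_3=m_4$, and the fifth leg leaves only $r_5=1$ (four coincident platform points) or $M_5=M_1$ (three coincident platform points with the remaining two base points coincident). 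Both are architecturally singular by \cite[Corollary~1]{nawratil2015self}, contradicting the assumption.

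If instead $\dim W\le 3$, the points $\mathrm{r},\mathrm{rX},\mathrm{rY},\mathrm{rZ}$ lie in a plane of $\mathbb{PR}^3$; as soon as one of the three product triples spans that plane, the associated diagonal brackets drag $\mathrm{X},\mathrm{Y},\mathrm{Z}$ into it, all seven points become coplanar, and Lemma~\ref{lemma:3} already yields architectural singularity. \emph{This is where the main difficulty sits}: the residual degenerate sub-cases, where a product triple is merely collinear, must be disposed of one at a time. Each such collinearity is an equation among the $r_j$ (a coincidence $r_i=r_j$ or a vanishing $r_i$), and I would show — just as the escape routes were handled in Theorem~\ref{theorem:3} — that under $z_4\neq0$ or $z_5\neq0$ it forces either collinear base points or coinciding platform points, landing once more in the classification of \cite[Corollary~1]{nawratil2015self}. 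The genuine obstacle is thus the bookkeeping of these degenerate sub-cases rather than any single determinant; the insufficiency of the six diagonal brackets is precisely why the mixed-term relations cannot be omitted.
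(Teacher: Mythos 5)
Your plan begins where the paper's proof begins---expanding the diagonal coefficients $\det(\mathbf{S}_{\ 1,3}^{\ 2,5})$, $\det(\mathbf{S}_{\ 1,3}^{\ 3,6})$, $\det(\mathbf{S}_{\ 1,3}^{\ 4,7})$ of Eq.~(\ref{undesired:4}) along the position row into bracket conditions---but then it parts ways with the paper, and on the decisive point your instinct is sound. The paper uses only four of these brackets (its Eqs.~(\ref{bracket:7}) and (\ref{bracket:8})) and concludes rank deficiency of the architecture matrix through the chain $\mathbf{rY},\mathbf{rX},\mathbf{Z},\mathbf{X}\in\mathrm{span}\{\mathbf{r},\mathbf{Y},\mathbf{rZ}\}$; it never touches the mixed coefficients. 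That chain is valid only generically: each step silently needs a nonzero coefficient in the previous step's linear combination. Your claim that the diagonal brackets alone cannot force architectural singularity is correct: take, e.g., $r_3=r_4=r_5=\lambda\notin\{0,1\}$ and $x_3=x_4=x_5$ with $M_3,M_4,M_5$ not collinear; then $\mathbf{rY}=\lambda\mathbf{Y}$, $\mathbf{rZ}=\lambda\mathbf{Z}$ and $\mathbf{rX}\in\mathrm{span}\{\mathbf{r},\mathbf{X}\}$, so all six diagonal brackets vanish identically, yet the architecture matrix has full rank and the design is not architecturally singular (three coincident platform points with $M_1\neq M_2$ is not among the exceptional cases). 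Hence the mixed coefficients you invoke are indispensable, and your resolution of the case $\dim W=4$, $W=\mathrm{span}\{\mathbf{r},\mathbf{rX},\mathbf{rY},\mathbf{rZ}\}$---intersecting the product planes to get $\mathbf{X}\in\mathrm{span}\{\mathbf{r},\mathbf{rX}\}$, $\mathbf{Y}\in\mathrm{span}\{\mathbf{r},\mathbf{rY}\}$, $\mathbf{Z}\in\mathrm{span}\{\mathbf{r},\mathbf{rZ}\}$, forcing $\mu_X=\mu_Y=\mu_Z$ from the mixed relations, then $\mu=1$, $\boldsymbol{\lambda}=\mathbf{0}$ from legs $2,3$, and coincident platform points from non-planarity---is correct step by step.

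What you have, however, proves the theorem only in that case; the branch $\dim W\le 3$ remains a plan, and it is a genuine gap rather than routine bookkeeping. In particular, your description of the residual sub-cases is too narrow: linear dependence of a triple such as $\mathrm{r},\mathrm{rX},\mathrm{rY}$ means that for every leg either $r_j=0$ or $a+bx_j+cy_j=0$ for a fixed $(a,b,c)\neq(0,0,0)$, so it entangles vanishing $r_j$'s with collinearity conditions on the base coordinates; it does not reduce to ``a coincidence $r_i=r_j$ or a vanishing $r_i$''. Each of these branches (including $\dim W\le 2$) must be shown to end either in rank deficiency (Lemma \ref{lemma:3}) or in the exceptional architecturally singular designs of \cite[Corollary 1]{nawratil2015self}, and until that case distinction is carried out your argument does not establish the nonexistence statement. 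It is worth recording that the paper's own proof founders on the sibling of this very issue---the counterexample above defeats its span chain---so the degenerate-case analysis you defer is exactly what a watertight proof of this theorem still requires.
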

\begin{proof}
Eq.\ (\ref{undesired:4}), independently of orientation variables, gives ${\det} \left( \mathbf{S}_{\ 1,3}^{\ 2,5} \right)={\det}\left( \mathbf{S}_{\ 1,3}^{\ 3,6} \right)={\det} \left( \mathbf{S}_{\ 1,3}^{\ 4,7} \right)=0$. By using \emph{Laplace expansion by minors} for ${\det} \left( \mathbf{S}_{\ 1,3}^{\ 2,5} \right)$, ${\det}
 \left( \mathbf{S}_{\ 1,3}^{\ 3,6} \right)$ and ${\det} \left( \mathbf{S}_{\ 1,3}^{\ 4,7} \right)$ one finds:
\begin{equation}
[\mathrm{r}, \mathrm{Y}, \mathrm{rY}, \mathrm{rZ}]=[\mathrm{r}, \mathrm{Y}, \mathrm{rX}, \mathrm{rY}]=0, 
\label{bracket:7}
\end{equation}
\begin{equation}
\ [\mathrm{r}, \mathrm{Z}, \mathrm{rX}, \mathrm{rZ}]=[\mathrm{r}, \mathrm{X}, \mathrm{rX}, \mathrm{rZ}]=0.
\label{bracket:8}
\end{equation} 
Now from Eq.\ (\ref{bracket:7}) and Eq.\ (\ref{bracket:8}) the following is argued: 
\begin{itemize}
\item[1.] From Eq.\ (\ref{bracket:7})-left:
\begin{equation}
\mathbf{rY}\in \mathrm{span}\{ \mathbf{r}, \mathbf{Y}, \mathbf{rZ}\}.
\label{it1}
\end{equation}
\item[2.] From Eq.\ (\ref{bracket:7})-right and Eq.\ (\ref{it1}):
\begin{equation}
\mathbf{rX}\in \mathrm{span}\{ \mathbf{r}, \mathbf{Y}, \mathbf{rZ}\}.
\label{it2}
\end{equation} 
\item[3.] From Eq.\ (\ref{bracket:8})-left and Eq.\ (\ref{it2}): 
\begin{equation}
\mathbf{Z}\in \mathrm{span}\{ \mathbf{r}, \mathbf{Y}, \mathbf{rZ}\}.
\label{it3}
\end{equation}
\item[4.] From Eq.\ (\ref{bracket:8})-right and Eq.\ (\ref{it2}):
\begin{equation}
\mathbf{X}\in \mathrm{span}\{ \mathbf{r}, \mathbf{Y}, \mathbf{rZ}\}.
\label{it4}
\end{equation}
\end{itemize}

Now, considering Eqs.\ (\ref{it1}--\ref{it4}) one would obtain
\begin{equation}
\mathrm{Rank}(\mathbf{r}, \mathbf{X}, \mathbf{Y}, \mathbf{Z}, \mathbf{r}\mathbf{X}, \mathbf{rY}, \mathbf{rZ})<4.
\end{equation}
which implies an \emph{architecturally singular manipulator}.
\end{proof}

\section{Quadratic}
\label{sec:4}

In this section we study linear pentapods where the \emph{singularity polynomial} is only quadratic in total. Unfortunately we are only able to report the following negative result:

\begin{theorem}
Non-architecturally singular linear pentapods possessing a \emph{singularity polynomial}, 
which is quadratic in pose variables, do not exist. 
\end{theorem}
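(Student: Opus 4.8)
The plan is to reduce the claim to the two linearity analyses already carried out, exploiting a structural feature of $\mathbf{S}$. The key observation is that in the matrix of Eq.\ (\ref{BorrasMatrix}) the second row has nonzero entries only in the columns carrying $p_x,p_y,p_z$, while the third row has nonzero entries only in the columns carrying $u,v,w$. Hence in the Leibniz expansion of $\det(\mathbf{S})$ the second row contributes exactly one position coordinate and the third row exactly one orientation coordinate to every term, so every monomial of $\det(\mathbf{S})$ has position-degree at least $1$ and orientation-degree at least $1$. Combined with the facts that $\det(\mathbf{S})$ is (separately) quadratic in the position variables, quadratic in the orientation variables, and cubic in total, each monomial has position- and orientation-degree in $\{1,2\}$ with total degree in $\{2,3\}$. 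Thus the degree-$3$ part is the disjoint union of the monomials of position-degree $2$ and those of orientation-degree $2$, and $\det(\mathbf{S})$ is quadratic in total if and only if its degree-$3$ part vanishes, i.e.\ if and only if it is \emph{simultaneously} linear in the position variables and linear in the orientation variables. It therefore suffices to show that a non-architecturally singular linear pentapod cannot have a singularity polynomial that is at once linear in position and linear in orientation.

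For a non-planar base this is immediate: linearity in the position variables already forces an architectural singularity by Theorem \ref{theorem:2}, so no such pentapod exists.

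It remains to treat the planar case, where $\mathbf{Z}=\mathbf{rZ}=\mathbf{0}$, so that by Lemma \ref{lemma:1} an architectural singularity is equivalent to $\mathrm{Rank}(\mathbf{r},\mathbf{X},\mathbf{Y},\mathbf{rX},\mathbf{rY})<4$. First I would collect the constraints from the two linearity requirements. Linearity in position gives, exactly as in Eq.\ (\ref{bracket2}), the brackets $[\mathrm{r},\mathrm{X},\mathrm{Y},\mathrm{rX}]=[\mathrm{r},\mathrm{X},\mathrm{Y},\mathrm{rY}]=0$, and linearity in orientation gives, exactly as in Eq.\ (\ref{bracket:5}), the brackets $[\mathrm{r},\mathrm{X},\mathrm{rX},\mathrm{rY}]=[\mathrm{r},\mathrm{Y},\mathrm{rX},\mathrm{rY}]=0$. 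These are four of the five maximal minors of the $4\times 5$ matrix $\mathbf{M}=(\mathbf{r},\mathbf{X},\mathbf{Y},\mathbf{rX},\mathbf{rY})$, namely those obtained by deleting the columns $\mathbf{X},\mathbf{Y},\mathbf{rX},\mathbf{rY}$; the only minor not assumed to vanish is $[\mathrm{X},\mathrm{Y},\mathrm{rX},\mathrm{rY}]$, the one obtained by deleting $\mathbf{r}$. By the Laplace (cofactor) syzygy the alternating vector of the five maximal minors lies in the kernel of $\mathbf{M}$. If $\mathbf{M}$ had rank $4$ its kernel would be one-dimensional, and since four of these minors vanish that kernel would be spanned by the vector supported only in the $\mathbf{r}$-coordinate; the kernel relation would then read $[\mathrm{X},\mathrm{Y},\mathrm{rX},\mathrm{rY}]\,\mathbf{r}=\mathbf{0}$ with nonzero scalar, forcing $\mathbf{r}=\mathbf{0}$ and contradicting the normalization $r_2=1$. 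Hence $\mathrm{Rank}(\mathbf{M})\le 3$, the architecture matrix is rank deficient, and the pentapod is architecturally singular.

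Putting the two cases together, every linear pentapod whose singularity polynomial is quadratic in total is architecturally singular, which is the desired statement. The only genuinely new ingredient is the structural reduction of the first paragraph, and I expect that to be the main obstacle: one must argue carefully that rows two and three force every monomial to have position- and orientation-degree at least one, so that ``quadratic in total'' collapses precisely to the conjunction of the two linearity conditions studied in Sections \ref{sec:2} and \ref{sec:3}. Once this is secured, the non-planar case is handed to Theorem \ref{theorem:2} and the planar case to the short minor/syzygy argument above.
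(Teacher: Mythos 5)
Your proof is correct, but it follows a genuinely different route from the paper's. The paper does not make your structural reduction: instead it writes out the full cubic part of $\det(\mathbf{S})$ from scratch in each of the planar and non-planar cases (Eqs.~(\ref{undesired:5}) and (\ref{undesired:6})), extracts vanishing coefficient minors, and converts them to brackets --- which turn out to be exactly the four brackets you obtain by combining Eq.~(\ref{bracket2}) with Eq.~(\ref{bracket:5}) in the planar case, and exactly the four brackets of Theorem~\ref{theorem:2} in the non-planar case. Your bigrading observation (row two of $\mathbf{S}$ forces position-degree exactly one, row three forces orientation-degree exactly one, so the cubic part splits disjointly into the bidegree-$(2,1)$ and bidegree-$(1,2)$ components, whence ``quadratic in total'' is equivalent to the conjunction of the two linearity conditions) is sound and lets you inherit these conditions from Sections~\ref{sec:2} and \ref{sec:3} without recomputation; in particular the non-planar case collapses to a citation of Theorem~\ref{theorem:2}, which the paper in effect re-proves. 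In the planar case your Cramer/Laplace syzygy argument --- four of the five maximal minors of $(\mathbf{r},\mathbf{X},\mathbf{Y},\mathbf{rX},\mathbf{rY})$ vanish, the syzygy forces $[\mathrm{X},\mathrm{Y},\mathrm{rX},\mathrm{rY}]\,\mathbf{r}=\mathbf{0}$, and $\mathbf{r}\neq\mathbf{0}$ kills the fifth minor --- is actually tighter than the paper's one-line geometric conclusion that $\mathrm{X}$ and $\mathrm{Y}$ lie ``on the plane characterized by $\mathrm{r}$, $\mathrm{rX}$, $\mathrm{rY}$'', since that phrasing tacitly assumes these three points span a plane; the degenerate collinear case (precisely the situation that produced the non-architecturally singular designs in Sections~\ref{sec:2:1} and \ref{sec:3:1}) is covered uniformly by your argument but glossed over in the paper. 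Two trivial repairs: Lemma~\ref{lemma:1} gives only the implication ``rank deficient $\Rightarrow$ architecturally singular'', not an equivalence (cf.\ the remark following it), but that is the only direction you use; and you should say explicitly that with $\mathbf{Z}=\mathbf{rZ}=\mathbf{0}$ the rank of the full architecture matrix equals that of your $4\times 5$ matrix $\mathbf{M}$, which is immediate.
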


\begin{proof}
We can separate the proof into two parts: planar case and non-planar case.

The \emph{undesired polynomial} in the planar case ($z_{4}=z_{5}=0$) of quadratic \emph{singularity polynomial} is:
\begin{equation*}
\left[{\det} \left( \mathbf{S}_{\ 1,2,3}^{\ 7,4,5} \right){p_{z}}^{2} - {\det} \left( \mathbf{S}_{\ 1,2,3}^{\ 2,4,7} \right) w p_{z}\right] u+ \left[ {\det} \left( \mathbf{S}_{\ 1,2,3}^{\ 3,4,7} \right) w p_{{z}} - {\det} \left( \mathbf{S}_{\ 1,2,3}^{\ 7,4,6} \right){p_{z}}^{2}\right] v\ + 
\end{equation*}
\begin{equation}
\left[ {\det} \left( \mathbf{S}_{\ 1,2,3}^{\ 4,2,7} \right) p_{x} - {\det} \left( \mathbf{S}_{\ 1,2,3}^{\ 4,3,7} \right) p_{y}\right] {w}^{2} + \left[{\det} \left( \mathbf{S}_{\ 1,2,3}^{\ 6,4,7} \right) p_{y} p_{z} - {\det} \left( \mathbf{S}_{\ 1,2,3}^{\ 5,4,7} \right) p_{x} p_{z}\right] w=0.
\label{undesired:5}
\end{equation}
It would be noteworthy that the above polynomial is in fact ${\det} \left( \mathbf{S}_{\ 1}^{\ 1} \right)$. In another word the cubic part of the \emph{singularity polynomial} is encoded in this sub-matrix.
Eq.\ (\ref{undesired:5}) implies ${\det} \left( \mathbf{S}_{\ 1,2,3}^{\ 2,4,7} \right)={\det} \left( \mathbf{S}_{\ 1,2,3}^{\ 3,4,7} \right)={\det} \left( \mathbf{S}_{\ 1,2,3}^{\ 6,4,7} \right)={\det} \left( \mathbf{S}_{\ 1,2,3}^{\ 5,4,7} \right)=0$, which upon transforming into brackets will give: 
\begin{equation}
[\mathrm{r}, \mathrm{Y}, \mathrm{rX}, \mathrm{rY}]=[\mathrm{r}, \mathrm{X}, \mathrm{rX}, \mathrm{rY}]=0,
\label{bracket:9}
\end{equation} 
\begin{equation}
[\mathrm{r}, \mathrm{X}, \mathrm{Y}, \mathrm{rY}]\ =\ [\mathrm{r}, \mathrm{X}, \mathrm{Y}, \mathrm{rX}]\ =0.
\label{bracket:10}
\end{equation} 
By resorting to the geometrical interpretation of brackets, Eqs.\ (\ref{bracket:9}, \ref{bracket:10}) show that the points $\mathrm{X}$ and $\mathrm{Y}$ are on a plane characterized by the points $\mathrm{r}$, $\mathrm{rX}$ and $\mathrm{rY}$ which results in an \emph{architectural singularity}.

Under the non-planar condition ($z_{4}\neq 0$ or $z_{5}\neq 0$) the \emph{undesired polynomial} is as follows:
\begin{equation*}
\Big(\Big[ {\det} \left( \mathbf{S}_{\ 1,2,3}^{\ 2,3,5} \right)u + {\det} \left( \mathbf{S}_{\ 1,2,3}^{\ 2,3,6} \right)v - {\det} \left( \mathbf{S}_{\ 1,2,3}^{\ 2,3,7} \right) w \Big] p_{{y}} + \Big[ {\det} \left( \mathbf{S}_{\ 1,2,3}^{\ 2,4,5} \right)u - {\det} \left( \mathbf{S}_{\ 1,2,3}^{\ 2,4,6} \right)v
\end{equation*}
\begin{equation*}
+ {\det} \left( \mathbf{S}_{\ 1,2,3}^{\ 2,4,7} \right) w \Big] p_{{z}} \Big)u + \Big( - \left[ {\det} \left( \mathbf{S}_{\ 1,2,3}^{\ 3,2,5} \right) u - {\det} \left( \mathbf{S}_{\ 1,2,3}^{\ 3,2,6} \right) v + {\det} \left( \mathbf{S}_{\ 1,2,3}^{\ 3,2,7} \right) w \right] p_{{x}} 
\end{equation*}
\begin{equation*}
+ \Big[{\det} \left( \mathbf{S}_{\ 1,2,3}^{\ 3,4,5} \right) u
-{\det} \left( \mathbf{S}_{\ 1,2,3}^{\ 3,4,6} \right) v + {\det} \left( \mathbf{S}_{\ 1,2,3}^{\ 3,4,7} \right) w \Big] p_{{z}} \Big) v + \Big( \Big[ {\det} \left( \mathbf{S}_{\ 1,2,3}^{\ 4,2,5} \right) u   
\end{equation*}
\begin{equation*} 
+ {\det} \left( \mathbf{S}_{\ 1,2,3}^{\ 4,2,6} \right) v - {\det} \left( \mathbf{S}_{\ 1,2,3}^{\ 4,2,7} \right) w \Big] p_{{x}}
+ \Big[ {\det} \left( \mathbf{S}_{\ 1,2,3}^{\ 4,3,5} \right) u - {\det} \left( \mathbf{S}_{\ 1,2,3}^{\ 4,3,6} \right) v + 
\end{equation*}
\begin{equation*} 
{\det} \left( \mathbf{S}_{\ 1,2,3}^{\ 4,3,7} \right) w \Big] p_{{y}} \Big) w  + \Big( \Big[ {\det} \left( \mathbf{S}_{\ 1,2,3}^{\ 5,2,6} \right)v- {\det} \left( \mathbf{S}_{\ 1,2,3}^{\ 5,2,7} \right) w \Big] p_{{x}} + \Big[ -{\det} \left( \mathbf{S}_{\ 1,2,3}^{\ 5,3,6} \right) v 
\end{equation*}
\begin{equation*}
+ {\det} \left( \mathbf{S}_{\ 1,2,3}^{\ 5,3,7} \right) w \Big] p_{{y}} + \Big[ {\det} \left( \mathbf{S}_{\ 1,2,3}^{\ 5,4,6} \right) v - {\det} \left( \mathbf{S}_{\ 1,2,3}^{\ 5,4,7} \right) w \Big] p_{{z}} \Big) p_{{x}}
- \Big( \Big[ {\det} \left( \mathbf{S}_{\ 1,2,3}^{\ 6,2,5} \right) u  
\end{equation*}
\begin{equation*} 
- {\det} \left( \mathbf{S}_{\ 1,2,3}^{\ 6,2,7} \right) w \Big] p_{{x}}+ \Big[-{\det} \left( \mathbf{S}_{\ 1,2,3}^{\ 6,3,5} \right) u + {\det} \left( \mathbf{S}_{\ 1,2,3}^{\ 6,3,7} \right) w \Big] p_{{y}}
+ \Big[ {\det} \left( \mathbf{S}_{\ 1,2,3}^{\ 6,4,5} \right) u 
\end{equation*}
\begin{equation*} 
- {\det} \left( \mathbf{S}_{\ 1,2,3}^{\ 6,4,7} \right) w \Big] p_{{z}} \Big) p_{{y}} + \Big( \Big[ {\det} \left( \mathbf{S}_{\ 1,2,3}^{\ 7,2,5} \right) u - {\det} \left( \mathbf{S}_{\ 1,2,3}^{\ 7,2,6} \right) v \Big]p_{{x}}
+ \Big[ -{\det} \left( \mathbf{S}_{\ 1,2,3}^{\ 7,3,5} \right) u  
\end{equation*}
\begin{equation}
+ {\det} \left( \mathbf{S}_{\ 1,2,3}^{\ 7,3,6} \right) v \Big]p_{{y}} + \Big[ {\det} \left( \mathbf{S}_{\ 1,2,3}^{\ 7,4,5} \right) u - {\det} \left( \mathbf{S}_{\ 1,2,3}^{\ 7,4,6} \right) v \Big]  p_{{z}} \Big) p_{{z}}=0.
\label{undesired:6}
\end{equation}
Note that the above polynomial is again ${\det} \left( \mathbf{S}_{\ 1}^{\ 1} \right)$ of Eq.\ (\ref{BorrasMatrix}). 
Eq.\ (\ref{undesired:6}), for all pose variables, gives ${\det} \left( \mathbf{S}_{\ 1,2,3}^{\ 6,2,5} \right)={\det} \left( \mathbf{S}_{\ 1,2,3}^{\ 6,3,5} \right)={\det} \left( \mathbf{S}_{\ 1,2,3}^{\ 7,2,5} \right)={\det} \left( \mathbf{S}_{\ 1,2,3}^{\ 6,3,6} \right)=0$, which once again upon transforming into brackets will be as follows: 
\begin{equation}
[\mathrm{r}, \mathrm{Y}, \mathrm{Z}, \mathrm{rZ}]=[\mathrm{r}, \mathrm{X}, \mathrm{Z}, \mathrm{rZ}]=0,
\label{bracket:11}
\end{equation} 
\begin{equation}
[\mathrm{r}, \mathrm{Y}, \mathrm{Z}, \mathrm{rY}]=[\mathrm{r}, \mathrm{X}, \mathrm{Z}, \mathrm{rX}]=0.
\label{bracket:12}
\end{equation} 
Again by resorting to Lemma \ref{lemma:3}, Eq.\ (\ref{bracket:11}) and Eq.\ (\ref{bracket:12}) we obtain $ \mathbf{X}, \mathbf{rX},\mathbf{rY},\mathbf{rZ}\in\{\mathbf{r}, \mathbf{Y}, \mathbf{Z}\}$, which naturally leads to an \emph{architectural singularity}.
\end{proof}

\begin{figure}[t!] 
\begin{center}   
  \begin{overpic}[height=45mm]{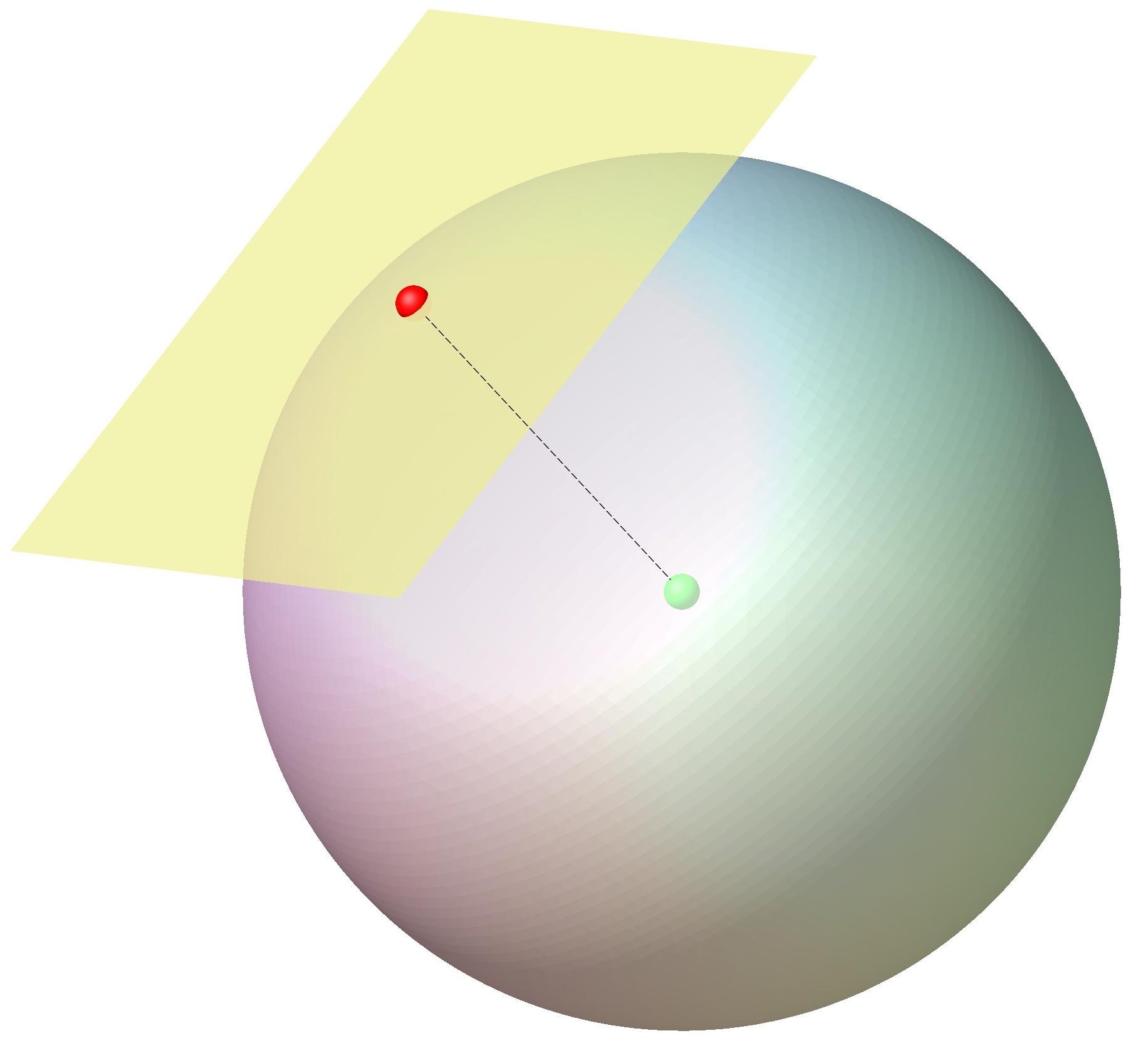}
  \end{overpic}
	\hfill
	\begin{overpic}[height=45mm]{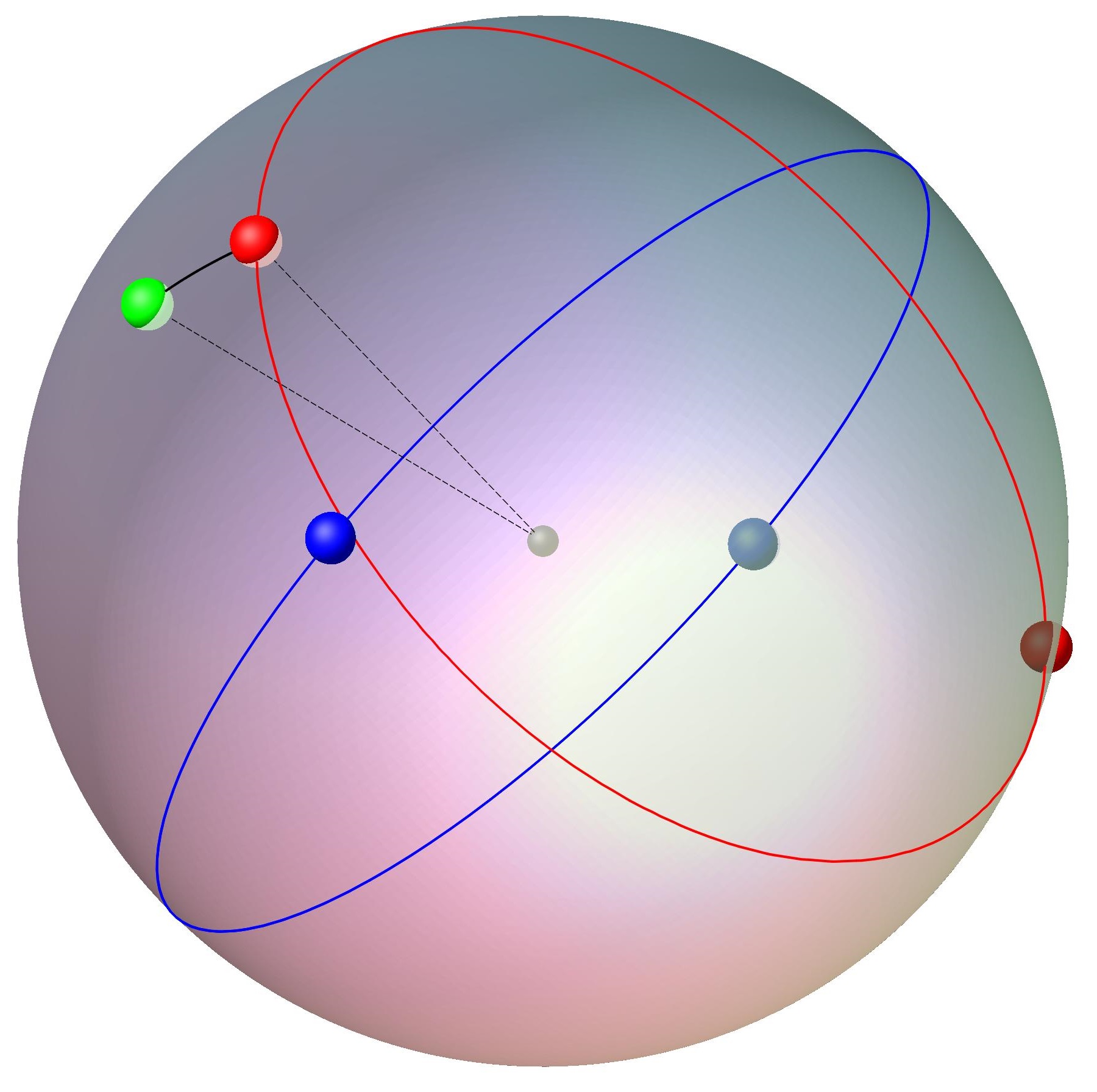}
  \end{overpic}
	\caption{
	Left: For fixed orientation the unique pedal point has coordinates  
	$(\frac{61}{33},\frac{38}{33},\frac{92}{33})\in\mathbb{R}^{3}$ which 
	has a distance of $1.21854359$ units from the given position. 
	Right: For fixed position the four pedal points are illustrated, where the one with coordinates 
	$(0.12661404,0.81506780,0.56536126)\in\mathbb{R}^{3}$ is closest to the 
	given orientation. The corresponding spherical distance equals $15.75049156^{\circ}$.}
	\label{fig:8}
\end{center}
\end{figure}


\section{Distance to singularity variety}
\label{sec:5}

In this section we compute \emph{singularity-free zones} for linear pentapods with a simple singularity variety studied in Section \ref{sec:2:1} and Section \ref{sec:3:1} respectively. 
We assume that the manipulator is always given in a non-singular pose $\mathfrak{G}=(g_1,\ldots ,g_6)\in\mathbb{R}^6$.

\subsection{\textbf{Linear in position variables}}
\label{sec:5:1}

The \emph{architecture matrix} of the linear pentapod used in the following examples is:
\begin{equation}
\mathbf{A}=\left( \begin {array}{ccccccc} 
1&-1/2&0&0&-1/2&0&0\\ 
2&1&2&0&2&4&0\\ 
4&-3&-1&0&-12&-4&0\\ 
6&-1&2&0&-6&12&0\end {array} \right), 
\label{Arch:num:1}
\end{equation}
where $\alpha=-2$ and $\beta=2$ in Eq.\ (\ref{res1}). Moreover we consider the non-singular pose $\mathfrak{G}=(\frac{1}{3},\frac{2}{3},\frac{2}{3},1,2,3)$.

\subsubsection{Fixed orientation case}
\label{sec:5:1:1}
We ask for the closest singular configuration $\mathfrak{O}$ having the same orientation $(g_1,g_2,g_3)$ as the given pose $\mathfrak{G}$. The distance to the singularity pose with 
respect to $(g_4,g_5,g_6)$ is computed according to the ordinary \emph{Euclidean} metric.
The \emph{singularity polynomial} is linear in position variables and under fixed orientation condition it will be a plane \emph{passing through the origin} in position space $\mathbb{R}^{3}$. Naturally, there will be only one pedal point (cf.\ Fig.\ \ref{fig:8}-left) and hence the number of solutions in this case will only be one.  
Moreover $\mathfrak{O}=(\frac{1}{3}, \frac{2}{3},\frac{2}{3},\frac{61}{33},\frac{38}{33},\frac{92}{33})$ is illustrated in Fig.\ \ref{fig:position}-left.

\begin{figure}[t!] 
\begin{center}   
  \begin{overpic}[height=60mm]{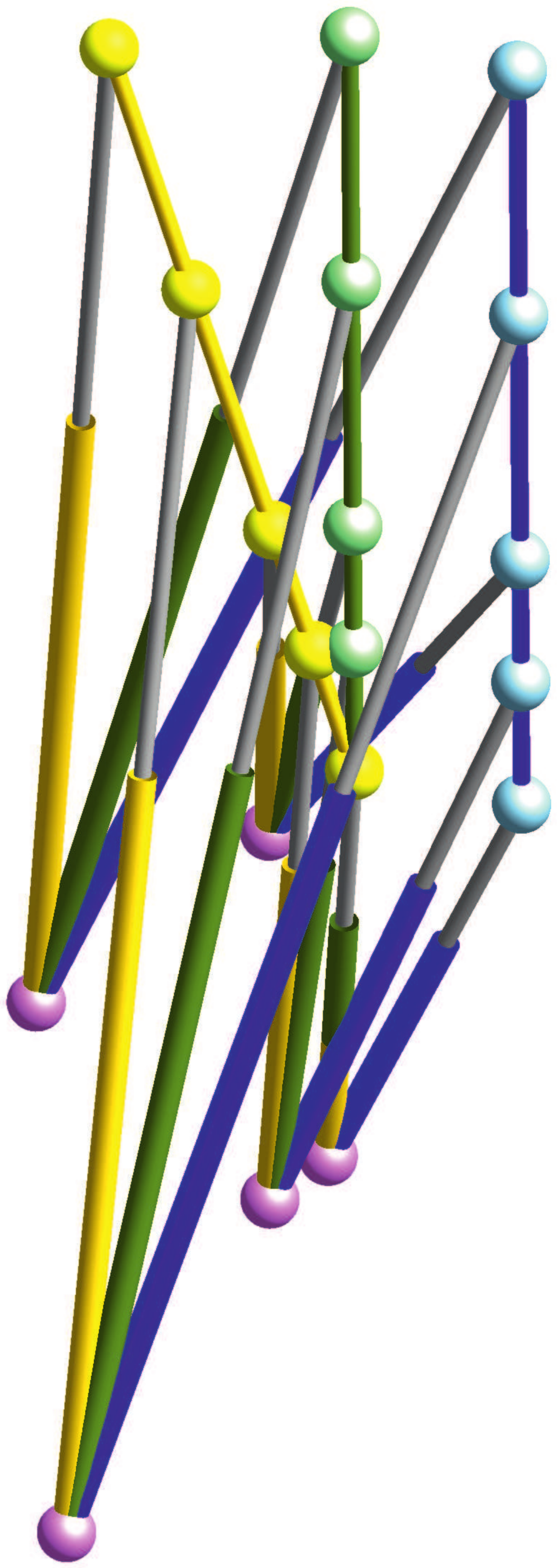}
	\begin{small}
	\put(10.5,89){$\mathfrak{P}$}
	\put(24,90){$\mathfrak{G}$}
	\put(34.5,88){$\mathfrak{O}$}
	\end{small}     
  \end{overpic} 
	\qquad\qquad\qquad
	 \begin{overpic}[height=60mm]{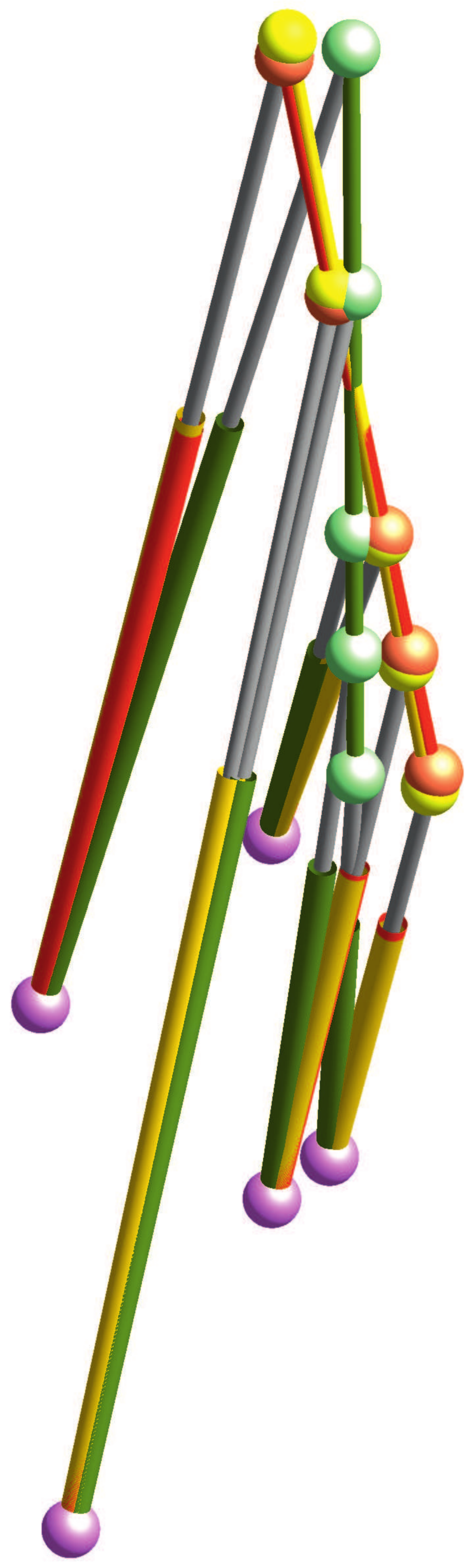}
	\begin{small}
	\put(29.5,45){$\mathfrak{N}$}
	\put(30.5,50.5){$\mathfrak{M}$}
	\put(24,89){$\mathfrak{G}$}
	\end{small}     
  \end{overpic} 
	\caption{
	Illustration of the pose $\mathfrak{G}$ (green) of the linear pentapod studied in Section \ref{sec:5:1}. 
	Left: The closest singular configurations in the position/orientation workspace 
	are given by the pose $\mathfrak{P}$ (yellow) and $\mathfrak{O}$ (blue), respectively. 
	Right: $\mathfrak{M}$ (red) is the closest singular pose under Euclidean motions of $\ell$ and
	$\mathfrak{N}$ (yellow) is the closest singularity under equiform motions  of $\ell$.}
	\label{fig:position}
\end{center}
\end{figure}

\subsubsection{Fixed position case}
\label{sec:5:1:2}
Now we ask for the closest singular configuration $\mathfrak{P}$, which has the same position $(g_4,g_5,g_6)$ as the given pose $\mathfrak{G}$. 
In this case the distance to the singularity curve with respect to $(g_1,g_2,g_3)$ is computed according to the \emph{Riemannian distance} $s$ on the sphere.

Under the fixed position the \emph{singularity polynomial} factors into two planes in $\mathbb{R}^{3}$:
\begin{equation}
w(A_{1} u + A_{2} v + A_{3} w + A_{4}) = 0, 
\label{two-circles}
\end{equation}
where the design variables are encoded in the coefficients $A_{i}$.

As a consequence the \emph{singular orientations} are obtained as the intersection of these two planes with the unit-sphere, which is given 
by the normalizing condition $\Gamma$. One of these planes always passes through the center of the sphere and hence the intersection is 
a great circle. For the second plane different cases can occur: 
\begin{enumerate}[(a)]
\item $A_{4}^{2}< A_{1}^{2}+A_{2}^{2}+A_{3}^{2}$: the plane intersects the sphere. 
\item $A_{4}^{2}= A_{1}^{2}+A_{2}^{2}+A_{3}^{2}$: the plane is tangent to the sphere. 
\item $A_{4}^{2}> A_{1}^{2}+A_{2}^{2}+A_{3}^{2}$: the plane doesn't intersect the sphere. 
\end{enumerate}
Depending on the case the total number of pedal points equals (a) $4$, (b) $3$ and (c) $2$, respectively.
In the example at hand there exist 4 pedal points, which are illustrated in Fig.\ \ref{fig:8}-right and are listed in Table \ref{table:1}.
Moreover $\mathfrak{P}= (0.1266,0.815,0.5653,1,2,3)$ is illustrated in Fig.\ \ref{fig:position}-left.

\begin{remark}
It should be noted that if the given non-singular orientation is normal to one of the planes intersecting the 
 unit-sphere, then there exists an infinite number of pedal points. \hfill $\diamond$
\end{remark}
 
\begin{table}[h!]
\begin{footnotesize}
\centering
\begin{tabular}{|c|c|c|c|c|}
\hline 
\phantom{1} & $u$ & $v$ & $w$ &  $s$  \\ \hline \hline 
1 &  0.12661404  & 0.81506780  & 0.56536126                   &  $15.75049156^{\circ}$   \\ \hline
2 &  0.44721359  & 0.89442719  & 0                             & $41.83152170^{\circ}$   \\ \hline
3 & -0.44721359 & -0.89442719 & 0                             & $138.25977700^{\circ}$   \\ \hline
4 & -0.60029825 & -0.34138359 & -0.72325600                  &  $155.56475890^{\circ}$   \\ \hline
\end{tabular}
\caption{The 4 real solutions in ascending order with respect to the spherical distance $s$ 
to the given orientation. 
}
\label{table:1}
\end{footnotesize}  
\end{table}

\subsubsection{General case}
\label{sec:5:1:3}
The general case deals with mixed (translational and rotational) DOFs, thus 
the question of a suitable distance function arises.  
As the configuration space $\mathcal{C}$ equals the space of oriented line-elements, 
we can adopt the object dependent metrics discussed in \cite{nawratil2017point} as follows:
\begin{equation}\label{distance}
d(\mathfrak{L}, \mathfrak{L^{'}})^{2}:=
\frac{1}{5}\sum_{j=1}^5{\|\mathbf{m}_j-\mathbf{m}^{'}_{j}\|}^{2},
\end{equation}
where $\mathfrak{L}$ and $\mathfrak{L^{'}}$ are two configurations and $\mathbf{m}_j$ and $\mathbf{m}^{'}_{j}$ denote
the coordinate vectors of the corresponding platform anchor points. 
This metric has already been used in \cite{rasoulzadeh2018rational} for the mechanical device at hand.

With respect to this metric $d$ we can compute the closest singular configuration $\mathfrak{M}$ to $\mathfrak{G}$ in the following way: 
We determine the set of pedal-points on the singularity variety with respect to $\mathfrak{G}$ as the variety 
$V(\tfrac{\partial L}{\partial u}, \tfrac{\partial L}{\partial v}, \tfrac{\partial L}{\partial w}, \tfrac{\partial L}{\partial p_{x}}, 
\tfrac{\partial L}{\partial p_{y}}, \tfrac{\partial L}{\partial p_{z}}, \tfrac{\partial L}{\partial \lambda_{1}}, 
\tfrac{\partial L}{\partial \lambda_{2}})$
where $\lambda_1$ and $\lambda_2$ are the Lagrange multipliers of the Lagrange equation: 
\begin{equation}
L(u,v,w,p_{x},p_{y}, p_{z}, \lambda_{1}, \lambda_{2}):=d(\mathfrak{M}, \mathfrak{G})^{2} + \lambda_{1}(u^2+v^2+w^2-1) + \lambda_{2}F.
\end{equation} 

Note that here $F$ is the singularity polynomial linear in position variables, obtained from Theorem \ref{theorem:1}.
Considering the example of the design parameters indicated in Eq.\ (\ref{Arch:num:1}), 
there are 10 solutions out of which 6 are real\footnote{It is unknown if examples with $10$ real solutions can exist.}.

After solving $\{ \tfrac{\partial L}{\partial p_{x}}, 
\tfrac{\partial L}{\partial p_{y}}, \tfrac{\partial L}{\partial p_{z}}\}$ for $\{ p_{x}, p_{y}, p_{z}\}$ and substituting the values obtained into the rest of the equations of the system, we can use the Gr\"obner basis method to solve the new system for the  remaining variables. Using the order $w>v>u>\lambda_{2}>\lambda_{1}$
one of the Gr\"obner basis generators solely depends on $\lambda_{1}$ while the rest depend on $\lambda_{1}$ and another orientation variable or $\lambda_2$, respectively. Based on this elimination technique the following table is obtained:  
 
\begin{table}[h!]
\begin{footnotesize}
\centering
\begin{tabular}{|c|c|c|c|c|c|c|}
\hline 
\phantom{1} & $u$ & $v$ & $w$ & $\lambda_1$ & $\lambda_2$ & $d$  \\ \hline \hline 

1 & 0.19954344     	&  0.75426388     &  0.62551450    	&  0.22471412    &   0.00242829      & 0.37163905   \\ \hline
2 & 0.44721359      &  0.89442721     &  0.00000000   	& -1.18154819    &   0.15475648      & 1.53723662   \\ \hline
3 & -0.44720571     & -0.89444123    	&  0.00001503    	& -8.09845180    &   0.00888318      & 4.02454431   \\ \hline
4 & -0.72878205     & -0.23306556    	& -0.64396839    	& -9.46430882    &   0.00812550      & 4.13597163   \\ \hline
5 & 0.50116745      &  0.86532314     &  0.00686193    	& -1.24444052    &  63.53263267      & 4.98948239   \\ \hline
6 & -0.44100968     & -0.89750916   	& -0.00554456    	& -8.10658006    & -11.11676392      & 6.20308215   \\ \hline
\end{tabular}
\caption{The 6 real solutions in ascending order with respect to the distance $d$ 
from $\mathfrak{G}$. The corresponding values of missing variables $p_x,p_y,p_z$ 
are obtained by substituting $u,v,w,\lambda_1,\lambda_2$  into the expressions for $p_x,p_y,p_z$. 
}
\label{table:2}
\end{footnotesize}
\end{table}
The first row in Table \ref{table:2} corresponds to the global minimizer $\mathfrak{M}$ 
illustrated in Fig.\ \ref{fig:position}-right, which has position variables
$p_x=1.42386285$, 
$p_y=1.69623807$ and 
$p_z=3.11364494$.


\subsubsection{General case without normalizing condition}
\label{sec:5:1:4}

We can simplify the problem by considering equiform transformations of the 
linear platform $\ell$, which is equivalent to the cancellation of the normalizing condition $\Gamma$.  
It turns out that for this reduced set of equations only $3$ pedal points exit over $\mathbb{C}$. 

For the example under consideration the computations can be done in the same way as in Section \ref{sec:5:1:4} 
with the sole difference that $\lambda_1$ is now absent. We end up with the following table:

\begin{table}[h!]
\begin{footnotesize}
\centering
\begin{tabular}{|c|c|c|c|c|c|c|}
\hline 
\phantom{1} & $u$ & $v$ & $w$ & $\lambda_2$ & $d$ & $\mu$  \\ \hline \hline 
1 & 0.22077150 &  0.77922849 &  0.65664594 &  0.00209764  & 0.35854952 	& 1.04265095 \\ \hline
2 & 0.33333333 &  0.66666666 &  0          &  0.04901408 	& 1.43604394  & 0.74535599 \\ \hline
3 & 0.36256185 &  0.63743814 &  0.01002046 & 26.26334956  & 4.95602764  & 0.73340227 \\ \hline
\end{tabular}
\caption{The 3 real solutions in ascending order with respect to the distance $d$ from $\mathfrak{G}$. 
The scaling factor of the corresponding 
equiform displacement of the platform is given by $\mu$.
The corresponding values of missing variables $p_x,p_y,p_z$ 
are obtained by substituting $u,v,w,\lambda_2$  into the expressions for $p_x,p_y,p_z$. 
}
\label{table:3}
\end{footnotesize}
\end{table}
The first row in Table \ref{table:3} corresponds to the global minimizer $\mathfrak{N}$ 
illustrated in Fig.\ \ref{fig:position}-right, which has position variables 
$p_x=1.36501824$, 
$p_y=1.63498176$ and 
$p_z=3.03249538$.

\begin{figure}[t!] 
\begin{center}   
  \begin{overpic}[height=45mm]{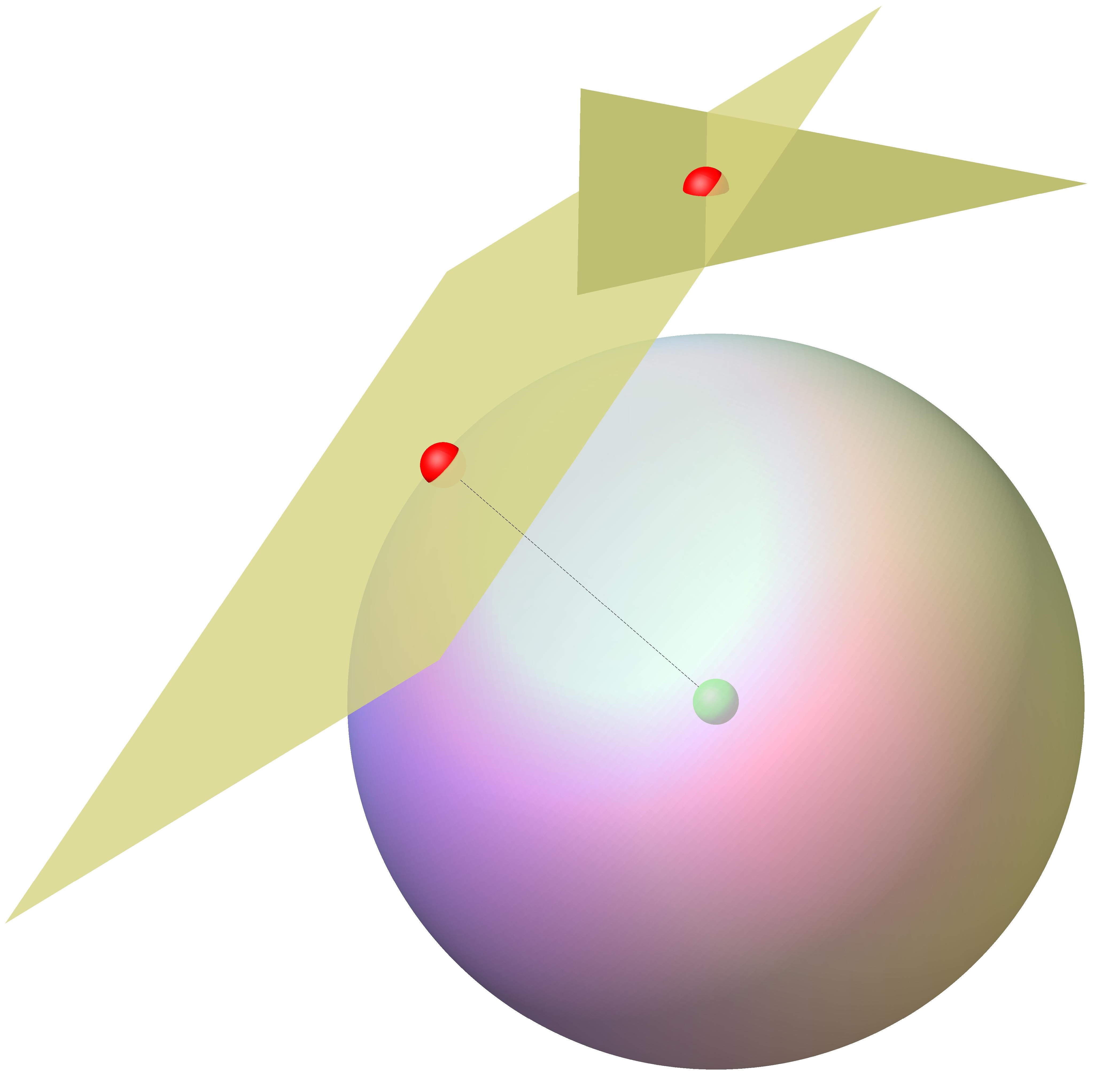}
  \end{overpic}
	\hfill
	\begin{overpic}[height=45mm]{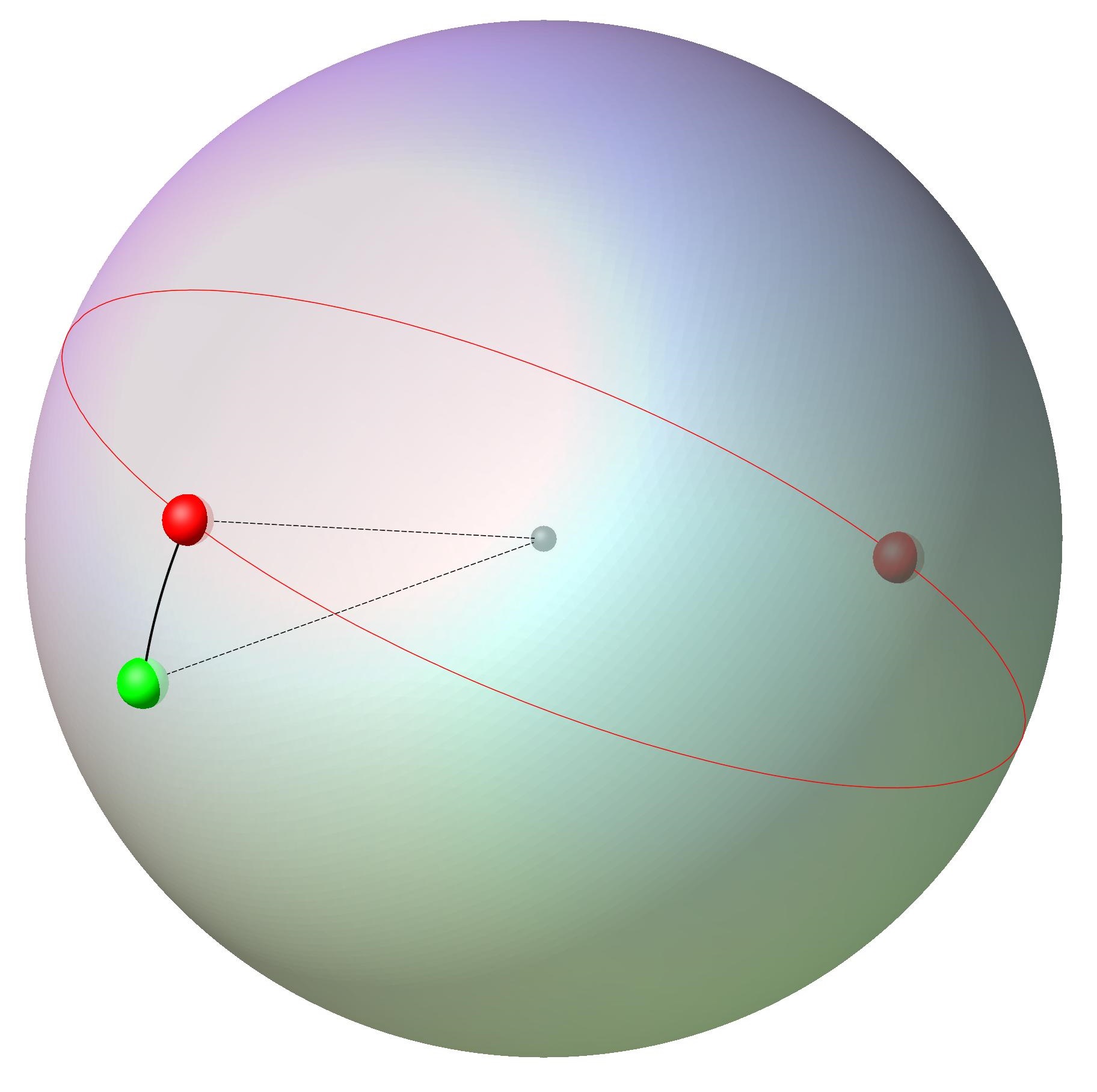}
  \end{overpic}
	\caption{
	Left: For fixed orientation there exits for each of the two 
	planes a unique pedal point. One pedal point has coordinates $(-8/17, 9/17, 12/17)$ and 
	a distance of $4.80196038$ units to the given position and the other pedal point has 
	coordinates $(2,3,0)$ and a distance of $4$ units.
	Right: For fixed position the two pedal points are illustrated, where the one with coordinate 
	$(0.11346545 , 0.47007115 , 0.87530491)$ is closest to the 
	given orientation. The corresponding spherical distance $s$ equals $20.82450533^{\circ}$. 
		The second pedal point is antipodal to the first one and the distance $s$ is the supplementary angle.}
	\label{fig:10}
\end{center}
\end{figure}

\subsection{\textbf{Linear in orientation variables}}
\label{sec:5:2}

The \emph{architecture matrix} of the linear pentapod used in the following examples is:
\begin{equation}
\mathbf{A}=\left( \begin {array}{ccccccc} 
1&1&0&0&1&0&0\\ 
3&-1/2&3/2&0&-3/2&9/2&0\\ 
5&-3&4&0&-15&20&0\\ 
6&-1&2&0&-6&12&0
\end {array} \right),
\label{Arch:num:2}
\end{equation}
where $\alpha=\beta=1$ in Eq.\ (\ref{equation:2}).  Moreover we consider the non-singular pose 
$\mathfrak{G}=(\frac{1}{3},\frac{2}{3},\frac{2}{3},1,2,3)$.

\subsubsection{Fixed orientation case}
\label{sec:5:2:1}
Once again we ask for the closest singular configuration $\mathfrak{O}$ having the same orientation $(g_1,g_2,g_3)$ as the given pose $\mathfrak{G}$. The distance to the singularity pose with respect to $(g_4,g_5,g_6)$ is computed according to the ordinary \emph{Euclidean} metric.
Under fixed orientation condition it is revealed that the \emph{singularity polynomial} is factored to:
\begin{equation}
p_z(B_{1} p_x + B_{2} p_y + B_{3} p_z + B_{4}) = 0,
\label{two-circles2}
\end{equation}
where again the design information is encoded in coefficients $B_i$. 
For each of the two planes in position space $\mathbb{R}^{3}$ we can compute the pedal point with respect 
to the given pose (cf.\ Fig.\ \ref{fig:10}-left). The closer pedal point implies 
$\mathfrak{O}=(\frac{1}{3}, \frac{2}{3},\frac{2}{3},2,3,0)$ illustrated in Fig.\ \ref{fig:orientation}-left.

\begin{figure}[t!] 
\begin{center}   
  \begin{overpic}[width=33mm]{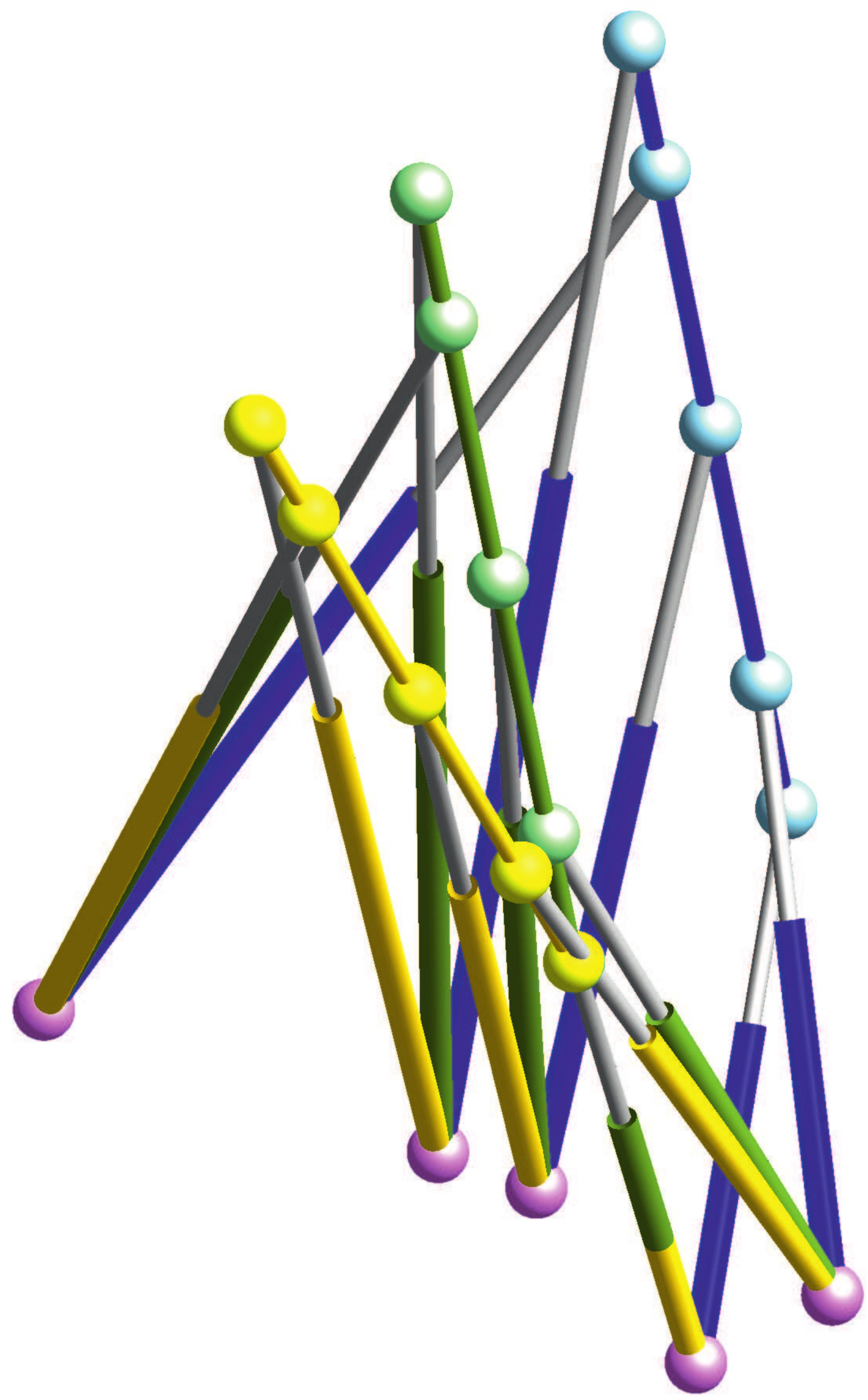}
	\begin{small}
	\put(10.5,70){$\mathfrak{P}$}
	\put(22.5,85){$\mathfrak{G}$}
	\put(37,95){$\mathfrak{O}$}
	\end{small}     
  \end{overpic} 
	\qquad\qquad
	 \begin{overpic}[width=33mm]{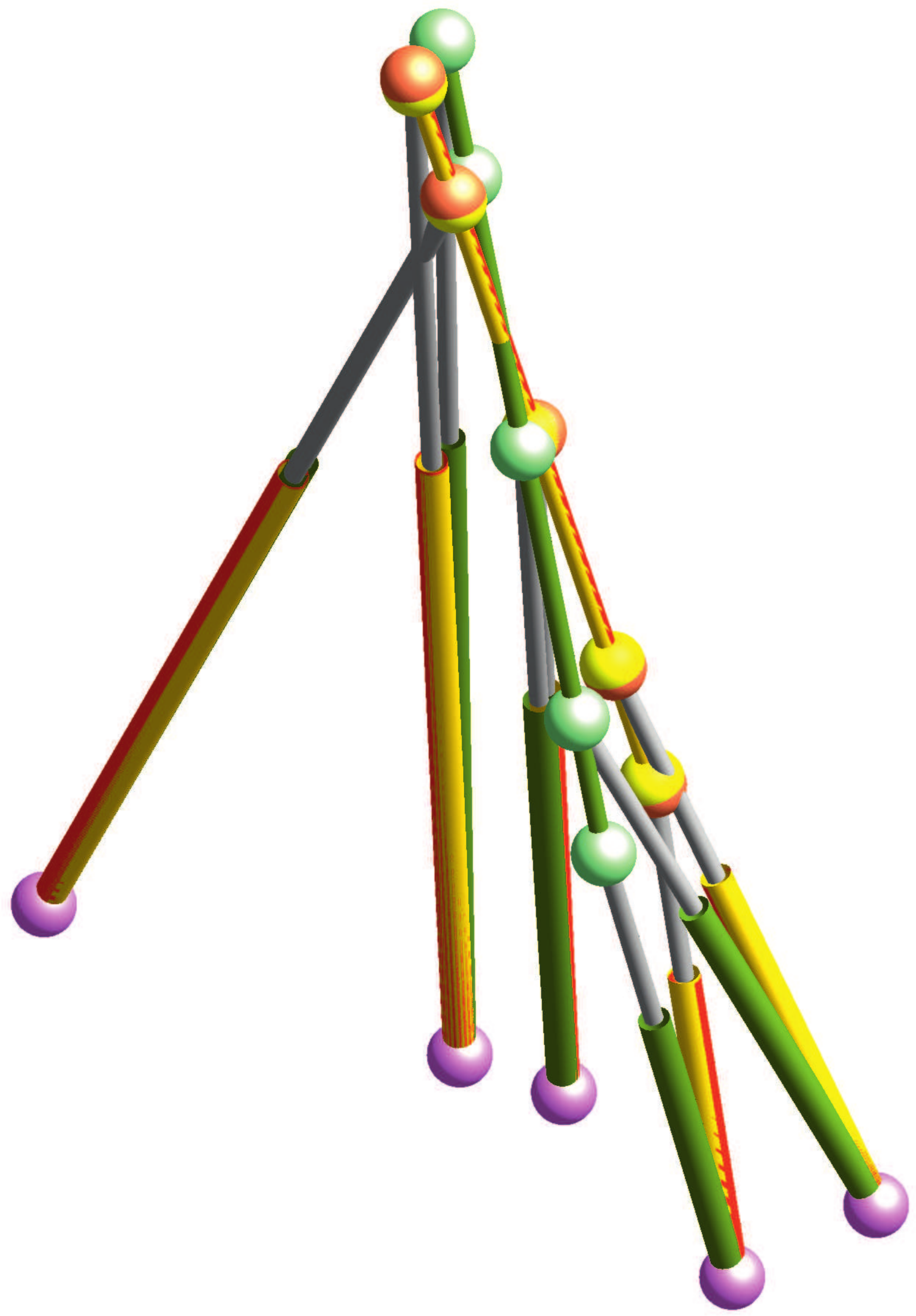}
	\begin{small}
	\put(23.5,88){$\mathfrak{N}$}
	\put(22,94){$\mathfrak{M}$}
	\put(37,95){$\mathfrak{G}$}
	\end{small}     
  \end{overpic} 
	\caption{
	Illustration of the pose $\mathfrak{G}$ (green) of the linear pentapod studied in Section \ref{sec:5:2}. 
	Left: The closest singular configurations in the position/orientation workspace 
	are given by the pose $\mathfrak{P}$ (yellow) and $\mathfrak{O}$ (blue), respectively. 
	Right: $\mathfrak{M}$ (red) is the closest singular pose under Euclidean motions of $\ell$ and
	$\mathfrak{N}$ (yellow) is the closest singularity under equiform motions  of $\ell$.}
	\label{fig:orientation}
\end{center}
\end{figure}

\subsubsection{Fixed position case}
\label{sec:5:2:2}
Now we ask again for the closest singular configuration $\mathfrak{P}$, 
which has the same position $(g_4,g_5,g_6)$ as the given pose $\mathfrak{G}$.
As the \emph{singularity polynomial} is linear in orientation variables and does not possess an absolute term,  
the \emph{singularity loci} is a \emph{great circle} for the fixed position case. If the given orientation differs from the 
pole of the great circle, then there exist two pedal points (otherwise infinitely many). 

The results for the example at hand are illustrated in Fig.\ \ref{fig:10}-right and 
the pose $\mathfrak{P}=(0.11346545 , 0.47007115 , 0.87530491,1,2,3)$ is displayed in Fig.\ \ref{fig:orientation}-left.

\subsubsection{General Case}
\label{sec:5:2:3}

Similar computations as in Section \ref{sec:5:1:3} show that there are again $10$ solutions out of which $6$ are real. 
They are given in the following table:

\begin{table}[h]
\begin{footnotesize}
\centering
\begin{tabular}{|c|c|c|c|c|c|c|}
\hline 
\phantom{1} & $u$ & $v$ & $w$ & $\lambda_1$ & $\lambda_2$ & $d$  \\ \hline \hline 
1 &  0.24002202   &  0.57831003     &  0.77970951     &  -0.07616071  	&  0.00198708     & 0.41484860   \\ \hline
2 &  0.16067752   &  0.32134537     &  0.93924532     &   5.58789193    & -0.03317073      & 2.44661840    \\ \hline
3 & -0.20843306   & -0.55064498     & -0.80863487     & -10.27182281    &  0.00088059      & 4.53615852    \\ \hline
4 & -0.35275218   &  0.88481355     & -0.34421986     &  -3.79940039    &  0.38394736      & 6.70384275    \\ \hline
5 & -0.02624291   & -0.92437183     &  0.38072309     &  -7.13002767    &  0.19314992   	  & 7.16835476     \\ \hline
6 & -0.06268654   & -0.12537309     & -0.99012725     & -32.85080126    &  0.07233642   	& 9.04867032    \\ \hline
\end{tabular}
\caption{The 6 real solutions in ascending order with respect to the distance $d$ from $\mathfrak{G}$. 
}
\label{table:values}
\end{footnotesize}
\end{table}
The first row in Table \ref{table:values} corresponds to the global minimizer $\mathfrak{M}$ 
illustrated in Fig.\ \ref{fig:orientation}-right, which has position variables
$p_x=1.35978906$, 
$p_y=2.34492506$ and 
$p_z=2.57706069$.

\subsubsection{General case without normalizing condition}
\label{sec:5:2:4}

Similar computations as in Section \ref{sec:5:1:4} show  again that the number of solution reduces to three. 
For the example at hand all three are real and read as follows: 

\begin{table}[h]
\begin{footnotesize}
\centering
\begin{tabular}{|c|c|c|c|c|c|c|}
\hline 
\phantom{1} & $u$ & $v$ & $w$ & $\lambda_2$ & $d$ & $\mu$  \\ \hline \hline 
1 & 0.23632218 & 0.56965551 &  0.76841946   & 0.00196374   & 0.41349741  & 0.98530404  \\ \hline
2 & 0.33333333 & 0.66666666 &  1.30046948   &-0.02111913   & 1.81542685  & 1.49892509   \\ \hline
3 &-0.06965551 & 0.26367781 & -0.10175277   & 3.26647730   & 6.49924087  & 0.29108677  \\ \hline
\end{tabular}
\caption{The 3 real solutions in ascending order with respect to the distance $d$ from $\mathfrak{G}$. 
}
\label{table:5}
\end{footnotesize}
\end{table}
The first row in Table \ref{table:5} corresponds to the global minimizer $\mathfrak{N}$ 
illustrated in Fig.\ \ref{fig:orientation}-right, which has position variables
$p_x=1.36986410$, 
$p_y=2.36986410$ and 
$p_z=2.61205791$.

\begin{remark}
Through the numerical examples one observes that the same number of pedal points is obtained for the other two geometries listed in Theorem \ref{thm:orientation} for the computation of the closest singular pose under Euclidean motions of $\ell$ and 
equiform motions of $\ell$, respectively. 
\hfill $\diamond$
\end{remark}


\section{Conclusions}
\label{conclusion}

In this paper we computed linear pentapods with a simplified singularity variety. In detail we 
determined all non-architecturally singular designs where the singularity polynomial is 
\begin{itemize}
\item[$\bullet$] linear in position variables (cf.\ Section \ref{sec:2}), 
\item[$\bullet$] linear in orientation variables (cf.\ Section \ref{sec:3}),
\item[$\bullet$] quadratic in total (cf.\ Section \ref{sec:4}).
\end{itemize}     
We demonstrated in Section \ref{sec:5} that these designs imply a degree reduction of 
the polynomials associated with the problem of determining singularity-free zones. 
Especially the closest singular configurations under equiform motions (cf.\ Sections \ref{sec:5:1:4} and \ref{sec:5:2:4}) 
are of interest, as they can be computed in closed form. Therefore their deeper study is 
dedicated to future research.

Finally we conclude the paper by presenting three kinematic redundant designs of linear pentapods with a simple singularity variety. 
The designs proposed in Sections \ref{des1} and \ref{des2} have two dofs of kinematically redundancy and 
the design given in Section \ref{des3} has even three kinematic redundant dofs.

\subsection{Design 1}\label{des1}
This design, displayed in Fig.\ \ref{fig:12}, is based on the idea to change the 
coefficient $\beta$ of the affine coupling $\kappa$ given in Eq.\ (\ref{affine}) by a reconfiguration of the 
base.  This can be achieved by a suitable sliding of the base points. 
The fibers of the singular affine transformation $\kappa$ from the base plane to the platform 
correspond to parallel lines in the base plane. It is well known (cf.\ Section 4.3 of \cite{borras2011architectural}) that 
a reconfiguration of a base point along its corresponding fiber does not change the singularity variety. 
Therefore it suggests itself to mount the sliders orthogonal to the fiber-direction. 
This sliding gives the first degree of kinematical redundancy. 

\begin{remark}
The linear pentapod given in Fig.\ \ref{fig:12} has been designed in a symmetric way, such that 
the sliders of $M_i$ and $M_{i+1}$ (for $i=2,4$) have to move with the same velocity (but in opposite directions). 
Note that one can drive all sliders of $M_2,\ldots ,M_5$ with only one motor and a fixed gearing, as the ratio of 
the velocities of the sliders of $M_2$ and $M_4$ is constant. 

Moreover it can easily be checked, that the symmetric design proposed in Fig.\ \ref{fig:12}, can never be architecturally 
singular in practice. \hfill $\diamond$
\end{remark}

The second degree of kinematic redundancy is achieved by the sliding of the first base point 
in fiber-direction. 
As already mentioned this will not affect the singularity surface, but it can be used 
to increase the performance of the manipulator during an end-effector motion.

\begin{figure}[t!] 
\begin{center}   
  \begin{overpic}[height=65mm]{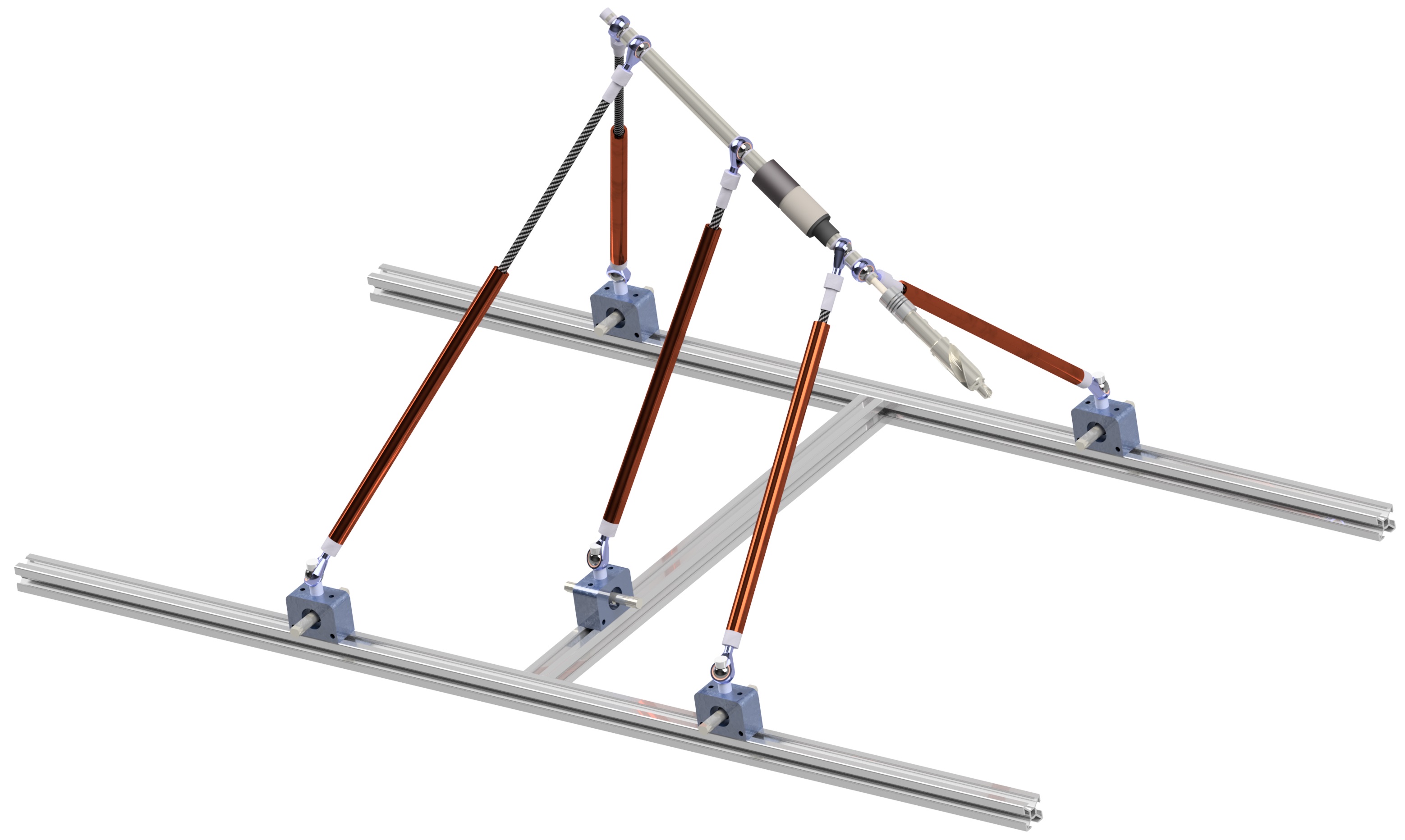}
	\begin{small}
   \put(46,58.5){\color{black}\line(1,0){41}}
   \put(48,57){\color{black}\line(1,0){39}}
   \put(54,50){\color{black}\line(1,0){33}}
   \put(61,42.5){\color{black}\line(1,0){26}}
   \put(63,41){\color{black}\line(1,0){24}}   
   \put(88,58.5){$\mathrm{m_{4}}$}
   \put(88,55.75){$\mathrm{m_{2}}$}
   \put(88,49.75){$\mathrm{m_{1}}$}
   \put(88,42.25){$\mathrm{m_{3}}$}
   \put(88,39.5){$\mathrm{m_{5}}$}
   \put(37,20){$\mathrm{M_{1}}$}
   \put(17,20){$\mathrm{M_{2}}$}
   \put(55,12){$\mathrm{M_{3}}$}
   \put(38,40){$\mathrm{M_{4}}$}
   \put(81,32){$\mathrm{M_{5}}$}
	\end{small}     
  \end{overpic}
	\caption{Illustration of the kinematic redundant linear pentapod of Section \ref{des1} with a linear singularity variety in position variables.}
	\label{fig:12}
\end{center}
\end{figure} 

\subsection{Design 2}\label{des2}
This design, based on item 1 of Theorem \ref{thm:orientation} and displayed in Fig.\ \ref{fig:13}, is also a 2-dof kinematically redundant pentapod with planar base, 
which has the property that its singular polynomial is linear in orientation for all possible configurations. 
The base points $M_2,\ldots ,M_5$ are collinearly mounted on a rod $g$, which slides (active joint) along a circular rail on the ground and is 
connected over a U-joint (passive joint) with the ceiling. Therefore the rod $g$ generates during the motion a right circular cone.   
 
For a better understanding of the redundant dofs, we have a look at the singular-invariant replacement of legs keeping the given platform anchor points:
\begin{enumerate}[$\star$]
\item
As this linear pentapod contains a line-line component (cf.\cite{borras2010singularity1}), 
one can relocate the base anchor points of the legs $m_2M_2, \ldots , m_5M_5$ arbitrarily on $g$ 
(assumed that the resulting manipulator is not architecturally singular). 
\begin{remark}\label{rem:redundant}
One can additionally allow a sliding (by active joints) of the base points along the rod $g$ (yielding further degrees of 
kinematical redundancy)  
but this will not change the singularity variety. These reconfigurations can only be used to 
improve the performance of the manipulator. \hfill $\diamond$
\end{remark}
\item
The base point of the first leg can be replaced by any point of the
plane spanned by $M_1$ and $g$ (assumed that the resulting manipulator is not architecturally singular). 
Therefore a sliding of $M_1$ along the circular rail changes the singularity variety. 
\end{enumerate}

\begin{figure}[t!] 
\begin{center}   
  \begin{overpic}[height=65mm]{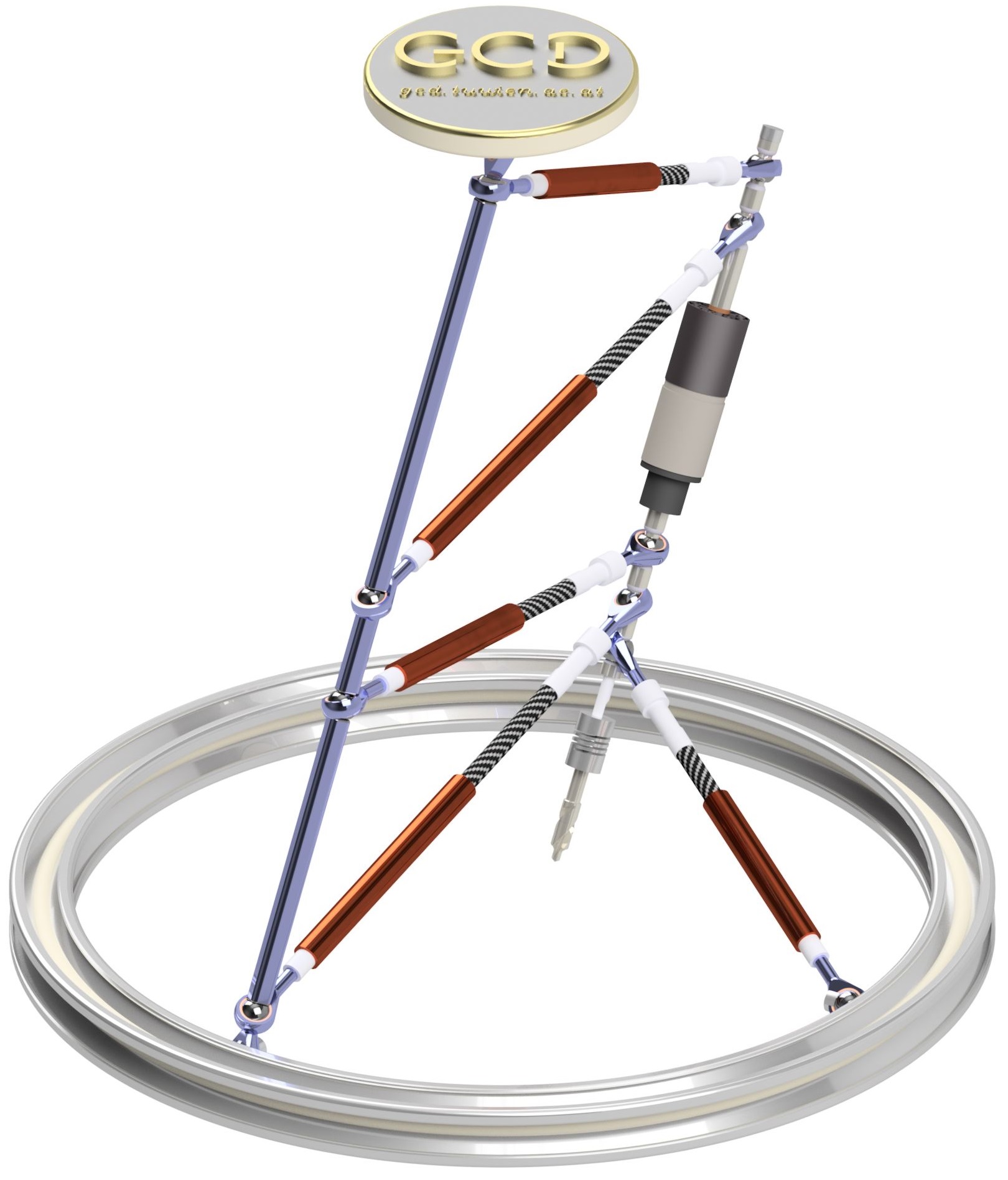}
	\begin{small}
   \put(-9,70){$g$}
   \put(89,43){$\mathrm{m_{1}}$}
   \put(89,48){$\mathrm{m_{2}}$}
   \put(89,53){$\mathrm{m_{3}}$}
   \put(89,80){$\mathrm{m_{4}}$}
   \put(89,85){$\mathrm{m_{5}}$}
   \put(93,14.5){$\mathrm{M_{1}}$}
   \put(-9,14){$\mathrm{M_{2}}$}
   \put(-9,40.5){$\mathrm{M_{3}}$}
   \put(-9,49){$\mathrm{M_{4}}$}
   \put(-9,84){$\mathrm{M_{5}}$}
   \put(56,45){\color{black}\line(1,0){30}}
   \put(58,49){\color{black}\line(1,0){28}}
   \put(59,53){\color{black}\line(1,0){27}}
   \put(67,81){\color{black}\line(1,0){19}}
   \put(69,85){\color{black}\line(1,0){17}}
   \put(75,15){\color{black}\line(1,0){17}}
   \put(-1,14.25){\color{black}\line(1,0){18}}
   \put(2,40.5){\color{black}\line(1,0){23}}
   \put(2,49){\color{black}\line(1,0){25}}
   \put(2,84){\color{black}\line(1,0){35}}
   \put(2,70){\color{black}\line(1,0){31}}	
	\end{small}     
  \end{overpic}
	\caption{Illustration of the kinematic redundant linear pentapod of Section \ref{des2} with a linear singularity variety in orientation variables. 
	The suggested design, where the upper part is mounted on the ceiling, can be of interest for e.g.\ the milling of an object without any need of its
	repositioning, as the manipulator can go around the object by $360$ degrees. }
	\label{fig:13}
\end{center}
\end{figure} 

\subsection{Design 3}\label{des3}
This design, based on item 2 of Theorem \ref{thm:orientation} and displayed in Fig.\ \ref{fig:14}, is a 3-dof kinematically redundant pentapod with planar base, 
which has the property that its singular polynomial is linear in orientation for all possible configurations. 
The anchor points $M_1$ and $M_2$ can slide along a circular rail (two active joints). The third degree of kinematic redundancy is obtained by the rotation of the 
rod $g$ on which the collinear points $M_3,M_4,M_5$ are mounted. 

For a better understanding of the redundant dofs, we study again the singular-invariant replacements of legs keeping the given platform anchor points:
\begin{enumerate}[$\star$]
\item
One can relocate the base anchor points of the legs $m_3M_3, m_4M_4 , m_5M_5$ arbitrarily on $g$ 
(assumed that the resulting manipulator is not architecturally singular). Therefore 
also Remark \ref{rem:redundant} holds in this context. 
\item
The base points of the first and second leg can be replaced by any two points of the
carrier plane of the circular rail (assumed that the resulting manipulator is not architecturally singular). 
As a consequence the sliding of $M_1$ and $M_2$ along the circular rail does not change the singularity variety. 
Therefore these two redundant dofs can only be used to improve the performance of the manipulator. 
\end{enumerate}

\begin{remark}
Finally it should be noted that a design, based on item 3 of Theorem \ref{thm:orientation}, is not suited for 
 technical realization due to the triple joint at the platform. \hfill $\diamond$
\end{remark}

\begin{figure}[t!] 
\begin{center}   
	\begin{overpic}[height=65mm]{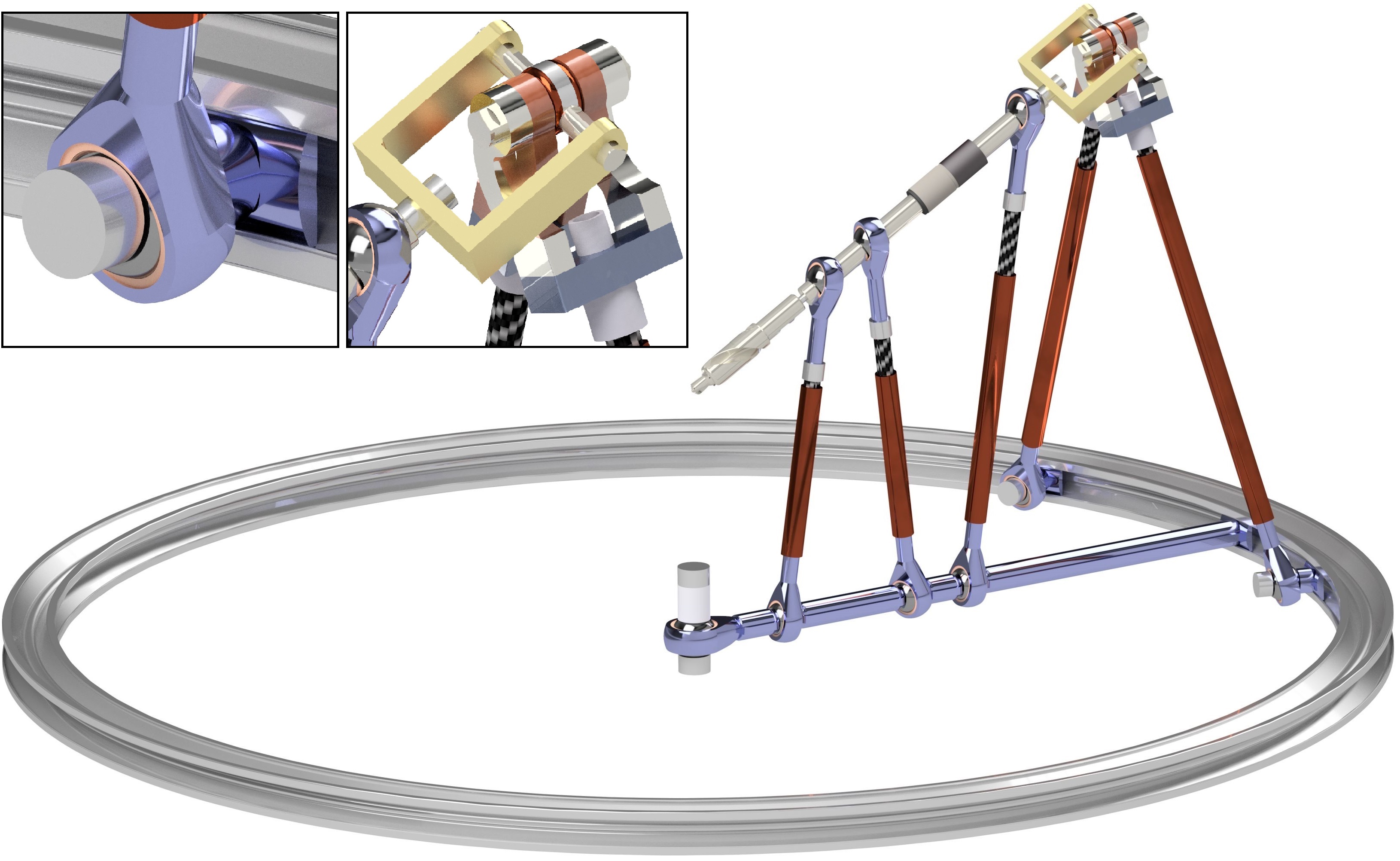}
	\begin{small}
   \put(58.7,44){\color{black}\line(0,1){4.5}}
   \put(62,47){\color{black}\line(0,1){5.5}}
   \put(62,52.5){\color{black}\line(-1,0){0.5}}
   \put(70,55){\color{black}\line(-1,0){8}}
   \put(73,60){\color{black}\line(-1,0){8}}
   \put(81.5,20.5){\color{black}\line(0,-1){5.5}}
   \put(57,49.5){$\mathrm{m_{5}}$}
   \put(57,52){$\mathrm{m_{4}}$}
   \put(57,54.5){$\mathrm{m_{3}}$}
   \put(53,59.25){$\mathrm{m_{1}=m_{2}}$}
   \put(54.5,12){$\mathrm{M_{5}}$}
   \put(63,12){$\mathrm{M_{4}}$}
   \put(68,12){$\mathrm{M_{3}}$}
   \put(73,12){$\mathrm{M_{2}}$}
   \put(87,15){$\mathrm{M_{1}}$}
   \put(81,12.5){$g$}
   \put(74,15.5){\color{black}\line(0,1){9}}
   \end{small}     
   \end{overpic}
	\caption{Illustration of the kinematic redundant linear pentapod of Section \ref{des3} with a linear singularity variety in orientation variables. 
	This design also allows a milling by $360$ degrees around the object. Moreover, detailed views of the double joint $m_1=m_2$ at the platform and the circular 
	slider of $M_2$ are provided. }
	\label{fig:14}
\end{center}
\end{figure} 

\section*{Acknowledgement}
The research is supported by Grant No.~P~24927-N25 of the Austrian Science Fund FWF. 
Moreover the first author is funded by the Doctoral College "Computational Design" of TU Vienna.


\bibliography{mybibfile}

\end{document}